\def\eqref#1{equation~\ref{#1}}
\def\1{\bm{1}}
\def\vh{{\bm{h}}}
\def\vs{{\bm{s}}}
\def\vx{{\bm{x}}}
\def\vz{{\bm{z}}}
\def\mA{{\bm{A}}}
\def\mC{{\bm{C}}}
\def\mD{{\bm{D}}}
\def\mI{{\bm{I}}}
\def\mL{{\bm{L}}}
\def\mM{{\bm{M}}}
\def\mO{{\bm{O}}}
\def\mS{{\bm{S}}}
\def\mW{{\bm{W}}}
\DeclareMathAlphabet{\mathsfit}{\encodingdefault}{\sfdefault}{m}{sl}
\SetMathAlphabet{\mathsfit}{bold}{\encodingdefault}{\sfdefault}{bx}{n}
\def\sR{{\mathbb{R}}}
\definecolor{orange}{rgb}{0.93725,0.52549,0.2117647}
\definecolor{blue}{rgb}{0.23137,0.4588,0.68627}
\newcommand{\thickhline}{%
    \noalign {\ifnum 0=`}\fi \hrule height 1pt
    \futurelet \reserved@a \@xhline
}
\newcolumntype{"}{@{\hskip\tabcolsep\vrule width 1pt\hskip\tabcolsep}}
\newcommand{\upquote}{\text{\textquotesingle}}
\newtheorem{theorem}{Theorem}
\newtheorem{proposition}{Proposition}
\newtheorem{remark_2}{Remark}
\newcommand*{\tran}{^{\mkern-1.5mu\mathsf{T}}}
\renewcommand{\eqref}[1]{Eq.~\ref{#1}}
\newcommand{\ourmethodname}{dendPLRNN} 
\newcommand{\klmetric}{$D_{\textrm{stsp}}$} 
\newcommand{\beginsupplement}{%
        \setcounter{table}{0}
        \renewcommand{\thetable}{S\arabic{table}}%
        \setcounter{figure}{0}
        \renewcommand{\thefigure}{S\arabic{figure}}%
     }
\begin{document}

\twocolumn[
\icmltitle{Tractable Dendritic RNNs for Reconstructing Nonlinear Dynamical Systems}


\icmlsetsymbol{equal}{*}

\begin{icmlauthorlist}
\icmlauthor{Manuel Brenner}{equal,zi,hd}
\icmlauthor{Florian Hess}{equal,zi,hd}
\icmlauthor{Jonas M. Mikhaeil}{zi,hd}
\icmlauthor{Leonard Bereska}{zi,amst}
\icmlauthor{Zahra Monfared}{zi}
\icmlauthor{Po-Chen Kuo}{tai}
\icmlauthor{Daniel Durstewitz}{zi,hd}
\end{icmlauthorlist}

\icmlaffiliation{zi}{Dept. of Theoretical Neuroscience, Central Institute of Mental Health, Mannheim, Germany}
\icmlaffiliation{hd}{Faculty of Physics and Astronomy, Heidelberg University, Germany}
\icmlaffiliation{amst}{University of Amsterdam, Netherlands}
\icmlaffiliation{tai}{National Taiwan University, Taiwan}

\icmlcorrespondingauthor{Manuel Brenner}{manuel.brenner@zi-mannheim.de}
\icmlcorrespondingauthor{Daniel Durstewitz}{daniel.durstewitz@zi-mannheim.de}

\icmlkeywords{recurrent neural networks, dynamical systems, chaos, attractors, variational inference, dendritic computation, state space models}





\vskip 0.3in
]
\printAffiliationsAndNotice{\icmlEqualContribution}






\begin{abstract}
In many scientific disciplines, we are interested in inferring the nonlinear dynamical system underlying a set of observed time series, a challenging task in the face of chaotic behavior and noise. Previous deep learning approaches toward this goal often suffered from a lack of interpretability and tractability. In particular, the high-dimensional latent spaces often required for a faithful embedding, even when the underlying dynamics lives on a lower-dimensional manifold, can hamper theoretical analysis. Motivated by the emerging principles of dendritic computation, we augment a dynamically interpretable and mathematically tractable piecewise-linear (PL) recurrent neural network (RNN) by a linear spline basis expansion. We show that this approach retains all the theoretically appealing properties of the simple PLRNN, yet boosts its capacity for approximating arbitrary nonlinear dynamical systems in comparatively low dimensions. We employ 
two frameworks for training the system, one combining back-propagation-through-time (BPTT) with teacher forcing, and another based on fast and scalable variational inference. We show that the dendritically expanded PLRNN achieves better reconstructions with fewer parameters and dimensions on various dynamical systems benchmarks and compares favorably to other methods, while retaining a tractable and interpretable structure.
\end{abstract}

\section{Introduction}\label{sec:introduction}
For many complex systems in physics, biology, or the social sciences, we do not know or have only rudimentary knowledge about the dynamical system (DS) that may underlie those quantities that we can empirically observe or measure. Data-driven approaches aimed at automatically inferring the generating DS from time-series observations could therefore strongly support the scientific process, and various such methods have been proposed in recent years \citep{raissi_multistep_2018, zhu_neural_2021, yin_augmenting_2021, norcliffe_neural_2021, mohajerin_multi-step_2018, karl_deep_2017, chen_neural_2018, strauss_augmenting_2020}. However, due to the often high-dimensional, complex, chaotic, and inherently noisy nature of real-world DS, like the brain, weather-, or ecosystems, this remains a formidable challenge. Moreover, although the true DS may evolve on a lower-dimensional manifold in its state space, the system used for approximation usually needs to be of higher dimensionality to achieve a proper embedding \citep{takens_detecting_1981, sauer_embedology_1991, kantz_nonlinear_2004}. This is especially true when the approximating system is of a different functional form than the one that would most naturally describe the data generation process (but is unknown), for instance, when we attempt to approximate a system of exponential or trigonometric functions by polynomials.

In this work we sought to improve the capacity and expressiveness of a specific class of recurrent neural networks (RNNs), achieving agreeable solutions with fewer dimensions and parameters while retaining a set of desirable theoretical properties. Specifically, we build on piecewise-linear RNNs (PLRNNs) based on ReLU activation functions, for which fixed points, periodic orbits, and other dynamical properties can be derived analytically \citep{schmidt_identifying_2021, koppe_identifying_2019}, and for which dynamically equivalent continuous-time (ordinary differential equation, ODE) systems can be constructed \citep{monfared_transformation_2020}. Inspired by principles of dendritic computation in biological neurons (Fig. \ref{fig:base_illustration}), each PLRNN unit was endowed with a set of nonlinear pre-processing subunits (\say{dendritic branches}), such that it effectively takes on the role of an equivalent much larger network. Mathematically, this comes down, in our case, to enhancing each latent unit with a linear spline basis expansion as popular in statistics \citep{hastie_elements_2009}. Through this trick, we achieve a powerful RNN which provides reconstructions of underlying nonlinear DS in lower-dimensional latent spaces than were needed by conventional PLRNNs or other approaches. Model training may be performed by classical Back-Propagation-Through-Time (BPTT; \citet{Rumelhart_1986_learning}) augmented by teacher forcing (TF; \citet{williams_learning_1989,pearlmutter_1990}), or through the scalable framework of sequential variational auto-encoders (SVAE) \citep{archer_black_2015, girin_dynamical_2020, krishnan_structured_2017}. 
Importantly, we prove that these modifications preserve the mathematical and dynamical accessibility of the resulting system, e.g., such that fixed points, cycles, and their stability, can still be computed analytically.
Besides its effectiveness in capturing complex dynamical systems in fewer dimensions within a tractable framework, our approach highlights more generally how principles of dendritic signal processing may be harvested in the design of RNNs. Strongly nonlinear local computations are known for decades to occur within dendritic trees of biological neurons \citep{mel_information_1994, poirazi_pyramidal_2003}, but have hardly been exploited so far for machine learning models.
\section{Related Work}\label{sec:related_work}
One class of DS reconstruction models attempts to discover governing equations from the vector field estimated from data through differencing the time series. Sparse Identification of Nonlinear Dynamics (SINDy), for instance, does so by sparsely regressing on a rich library of basis functions using the least absolute shrinkage and selection operator (LASSO) \citep{brunton_discovering_2016, rudy_data-driven_2017, de_silva_pysindy_2020}. Other methods approximate the vector field using additive ODE models \citep{chen_network_2017}, sparse autoencoders \citep{heim_rodent_2019}, shallow `multi-layer' perceptrons 
reformulated as RNNs \citep{trischler_synthesis_2016}, 
or deep neural networks \citep{chen_neural_2018}. Some works aimed at directly learning the system's underlying Hamiltonian \citep{chen_symplectic_2020, greydanus_hamiltonian_2019}. 
Generally, numerical derivatives obtained from time series tend to be more noise-prone than the time series observations themselves \citep{baydin_automatic_2018, chen_network_2017, raissi_deep_2018}.
This can be a problem particularly if only comparatively short trajectories were empirically observed or when the underlying systems are very high-dimensional, as in these cases the system's vector field may be (severely) under-sampled.
Methods directly based on numerical derivatives also need to be augmented by other techniques, like delay embeddings \citep{kantz_nonlinear_2004, bakarji2022discovering}, if not all the system's dimensions were observed.

Various RNN architectures such as Long-Short-Term-Memory networks (LSTMs) \citep{zheng_state_2017}, 
Reservoir Computing (RC) \citep{pathak_model-free_2018}, 
or PLRNNs \citep{koppe_identifying_2019, schmidt_identifying_2021} have been employed to infer DS directly from the observed time series without going through numerical derivatives. More generally, a wide array of RNN architectures with specific functional or parametric forms, e.g. based on coupled oscillators \citep{rusch_coupled_2021}, has been proposed in recent years \citep{kerg_non-normal_2019, chang_antisymmetricrnn_2019, erichson_lipschitz_2021, kag_rnns_2020, rusch2022long}, mainly in order to tackle the exploding/ vanishing gradient problem in RNN training \citep{bengio1994learning, hochreiter1997long}. 
In the present context it is important to note, however, that most of these are not suitable for DS reconstruction since their functional form or specific parameterization strictly delimits the range of DS phenomena they can learn or generate. For instance, chaotic dynamics are not possible in any of these latter systems by mathematical design \citep{monfared_2022},
with the only exception of 
Long-Expressive-Memory (LEM) networks \citep{rusch2022long}. 
More recently, transformers \citep{shalova_deep_2020, shalova_tensorized_2020} were used as black box approaches for DS prediction. 

Except for PLRNNs, however, all of the approaches reviewed above, even those specifically designed for DS reconstruction and prediction, rest on relatively complex model formulations that are not easy to tackle and analyze from a DS perspective \citep{fraccaro_sequential_2016}. The ability to gain deeper insights into the specific DS properties and mechanisms of the recovered system is, however, often crucial for its applicability to science and engineering problems. Transformers, unlike RNNs, do not even constitute DS themselves (as they explicitly forgo any temporal recursions), and therefore are not directly amenable to DS theory tools. Moreover, most of these models, RC in particular, need very high-dimensional latent spaces, which further adds to their black-box nature.

Better interpretability and tractability is achieved by using PLRNNs \citep{koppe_identifying_2019, schmidt_identifying_2021} or by (locally) linearizing nonlinear systems through ideas from Koopman operator theory \citep{azencot_forecasting_2020, brunton_chaos_2017, yeung_learning_2017}.  
In such systems, certain DS properties can be analytically accessed \citep{schmidt_identifying_2021, monfared_existence_2020}. On the downside, usually one needs to move to very high dimensions to represent the DS in question properly. Here we aim to overcome this limitation by augmenting PLRNNs with linear basis expansions without altering their analytical accessibility. 


Finally, probabilistic (generative) latent variable models such as state space models have been applied to the problem of posterior inference of latent state paths $\bm{z}_t\sim p(\bm{z}_t|\bm{x}_{1:T})$ of DS given time series observations $\{\bm{x}_{1:T}\}$ \citep{
pandarinath_inferring_2018, 
ghahramani_learning_1998, durstewitz_state_2017, krishnan_structured_2017}. The advantage here is that they also account for uncertainty in the model formulation or latent process itself and yield the full distribution over latent space variables \citep{karl_deep_2017}. For DS reconstruction, however, we need to move beyond posterior inference: We require that samples drawn from the model's prior distribution $p(\bm{z})$ after training exhibit the same (invariant) temporal and geometric structure as those produced by the unknown DS, a property that is not automatically guaranteed in this class of algorithms. 

Here we show that PLRNNs augmented with a linear spline expansion can be most efficiently trained by BPTT using a specific form of TF (Appx. \ref{supp-TF-BPTT}). We also embed expanded PLRNNs into a fully probabilistic, variational approach that scales well with system size by employing stochastic gradient variational Bayes (SGVB; \citep{kingma_auto-encoding_2014, rezende_stochastic_2014}), thereby combining the advantages of the two classes of models reviewed above, but with mild detriments in DS reconstruction performance compared to BPTT.
\section{Model Formulation and Theoretical Considerations}\label{sec:method}

\subsection{Piecewise Linear Recurrent Neural Network (PLRNN)}

Our approach builds on PLRNNs~\citep{durstewitz_state_2017, koppe_identifying_2019} because of their mathematical tractability (see Sec. \ref{sec:method:theory}). PLRNNs are defined by the $M$-dimensional latent process equation
\begin{align}\label{eq:plrnn_lat}
	\bm{z}_t = \bm{A} \bm{z}_{t-1} + \bm{W}  \phi(\bm{z}_{t-1}) + \bm{h} + \bm{C s}_t, 
	\end{align}
which describes the temporal evolution of $M$-dimensional latent state vector $\bm{z}_t=(z_{1t} \dots z_{Mt})^T$. The self-connections of the units are represented by diagonal matrix $\bm{A} \in \mathbb{R}^{M \times M}$, whereas the connections between units are collected in off-diagonal matrix  $\bm{W} \in \mathbb{R}^{M \times M}$, with the nonlinear activation function $\phi$ given by the rectified linear unit (ReLU) applied element-wise:
\begin{equation}\label{eq:relu}
	\phi(\bm{z}_{t-1})=\max(0, \bm{z}_{t-1}).
\end{equation}
The diagonal terms in $\bm{A}$ can be interpreted as the system's ``passive" (in the absence of inputs) time constants such that different latent states may capture different time scales of the underlying DS (as illustrated in Fig. \ref{fig:bn_time_scales}; see also  \citet{schmidt_identifying_2021}). The PLRNN also has a bias term $\bm{h} \in \mathbb{R}^{M}$ and accommodates potential external inputs $\bm{s}_t\in \mathbb{R}^{K}$ weighted by $\bm{C} \in \mathbb{R}^{M \times K}$. In a fully probabilistic framework, furthermore a Gaussian noise term $\bm{\epsilon}_t\sim \mathcal{N}(\bm{0},\bm{\Sigma})$ with diagonal covariance $\bm{\Sigma}$ is added to Eq. \ref{eq:plrnn_lat}. 
The PLRNN can be interpreted as a discrete-time neural rate model \citep{durstewitz_state_2017}, where the entries of $\bm{A}$ stand for the individual neurons' time constants, $\bm{W}$ for the synaptic connection strengths between neurons, and $\phi(z)$ for a (ReLU-shaped) voltage-to-spike-rate transfer function.
The latent RNN Eq. \ref{eq:plrnn_lat} is linked to the $N$-dimensional observed time series $(\bm{x}_t)_{t=1\ldots T}$, $\bm{x}_t\in \mathbb{R}^{N}$, drawn from an underlying noisy DS, by an observation function (decoder model) which, in the simplest case, may take the linear Gaussian form
\begin{equation}\label{eq:plrnn_obs}
	\bm{x}_t = \bm{B} \bm{z}_t + \bm{\eta}_t,
\end{equation}
where $\bm{B} \in \mathbb{R}^{N \times M}$ represents a factor loading matrix and $\bm{\eta}_t \sim\mathcal{N}(\bm{0},\bm{\Gamma})$ is Gaussian observation noise with diagonal covariance $\bm{\Gamma} \in \mathbb{R}^{N \times N}$ (only explicitly estimated as a free parameter in the variational approach).

\subsection{Dendritic Computation and Spline Basis Expansion}
Dendrites have long been known to play an active and important part in neural computation \citep{mel_information_1994, mel_why_1999, koch_biophysics_2004
}. Active, fast voltage-gated ion channels endow dendrites with strongly nonlinear behavior, giving rise for instance to dendritic $\mathrm{Ca}^{2+}$ spikes that boost synaptic inputs \citep{schiller_nmda_2000, 
hausser_diversity_2000
}. It has been suggested previously that different dendritic branches may constitute rather independent computational sub-units whose outputs are combined at the soma, as in a 2-layer neural network \citep{poirazi_pyramidal_2003, mel_synaptic_1993, mel_information_1994
}, an idea that received strong empirical support especially in recent years \citep{poirazi_illuminating_2020
}. Here we mimic this functional setup by modeling dendritic processing through a linear combination of ReLU-type threshold-nonlinearities (Fig. \ref{fig:base_illustration}), replacing Eq. \ref{eq:relu} by
\begin {equation}\label{eq:basis_expansion}
	\phi(\bm{z}_{t-1})=\sum_{b=1}^B \alpha_b \max (0, \bm{z}_{t-1} - \bm{h}_b),
\end{equation}
with \say{dendritic input/output} slopes $\alpha_b \in \mathbb{R}$ and \say{activation} thresholds $\bm{h}_b \in \mathbb{R}^M$. As in real dendrites, where both ion channels and morphological structure are subject to learning \citep{poirazi_illuminating_2020, stemmler_how_1999}, we treat these as trainable parameters. 
To emphasize the connection to dendritic computation we call the system Eqs. \ref{eq:plrnn_lat}, \ref{eq:plrnn_obs}, \ref{eq:basis_expansion}, the \textit{\ourmethodname}. 

We note that Eq. \ref{eq:basis_expansion} inserted into model Eq. \ref{eq:plrnn_lat} takes the form of a linear spline basis expansion as popular in statistics and machine learning \citep{hastie_elements_2009} for approximating arbitrary functions \citep{wahba_spline_1990,storace_piecewise-linear_2004
} in regression settings and other model-based approaches. 
In our context of DS reconstruction, however, there are particular challenges associated with such an approach, as we would like to preserve certain mathematical properties of the expanded model for its DS tractability. This is indeed one major contribution of the present work and addressed in the next section.
\begin{figure}[!htp]
    \centering
    \includegraphics[width=0.99\linewidth]{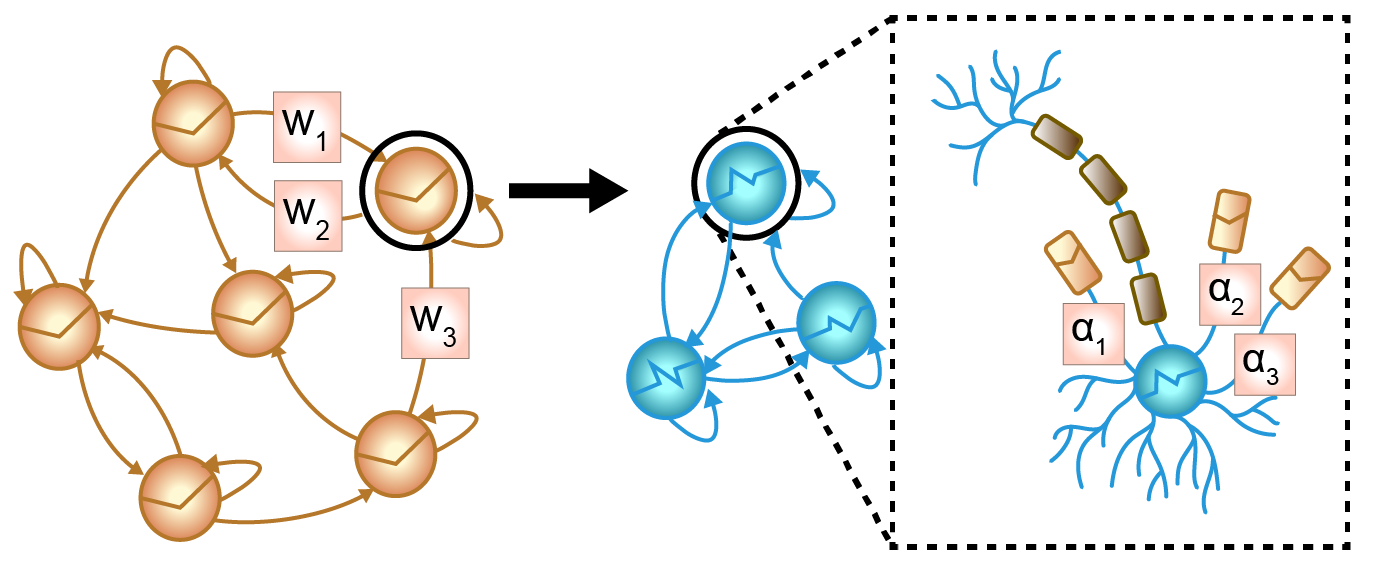}
    \caption{Inspired by principles of dendritic computation, our \ourmethodname\ extends each unit into a set of nonlinear branches connected to a soma, yielding single unit transfer functions with increased approximation capabilities. Figure created with the artistic support of Darshana Kalita.}
    %
    
    \label{fig:base_illustration}
\end{figure}

\subsection{Mathematical Tractability and Dynamical Systems Interpretation}\label{sec:method:theory}
Sharp threshold-nonlinearities (like a ReLU) are a reasonable choice from a neurobiological perspective, as dendrites naturally give rise to this threshold-type behavior \citep{mel_why_1999, koch_biophysics_2004}. Another important consideration in choosing this particular form, however, was that it preserves all the theoretically appealing properties of a PLRNN, as we will formally establish below: For PLRNNs fixed points and cycles can be explicitly computed \citep{schmidt_identifying_2021, koppe_identifying_2019}, and they can be translated into dynamically equivalent continuous-time systems \citep{monfared_transformation_2020}, properties which profoundly ease the analysis of trained systems from a DS perspective.\footnote{For instance, for a PLRNN trained on the Lorenz-63 system (see sect. \ref{sec:experiments}), we exactly located all fixed points in less than $1$ s and cycles up to $40^{th}$ order within $20$ s on a single $1.8$GHz CPU.} This is crucial for application in the sciences, where we are specifically interested in understanding the underlying system's dynamics. For PLRNNs, precise connections between the long-term behavior of the system and that of its gradients have also been established \citep{schmidt_identifying_2021, monfared_2022}. Finally, PLRNNs belong to the class of continuous piecewise-linear (PWL) maps, for which many important types of bifurcations have been well characterized \citep{
feigin_increasingly_1995, 
hogan_dynamics_2007, patra_multiple_2018} (see \citep{monfared_existence_2020} for an overview). Bifurcations are essential to understand how geometrical and topological properties of the system's state space depend on its parameters or could be controlled, and hence are also important to characterize or improve the training process itself \citep{doya_bifurcations_1992, pascanu_difficulty_2013, saxe_exact_2014}, or to understand properties of trained systems \citep{ maheswaranathan_reverse_2019, maheswaranathan_universality_2019}.

Our first proposition, therefore, assures that by the particular form of basis expansion introduced in \eqref{eq:basis_expansion}, the system will remain within the class of continuous PWL maps:

\begin{proposition}\label{pro-1}
    The model defined through \eqref{eq:plrnn_lat} and  \eqref{eq:basis_expansion} constitutes a continuous PWL map.
\end{proposition}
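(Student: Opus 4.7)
The plan is to exhibit an explicit polyhedral partition of the latent state space $\mathbb{R}^M$ on which the one-step map $F(\bm{z}) := \bm{A}\bm{z} + \bm{W}\phi(\bm{z}) + \bm{h} + \bm{C}\bm{s}_t$ is affine, and then separately argue global continuity. Since the inputs $\bm{s}_t$ only contribute an additive constant at each fixed $t$, they do not affect the PWL structure and can be absorbed into the bias for the purpose of the proof.

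First I would examine $\phi$ component-wise. By \eqref{eq:basis_expansion}, the $i$-th coordinate of $\phi(\bm{z})$ equals $\sum_{b=1}^B \alpha_b \max(0, z_i - h_{b,i})$ and therefore depends only on the scalar $z_i$. Each summand is affine on each of the two half-lines determined by the breakpoint $z_i = h_{b,i}$, so the $i$-th coordinate of $\phi$ is a one-dimensional continuous PWL function with at most $B$ breakpoints. Hence $\phi$ is continuous on $\mathbb{R}^M$, and a finite sum/composition with the affine maps $\bm{A}\bm{z} + \bm{W}(\cdot) + \bm{h} + \bm{C}\bm{s}_t$ preserves continuity, yielding continuity of $F$.

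Next I would build the partition. Consider the finite arrangement of axis-aligned hyperplanes
\begin{equation*}
\mathcal{H} = \{\,\bm{z} \in \mathbb{R}^M : z_i = h_{b,i}\,\},\qquad i=1,\dots,M,\ b=1,\dots,B.
\end{equation*}
This arrangement partitions $\mathbb{R}^M$ into at most $(B+1)^M$ closed polyhedral regions $\mathcal{R}_{\bm{\sigma}}$, indexed by a sign/ordering pattern $\bm{\sigma}$ specifying, for each coordinate $i$, in which of the $B+1$ intervals determined by $\{h_{1,i},\dots,h_{B,i}\}$ the value $z_i$ lies. Inside a fixed region $\mathcal{R}_{\bm{\sigma}}$, each $\max(0, z_i - h_{b,i})$ equals either $0$ or $z_i - h_{b,i}$ (with the same choice throughout the region), so the $i$-th coordinate of $\phi(\bm{z})$ reduces to an affine function $\gamma_i(\bm{\sigma}) z_i + \delta_i(\bm{\sigma})$ with constants determined by $\bm{\sigma}$, the $\alpha_b$, and the $h_{b,i}$. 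Hence $\phi$ is affine on $\mathcal{R}_{\bm{\sigma}}$, and so is $F$.

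The only mild subtlety — and the one point I would spell out carefully — is to confirm that the affine pieces agree on the shared boundaries of adjacent regions, so that the piecewise definition glues into a genuinely continuous global map. This is immediate from the component-wise observation above: the two affine expressions for $\max(0, z_i - h_{b,i})$ on either side of $z_i = h_{b,i}$ both evaluate to $0$ on the hyperplane itself, so every component of $\phi$, and therefore $F$ itself, is continuous across each hyperplane in $\mathcal{H}$. Combining the finite polyhedral partition, affineness on each cell, and continuity across boundaries establishes that $F$ is a continuous PWL map in the sense used by the subsequent results, completing the proof.
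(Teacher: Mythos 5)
Your proof is correct and follows essentially the same route as the paper's: both exploit the fact that each coordinate of $\phi$ is a one-dimensional continuous piecewise-linear function of a single latent variable, partition $\mathbb{R}^M$ into the $(B+1)^M$ axis-aligned cells determined by the thresholds, observe affineness on each cell, and verify continuity by checking that the adjacent affine pieces of $\max(0, z_i - h_{b,i})$ agree (both vanish) on the separating hyperplane $z_i = h_{b,i}$. The only cosmetic difference is that the paper phrases the region-wise affine forms via the binary indicator matrices $\mD^{(b)}_{\Omega(t-1)}$ and checks one-sided limits at each breakpoint explicitly, whereas you argue continuity of $\phi$ directly and then note that composition with affine maps preserves it; the content is identical.
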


The proof essentially straightforwardly follows from the model's definition as a linear spline basis expansion in each unit, but is formally provided in Appx. \ref{p-pro-1}.

While Proposition \ref{pro-1} is all we need to ensure we can harvest all previously established results on PLRNNs in particular, and on continuous PWL maps more generally, it is revealing to note that any \ourmethodname\ (Eqs. \ref{eq:plrnn_lat}, \ref{eq:basis_expansion}) can be rewritten as a conventional PLRNN, as stated in the following theorem:

\begin{theorem}\label{pro-2}
    Any $M$-dimensional \ourmethodname\ as defined in Eqs. \ref{eq:plrnn_lat}, \ref{eq:basis_expansion}, can always be rewritten as a $M \times B$-dimensional \say{conventional} PLRNN of the form
    \begin{align}\label{eq-MxK-st}
     \hat{\vz}_t \, = \, & \tilde{\mA} \hat{\vz}_{t-1} + \tilde{\mW}  \, \max (0, \hat{\vz}_{t-1})  \,
   +\hat{\vh}_0
        %
        %
   \, + \, \tilde{\mC \vs_t} \, + \, \tilde{\bm{\epsilon}}_t.
   \end{align}
\end{theorem}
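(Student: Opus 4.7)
The plan is to embed the $M$-dimensional dendPLRNN state into an $MB$-dimensional space by stacking $B$ threshold-shifted copies of $\bm{z}_t$, chosen so that an ordinary ReLU applied to the embedded state reproduces every branch nonlinearity $\max(0, \bm{z}_{t-1}-\bm{h}_b)$ in \eqref{eq:basis_expansion}. Concretely, I would define for each $b\in\{1,\ldots,B\}$ the block $\bm{y}_t^{(b)} := \bm{z}_t - \bm{h}_b \in \mathbb{R}^M$, and let
\[
\hat{\bm{z}}_t \,=\, \bigl(\bm{y}_t^{(1)\top},\, \bm{y}_t^{(2)\top},\, \ldots,\, \bm{y}_t^{(B)\top}\bigr)^{\!\top} \in \mathbb{R}^{MB}.
\]
With this definition the $b$-th block of $\max(0,\hat{\bm{z}}_t)$ is exactly $\max(0,\bm{z}_t-\bm{h}_b)$, so the spline term in \eqref{eq:basis_expansion} becomes a \emph{linear} function of $\max(0,\hat{\bm{z}}_{t-1})$ with block-row coefficients $\alpha_{b'}\bm{W}$.

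Next I would rewrite the linear piece $\bm{A}\bm{z}_{t-1}$ of \eqref{eq:plrnn_lat} in terms of $\hat{\bm{z}}_{t-1}$. Any single block recovers $\bm{z}_{t-1}$: picking, say, $b'=1$, we have $\bm{z}_{t-1} = \bm{y}_{t-1}^{(1)} + \bm{h}_1$. Substituting this together with the spline rewrite into \eqref{eq:plrnn_lat} and then subtracting $\bm{h}_b$ gives
\[
\bm{y}_t^{(b)} \,=\, \bm{A}\bm{y}_{t-1}^{(1)} + \sum_{b'=1}^{B}\alpha_{b'}\bm{W}\,\max\!\bigl(0,\bm{y}_{t-1}^{(b')}\bigr) + \bigl(\bm{A}\bm{h}_1+\bm{h}-\bm{h}_b\bigr) + \bm{C}\bm{s}_t + \bm{\epsilon}_t.
\]
Stacking these equations over $b$ reads off the parameters of Eq.~\ref{eq-MxK-st}: $\tilde{\bm{A}}$ is the block matrix whose first block column is $(\bm{A};\bm{A};\ldots;\bm{A})$ and all other block columns vanish; $\tilde{\bm{W}}$ is the block matrix with block $(b,b')$ equal to $\alpha_{b'}\bm{W}$; $\hat{\bm{h}}_0$ stacks the $B$ block-biases $\bm{A}\bm{h}_1+\bm{h}-\bm{h}_b$; $\tilde{\bm{C}}$ is the vertical stack of $B$ copies of $\bm{C}$; and $\tilde{\bm{\epsilon}}_t$ is the vertical stack of $B$ copies of $\bm{\epsilon}_t$.

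Finally I would verify the embedding is faithful by induction. Assuming $\bm{y}_{t-1}^{(b)}=\bm{z}_{t-1}-\bm{h}_b$ for every $b$, the formula above makes the right-hand side depend on $b$ only through the additive term $-\bm{h}_b$, so $\bm{y}_t^{(b)}-\bm{y}_t^{(b')}=\bm{h}_{b'}-\bm{h}_b$ is preserved, and consequently $\bm{y}_t^{(b)}=\bm{z}_t-\bm{h}_b$ holds at time $t$. Hence every trajectory of the dendPLRNN is faithfully represented by a trajectory of the constructed PLRNN restricted to the invariant affine submanifold $\{\hat{\bm{z}}:\bm{y}^{(b)}-\bm{y}^{(b')}=\bm{h}_{b'}-\bm{h}_b\ \forall b,b'\}$.

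The construction itself is straightforward block algebra; the one point requiring care is this invariance/consistency check, which is also what makes the existence of an autonomous $MB$-dimensional PLRNN meaningful rather than merely notational. A minor side remark worth including is that the induced noise $\tilde{\bm{\epsilon}}_t$ is perfectly correlated across the $B$ blocks belonging to the same original unit, i.e.\ $\tilde{\bm{\epsilon}}_t$ has a rank-$M$ (rather than rank-$MB$) covariance; this is harmless for the algebraic equivalence claimed by the theorem, but should be flagged since it affects how one would interpret the embedded system probabilistically.
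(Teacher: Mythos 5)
Your proposal is correct and follows essentially the same route as the paper: you embed the state as the $B$ threshold-shifted copies $\vz_t-\vh_b$, which is exactly the paper's substitution $\hat{\vz}_t = \tilde{\vz}_t - \tilde{\vh}$ applied to its $B$-fold stacked system, and your $\tilde{\mW}$ (blocks $\alpha_{b'}\mW$) and stacked $\tilde{\mC}\vs_t$, $\tilde{\bm{\epsilon}}_t$ coincide with the paper's. The one place you deviate is the linear term: by reconstructing $\vz_{t-1}$ from block $1$ only, your $\tilde{\mA}$ has all its mass in the first block column and is therefore not diagonal, whereas a conventional PLRNN (Eq.~\ref{eq:plrnn_lat}) requires a diagonal $\mA$; the paper instead reconstructs $\vz_{t-1}$ from block $b$ in the update of block $b$, giving the block-diagonal $\tilde{\mA}=\mathrm{diag}(\mA,\dots,\mA)$, which is diagonal (and leaves $\tilde{\mW}$ hollow), so the target form is met exactly. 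This is a one-line fix in your construction ($\vz_{t-1}=\bm{y}^{(b)}_{t-1}+\vh_b$ per block, changing the block bias to $\mA\vh_b+\vh-\vh_b$). Your explicit consistency/invariant-submanifold check and the remark on the rank-$M$ noise covariance are both correct and go slightly beyond what the paper spells out.
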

\begin{proof}
Straightforward by construction, see Appx. \ref{p-pro-2}.
\end{proof}

This theorem highlights why the \ourmethodname\ will allow to reduce the dimensionality of the reconstructed system, 
as it suggests we may often be able to reformulate a high-dimensional PLRNN in terms of an equally powerful lower-dimensional \ourmethodname.
In Appx. \ref{s-zm-fp-sec} we also spell out the exact computation of fixed points and $k$-cycles for the \ourmethodname.

Finally, the unboundedness of the PLRNN's latent states due to the ReLU function can cause divergence problems in training. The \ourmethodname, on the other hand, offers a simple and natural way to contain the latent states without violating the basic model description above, as established in the following theorem:
\begin{theorem}\label{pro-J-1}
For each basis $\{\alpha_b,\bm{h}_b\}$ in Eq. \ref{eq:basis_expansion} of a \ourmethodname\ let us add another basis $\{\alpha_b^*,\bm{h}_b^*\}$ with $\alpha_b^*=-\alpha_b$ and $\bm{h}_b^*=\mathbf{0}$. 
Then, for $\sigma_{\max}(\mA) < 1$, any orbit of this \say{clipped} \textit{\ourmethodname} (Eq. \ref{eq-clipped}) will remain bounded. \end{theorem}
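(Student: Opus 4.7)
My strategy is to exploit the fact that pairing each original basis $\{\alpha_b,\bm{h}_b\}$ with a compensating basis $\{-\alpha_b,\mathbf{0}\}$ turns the unbounded ReLU sum into a \emph{uniformly bounded} activation. Once the nonlinearity is bounded, the condition $\sigma_{\max}(\mA)<1$ makes the linear part a strict contraction, and a standard geometric-series argument on the recursion gives bounded orbits.

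\textbf{Step 1: uniformly bounding the clipped activation.} After adding the clipping bases, the activation in Eq.~\ref{eq:basis_expansion} reads, coordinate-wise,
\[
\phi_i(\bm{z}) \;=\; \sum_{b=1}^{B}\alpha_b\bigl[\max(0,z_i-h_{b,i})-\max(0,z_i)\bigr].
\]
A short case analysis on the scalar function $g_h(z):=\max(0,z-h)-\max(0,z)$, splitting on the signs of $h$ and $z$ and on whether $z$ lies below, between, or above $0$ and $h$, shows that $g_h$ is piecewise linear with values in the compact interval whose endpoints are $0$ and $-h$. In particular $|g_h(z)|\le |h|$ for all $z\in\mathbb{R}$, independently of $z$. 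Summing, $|\phi_i(\bm{z})|\le \sum_b|\alpha_b|\,|h_{b,i}|$, so there is a parameter-dependent but state-independent constant $C$ with $\|\phi(\bm{z})\|\le C$ for every $\bm{z}\in\mathbb{R}^M$.

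\textbf{Step 2: contraction bound on the orbit.} Under the standing assumption that external inputs are bounded (say $\|\bm{s}_t\|\le S$, trivially satisfied in the autonomous case $\bm{s}_t\equiv\bm{0}$), apply the triangle inequality and sub-multiplicativity of the operator $2$-norm to the \ourmethodname{} recursion:
\[
\|\bm{z}_t\| \;\le\; \sigma_{\max}(\mA)\,\|\bm{z}_{t-1}\| + \|\mW\|\,C + \|\vh\| + \|\mC\|\,S \;=:\; \lambda\,\|\bm{z}_{t-1}\| + D,
\]
where $\lambda:=\sigma_{\max}(\mA)<1$. Iterating gives $\|\bm{z}_t\| \le \lambda^t\|\bm{z}_0\|+D/(1-\lambda)$, which is bounded uniformly in $t\ge 0$, proving the claim. (Since $\mA$ is diagonal, $\sigma_{\max}(\mA)<1$ is simply $\max_i|A_{ii}|<1$.)

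\textbf{Main obstacle.} The argument is mostly bookkeeping, but the one genuinely essential ingredient is the plateau behaviour of $g_h$: for $|z|$ large enough the two ReLU pieces cancel \emph{up to the constant $-h$}, producing a finite horizontal asymptote rather than unbounded growth. Verifying this on every piece of the partition induced by $0$ and $h$ — and hence obtaining a bound on $\phi$ that does not depend on $\bm{z}$ — is the crux; without a $\bm{z}$-independent $C$, the contraction step would collapse. A minor subtlety worth flagging in the statement is the requirement of bounded $\bm{s}_t$ (or $\bm{s}_t\equiv\bm{0}$), which the current formulation leaves implicit.
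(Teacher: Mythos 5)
Your proof is correct and follows essentially the same route as the paper's: the key step in both is the scalar observation that $\max(0,z-h)-\max(0,z)$ is uniformly bounded by $|h|$, so the clipped activation admits a state-independent bound, after which $\sigma_{\max}(\mA)<1$ yields boundedness via a geometric series. The only cosmetic difference is that you iterate the one-step norm inequality $\|\bm{z}_t\|\le\lambda\|\bm{z}_{t-1}\|+D$, whereas the paper unrolls the full recursion before taking norms — the two are equivalent.
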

\begin{proof}
See Appx. \ref{pro-J-1-proof}.
\end{proof}

Appx. \ref{supp-theo-analysis} collects further theoretical results, assuring, for instance, that the manifold attractor regularization employed here (see next section) does not interfere with the results above (Proposition \ref{pro-Z-3}).

\subsection{Training the dendPLRNN}
We apply two different training strategies to infer the parameters $\theta = \{\bm{A}, \bm{W}, \bm{h}, \bm{C}, \bm{\Sigma}, \bm{B}, \bm{\Gamma}, \{\alpha_b, \bm{h_b}\}\}$ of the \ourmethodname\ (Eq.~\ref{eq:plrnn_lat},~\ref{eq:plrnn_obs},~\ref{eq:basis_expansion}) from observed data: First, we employ \say{classical} BPTT with a variant of TF 
\citep{williams_learning_1989,pearlmutter_1990}. TF here means that the first $N$ latent states $z_{k,l \tau+1}, k \leq N$, were replaced by observations $x_{k,l \tau+1}$ at times $l \tau+1, l \in \mathbb{N}_0$, where $\tau \geq 1$ is the forcing interval (for details, see Appx. \ref{supp-TF-BPTT}).
Second, we use a fast and scalable variational inference (VI) algorithm which maximizes the Evidence Lower Bound (ELBO) $\mathcal{L}(\theta, \phi; \bm{x}):= \mathbb{E}_{q_\phi}[\log(p_\theta(\bm{x}|\bm{z})] -\mathrm{KL}[q_\phi(\bm{z}|\bm{x})||p_\theta(\bm{z})]$ using the reparameterization trick \citep{kingma_auto-encoding_2014}, and convolutional neural networks (CNNs) for parameterizing the encoder model $q_\phi(\bm{z}|\bm{x})$ (see Appx. \ref{sec:supp:hypers} for details).
Furthermore, as proposed in \citet{schmidt_identifying_2021}, to efficiently capture DS at multiple time scales, for VI we add a regularization term to the ELBO that encourages the mapping of slow time constants and long-range dependencies (so-called \textit{manifold attractor regularization}, see Eq. \ref{eq:supp:MAR}, with regularization factor $\lambda$). 
\section{Experiments}\label{sec:experiments}

\subsection{Performance Measures}\label{sec:metrics}

In DS reconstruction, we 
aim to capture \textit{invariant} properties of the underlying DS like its geometrical and temporal structure. To evaluate the quality of reconstructions w.r.t. \textit{geometrical properties} we employed a Kullback-Leibler divergence (\klmetric) that quantifies the agreement in attractor geometries (more details in Appx. \ref{sec:supp:metrics}), as first suggested in \citet{koppe_identifying_2019} (see also \citet{schmidt_identifying_2021}). Specifically, this measure evaluates the overlap between the observed data distribution $p(\bm{x}^{\textrm{obs}})$ and the distribution $p(\bm{x}^{\textrm{gen}}|\bm{z}^{\textrm{gen}})$ generated from model simulations (i.e., with $\bm{z}^{\textrm{gen}} \sim p_{\theta}(\bm{z})$ after model training\footnote{For deterministic latent models this comes down to just forward-iterating Eq. \ref{eq:plrnn_lat} from various random initial conditions.}) across state \textit{space} (not time!). Since this measure as originally defined in \citet{koppe_identifying_2019} is expensive to compute, for the high-dimensional benchmark DS we used another approximation, details of which are given in Appx. \ref{sec:supp:metrics}. \klmetric\ is evaluated on a test set of 100 trajectories with randomly sampled initial conditions and 1000 time steps each. 
To assess the agreement in \textit{temporal structure}, power spectra were first computed through the Fast Fourier Transform \citep{cooley_algorithm_1965} on all dimensions (i.e., time series) and slightly smoothed with Gaussian kernels to remove noise. For each dimension, the power spectral correlation (PSC) between ground truth and model-generated time series was then computed and averaged across dimensions (see Appx. \ref{sec:supp:metrics}).  
Finally, we also computed a 20-step-ahead prediction error along test set trajectories to assess short-term behavior (see Appx. \ref{sec:supp:metrics}). We note, however, that prediction errors can be misleading in case of chaotic systems because of exponential trajectory divergence, as illustrated in \citet{koppe_identifying_2019} (i.e., may be large even if the true system has been accurately captured, and vice versa). They therefore need to be interpreted with caution and are less relevant 
in the context of DS reconstruction than the statistics introduced above. 

\subsection{DS Benchmarks Used for Evaluation}\label{sec:exp:metrics}

We evaluated our approach and the specific role of the basis expansion on 
six different types of challenging DS benchmarks.
 
First, the famous 3d chaotic Lorenz attractor (Lorenz-63) originally proposed by \citet{lorenz_deterministic_1963} (formally defined in Appx. \ref{sec:supp:lorenz}) has become a popular benchmark for DS reconstruction algorithms. Fig. \ref{fig:low_dim_examples}\textcolor{black}{a} (l.h.s.) illustrates true (blue) and reconstructed (orange) time series from this system, while the r.h.s. illustrates the chaotic attractor's geometry in its 3d state space for both the ground truth (blue) and reconstructed (orange) systems.
It is important to note that both the time and state space graphs are not merely ahead predictions from the \ourmethodname\, but are produced by \textit{simulating} the trained \ourmethodname\ from some initial condition. This illustrates that the \ourmethodname\ has captured the temporal and geometrical structure of the original Lorenz-63 system in its own governing equations.
Moreover, computing analytically (see Appx. \ref{s-zm-fp-sec}) the fixed points of the reconstructed system, we see that their positions in state space agree well with those of the true system. 

Second, a 3d biophysical model of a bursting neuron (see Eq. \ref{eq:bursting_neuron} in Appx. \ref{sec:supp:burstingneuron}; \citet{durstewitz_implications_2009}) highlights another aspect of DS reconstruction: Besides an equation for membrane voltage ($V$), the model consists of one very fast ($n$) and one slow ($h$) variable that control the gating of the model's ionic conductances. This produces fast spikes that ride on top of a much slower oscillation, making this system challenging to reconstruct.  
One such successful \ourmethodname\ reconstruction is illustrated in Fig. \ref{fig:low_dim_examples}\textcolor{black}{b} (orange) together with time graphs and state space representations of the true system (blue).
 \begin{figure}
\centering
\includegraphics[width=.95\linewidth]{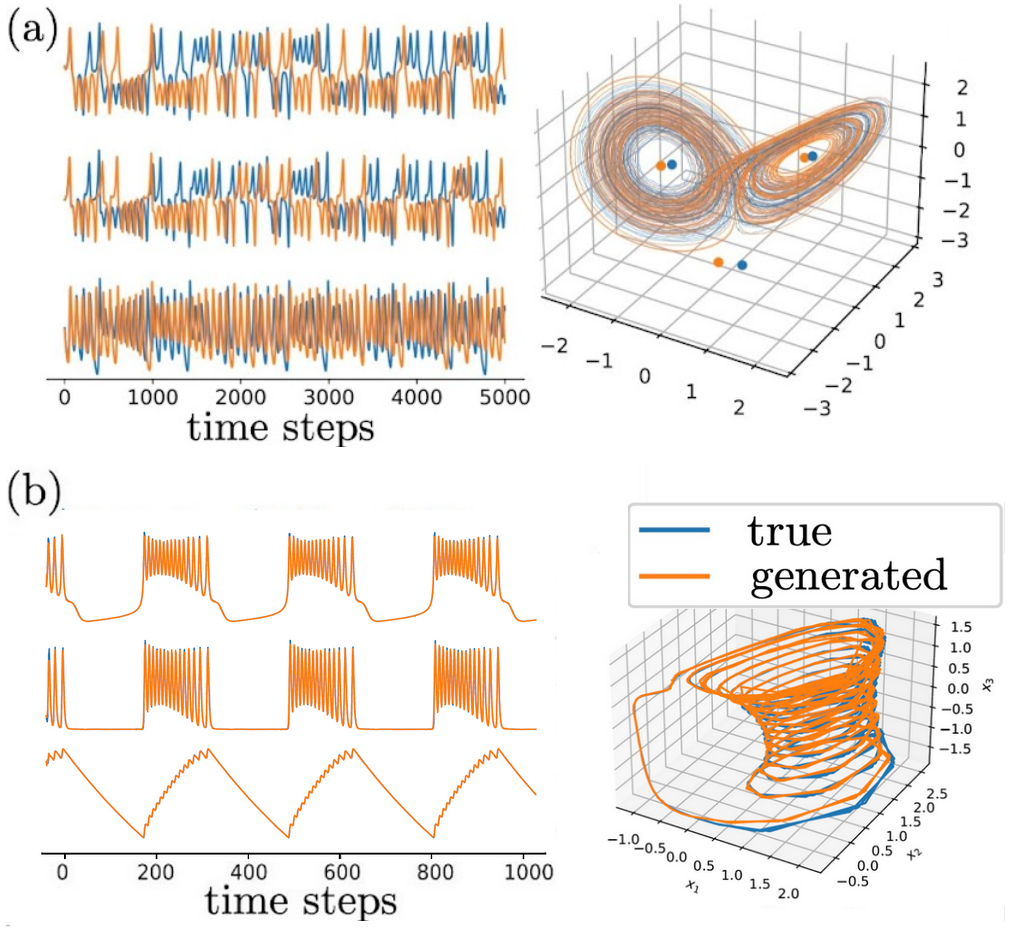}  
\caption{Examples of low-dimensional model reconstructions: (a) Time series (left) and state space trajectories (right) for the original Lorenz-63 chaotic attractor and simulations produced by a \ourmethodname\ trained with VI ($B=20$, $M=15$, $\lambda=1.0$, $M_{\textrm{reg}}/M=0.5$). Dots indicate true and reconstructed fixed points. (b) Same for the bursting neuron model, produced by a \ourmethodname\ trained with TF ($B=47$, $M=26$, $\tau=5$). Note that the bursting is a complex limit cycle but \textit{non-chaotic}.}
\label{fig:low_dim_examples}
\end{figure}

Third, the Lorenz-96 weather model is an example of a higher-dimensional, spatially organized chaotic system with local neighborhood interactions that can be extended to arbitrary dimensionality (Eq. \ref{eq:lorenz-96} in Appx. \ref{sec:supp:lorenz96}). It has also been used more widely for benchmarking DS reconstruction algorithms. For our experiments we employed a 10-dimensional spatial layout. Fig. \ref{fig:high_dim_examples}\textcolor{black}{a} illustrates time graphs for selected dimensions (top), the full evolving spatio-temporal pattern (center), and examples of power spectra (bottom) for both the ground truth system (blue) and an example reconstruction (orange). The spatio-temporal characteristics of the true and the \ourmethodname-generated time series tightly agree. 

Fourth, as another high-dimensional example we used a neural population model with structured connectivity tuned to produce coherent chaos \citep{landau_coherent_2018}, from which we produced 50d observations (see Appx. \ref{sec:supp:neuralpopulation} for details). Fig. \ref{fig:high_dim_examples}\textcolor{black}{b} provides example time series (top), full spatio-temporal patterns (center), and overlaid power spectra (bottom) for time series drawn from the true system (blue) and those simulated by a trained \ourmethodname\ (orange). Again there is a tight agreement, and again we emphasize that - like in all the other examples - these are not mere model ahead-predictions but fully simulated 
from some random initial condition.

\begin{figure}[ht]
\centering
\includegraphics[width=.99\linewidth]{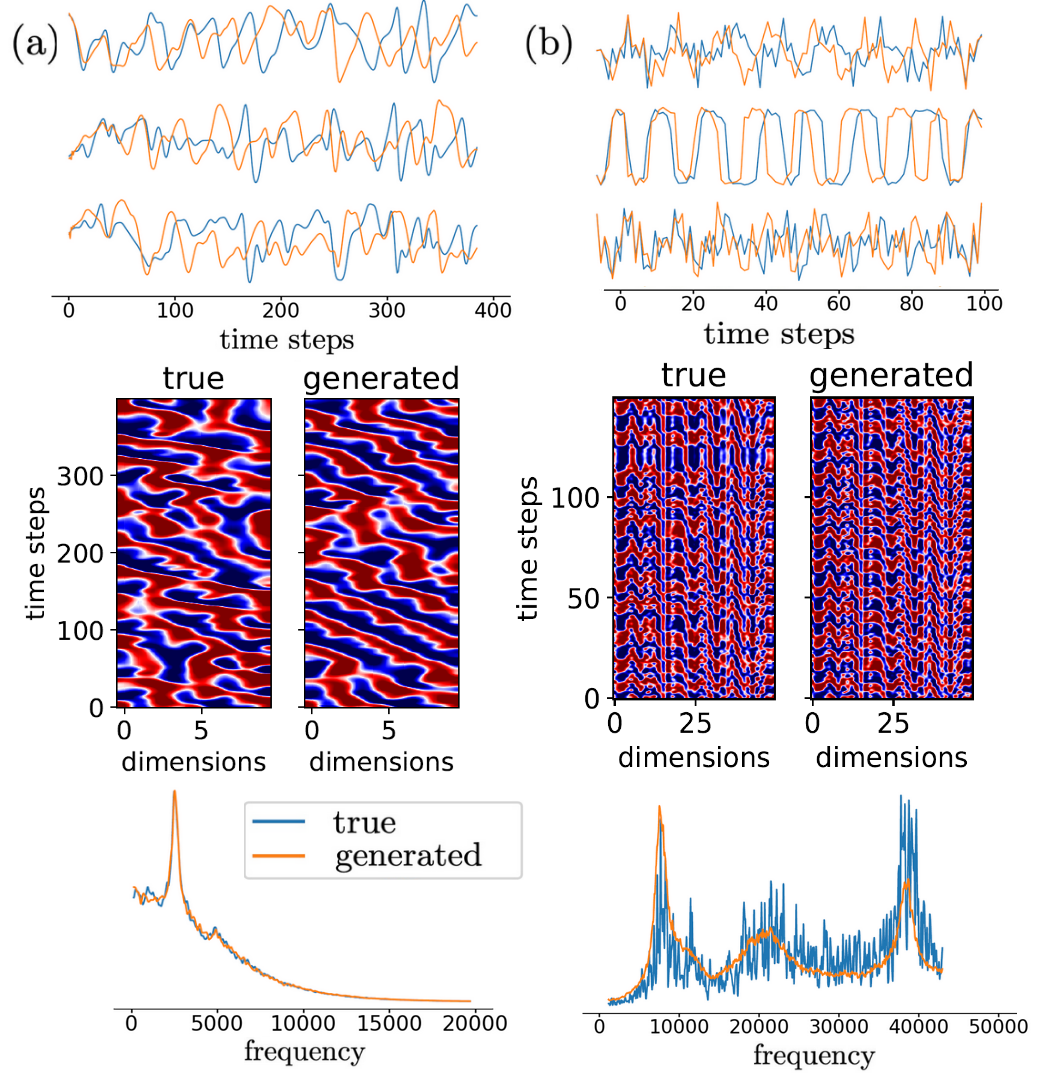}  
\caption{Examples of high-dimensional model reconstructions: (a) Time series (top), spatio-temporal evolution (middle), and power spectra (bottom) for the true 10d Lorenz-96 system and for \ourmethodname\ simulations ($B=50$, $M=30$, $\tau=10$). 
(b) Same for a 50d neural population model producing coherent chaos ($B=5$, $M=12$, $\lambda=1.0$, $M_{\textrm{reg}}/M=0.2$).}
\label{fig:high_dim_examples}
\end{figure}
Finally, we studied two real-world datasets consisting of electroencephalogram (EEG) recordings from human subjects, described in more detail with results (Fig. \ref{fig:EEG_trajectory}) in Appx. \ref{sec:supp:EEG}, and an electrocardiogram (ECG), recorded from a human subject with a chest sensor (Fig. \ref{fig:ECG_trajectory}), described in more detail in Appx. \ref{sec:supp:ECG}.

\subsection{Basis Expansion Allows for Reduced Dimensionality}  
Fig. \ref{fig:effect_basis_expansion} shows the reconstruction performance of the \ourmethodname\ on the Lorenz-63 DS when trained with a range of different numbers of bases $B$ and latent states $M$. As conjectured in Sec. \ref{sec:method} and confirmed by these results, the latent space dimensionality $M$ can indeed be reduced profoundly, at no loss in geometrical reconstruction quality (as assessed by \klmetric), by increasing the expansion order $B$.
\begin{figure}[htp!]
\centering
\begin{subfigure}{.49\textwidth}
  \centering
  \includegraphics[width=0.81\linewidth]{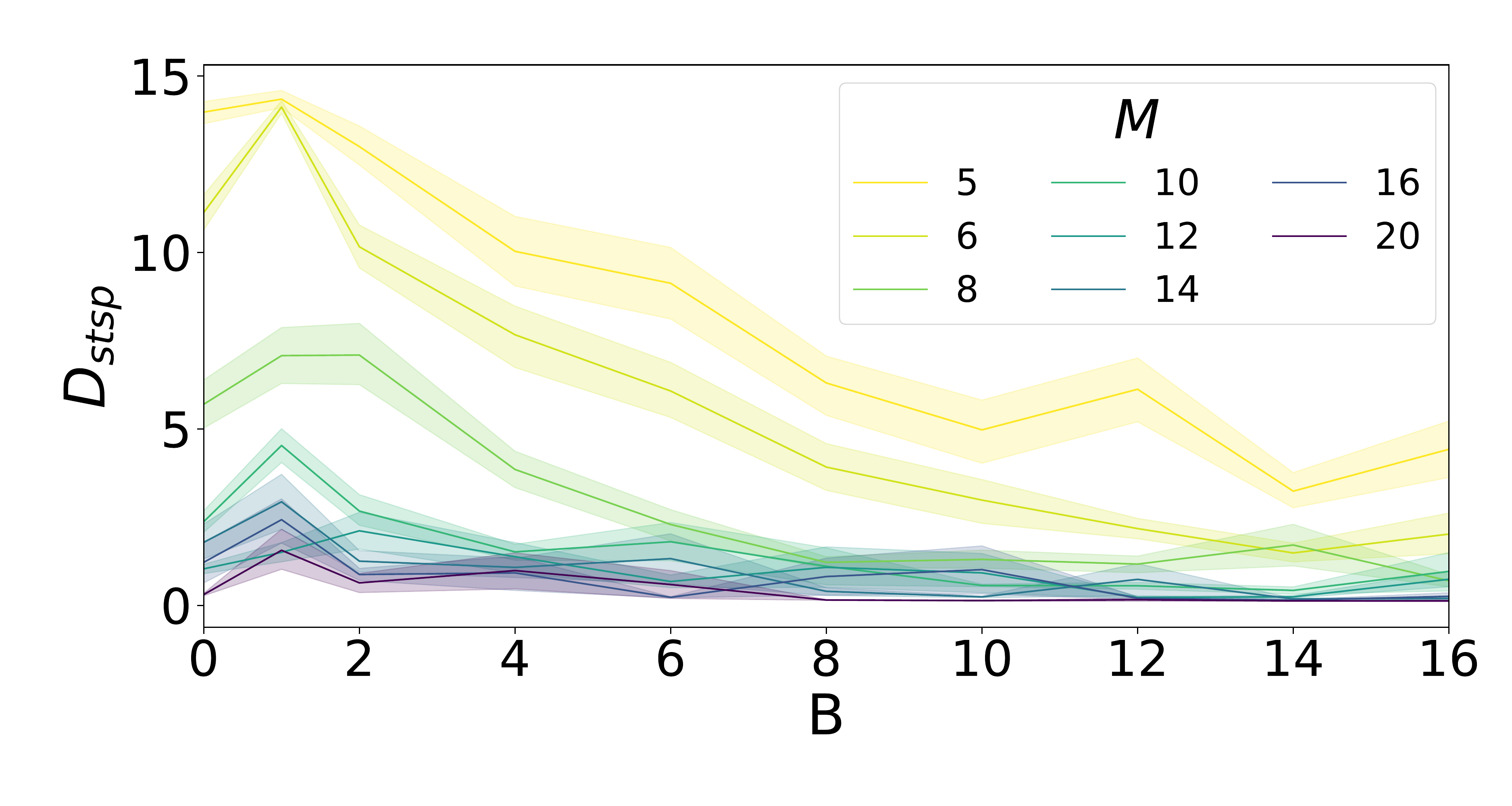}
\end{subfigure}%
\\
\begin{subfigure}{.49\textwidth}
  \centering
  \includegraphics[width=.845\linewidth]{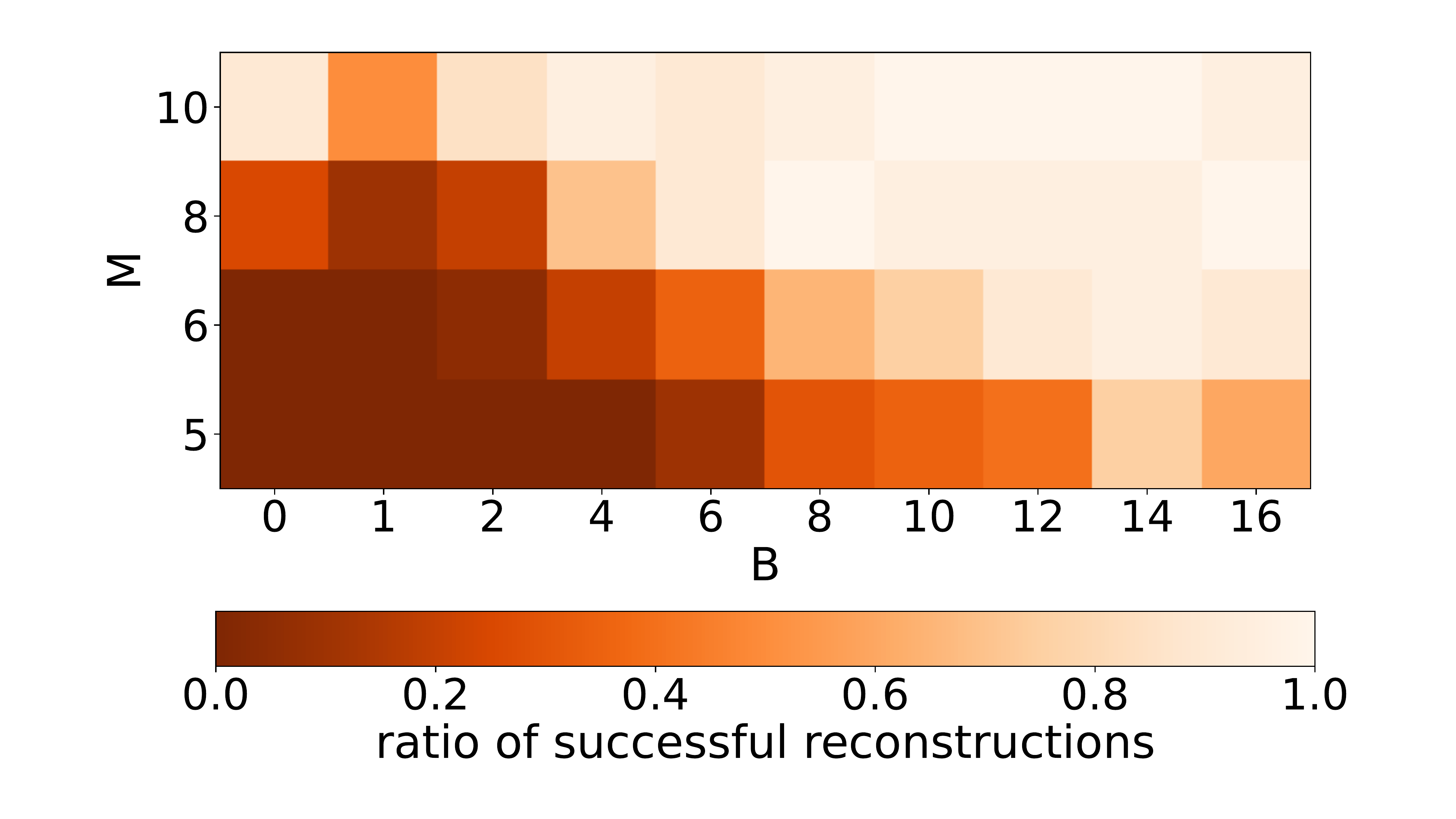}
\end{subfigure}
\caption{Effect of basis expansion on latent space dimensionality. Agreement in attractor geometries (top) and proportion of successful reconstructions (bottom) for the Lorenz-63 system as a function of the number of bases ($B$) and latent states ($M$). $B=0$ in the top graph denotes the standard PLRNN (no basis expansion). Each data point corresponds to 20 independent runs from different initial parameter configurations. In the bottom graph, runs with $D_{stsp} < 4$ were defined as successful as by inspection (but similar results are obtained with other choices for the $D_{stsp}$ threshold).
}
\label{fig:effect_basis_expansion}
\end{figure}
\subsection{Model Comparisons}
We compared our model to the PLRNN without dendritic expansion and four other algorithms purpose-tailored for DS reconstruction: First, SINDy \citep{brunton_discovering_2016} aims to reconstruct the governing equations by approximating numerical derivatives (obtained by differencing the time series, and applying a variance regularization to reduce noise) through a large library of (usually polynomial) basis functions. Sparse (LASSO) regression is used to pick out the right terms from the library (we used the PySINDy implementation \citep{de_silva_pysindy_2020} with multinomials up to sixth order). Second, \citet{vlachas_data-driven_2018} used a hybrid of LSTMs, trained using truncated BPTT, 
and mean-field stochastic models based on Ornstein-Uhlenbeck processes (LSTM-MSM) to approximate the true system's vector field estimated from observed time series. Third, \citet{pathak_model-free_2018} built on reservoir computing (RC) for their approach with reservoir parameters chosen to satisfy the \say{echo state property} \citep{jaeger_harnessing_2004}. For higher-dimensional systems, a spatially arranged set of reservoirs with local neighborhood relations is employed. Fourth, Neural-ODEs \citep{chen_neural_2018} were trained based on an implementation in the \texttt{torchdiffeq} package using the \texttt{odeintadjoint} method for the backward pass. 
For all these systems, we performed grid searches for optimal hyperparameters (see Appx. \ref{sec:supp:hypers}). For our own system, the \ourmethodname, we also performed a grid search for optimal hyper-parameters $\lambda$,
$\tau$, 
$M$, and $B$ (see Appx. \ref{sec:supp:hypers} and Table \ref{tab:reg} for details). 
For all five methods, to the degree possible we tried to ensure roughly the same number of trainable parameters 
(see Table \ref{tab:benchmarks}).
\begin{table*}[!b]

\definecolor{Gray}{gray}{0.9}
\caption{Comparison of \ourmethodname\ (Ours) trained by BPTT+TF, RC 
\citep{pathak_model-free_2018}, LSTM-MSM 
\citep{vlachas_data-driven_2018}, SINDy 
\citep{brunton_discovering_2016} and Neural ODE \citep{chen_neural_2018} on 4 DS benchmarks and two experimental datasets (top) and 3 challenging data situations (bottom). Values are mean $\pm$ SEM.} 
\centering
\scalebox{0.80}{
\begin{tabular} {l l r@{ \,$\pm$\, }l r@{ \,$\pm$\, }l r@{ \,$\pm$\, }ll l l l}
        \toprule
        Dataset	&	Method	&	\multicolumn{2}{c}{PSC}	&	\multicolumn{2}{c}{$D_{\textrm{stsp}}$}  & \multicolumn{2}{c}{20-step PE}	&  Dyn.var. & \#parameters \\
        \midrule
        \multirow{5}{4em}{Lorenz-63}
        
        & {\ourmethodname} TF	    &    $0.997$
 	&   $0.002$	&	$\num{0.13}$    &   $\num{0.18}$  & ${9.2\mbox{e$-$}5}$  & ${2.8\mbox{e$-$}5}$  &  $22$	    & $1032$  \\
        &	 {RC}	                &	$\num{0.991}$ 	    &   $\num{0.001}$		&	${0.24}$       &   ${0.05}$	& ${1.2\mbox{e$-$}2}$  & ${0.1\mbox{e$-$}3}$ &   $345$       & $1053$  \\
        &	 {LSTM-MSM}	                &	$0.985$ 	        &   ${0.004}$		    &	$\num{0.85}$    &   $\num{0.07}$ & ${1.2\mbox{e$-$}2}$  & ${0.1\mbox{e$-$}3}$ & $29$    & $1035$  \\
        &	 {SINDy}	            &	$\mathbf{0.998}$    &   $\mathbf{0.0003}$	&	$\mathbf{0.04}$ &   $\mathbf{0.01}$ & $\mathbf{6.8\mbox{e$-$}5}$  & $\mathbf{0.2\mbox{e$-$}5}$ & $3$         & $252$ \\
        &	 {Neural ODE}	            &	${0.992}$    &   ${0.001}$	&	${0.149}$ &   ${0.014}$ & ${1.1\mbox{e$-$}3}$  & ${4.1\mbox{e$-$}5}$ & $3$         & $1011$
        
        \\
        \midrule
        \multirow{5}{4em}{Bursting Neuron}
        & {\ourmethodname} TF	&	$\mathbf{0.76}$ & $\mathbf{0.04}$	        &	$\mathbf{0.61}$ &$\mathbf{0.09}$      & ${6.1\mbox{e$-$}2}$  & ${2.2\mbox{e$-$}2}$           & $26$          & $2040$    \\
        
        &	 {RC}	        &	$\num{0.51}$ &$\num{0.01}$	        &	$\num{5.1}$ &$\num{0.6}$           & ${8.6\mbox{e$-$}2}$  & ${0.1\mbox{e$-$}2}$            & $711$         & $2133$	\\
        &	 {LSTM-MSM}	        &	$0.54$ &$0.02$	&	${2.83}$ &${0.36}$	  & $\mathbf{3.9\mbox{e$-$}2}$  & $\mathbf{0.1\mbox{e$-$}2}$         & $45$	    & $2166$    \\
        &	 {SINDy}	           &	$0.25$ &$0.01$	&	${6.36}$ &${0.02}$	  & ${5.4\mbox{e$-$}1}$  & ${0.1\mbox{e$-$}2}$                  & $3$           & $252$       \\
        &	 {Neural ODE}	            &	${0.65}$    &   ${0.017}$	&	${3.85}$ &   ${0.1}$ & ${2.1\mbox{e$-$}1}$  & ${0.5\mbox{e$-$}2}$ & $3$        & $2073$ \\

            \midrule
        \multirow{5}{4em}{Lorenz-96}
        & {\ourmethodname} TF	&		$\mathbf{0.998}$ 	& $\mathbf{0.0001}$	&	$\mathbf{0.04}$ &$\mathbf{0.01}$   & $\mathbf{4.1\mbox{e$-$}2}$  & $\mathbf{0.8\mbox{e$-$}2}$   &  $50$       & $4480$	\\
    
        &	 {RC}	 &	$\num{0.986}$ &$\num{0.008}$	&	$\num{0.25}$ &$\num{0.17}$    & ${7.1\mbox{e$-$}1}$  & ${0.1\mbox{e$-$}2}$       &  $440$	    & $4400$	\\        
        &	 {LSTM-MSM}	& 	$\num{0.993}$ &$\num{0.002}$	&	$\num{0.23}$ &$\num{0.03}$      & ${8.2\mbox{e$-$}1}$  & ${0.3\mbox{e$-$}2}$   &  $62$	& $4384$	\\
        &	 {SINDy}	&	${0.997}$ &${0.001}$	&	$0.06$ & 	$0.003$     & ${6.3\mbox{e$-$}2}$  & ${0.1\mbox{e$-$}3}$       &  $10$        & $27410$	    \\
        &	 {Neural ODE}	            &	${0.985}$    &   ${0.001}$	&	${0.21}$ &   ${0.02}$ & ${4.4\mbox{e$-$}2}$  & ${4.5\mbox{e$-$}3}$ & $10$         & $4130$ \\
        \midrule    
        
        
        \multirow{5}{4em}{Neural Population Model}
        & {\ourmethodname} TF	&	$\mathbf{0.52}$ &$\mathbf{0.01}$	&	${0.37}$ &${0.05}$    &	${1.43}$ &${0.01}$                                          & $75$        & $9990$	\\ 
        
        &	 {RC}	&	$\num{0.34}$ &$\num{0.03}$	&	$\num{2.8}$ &$\num{0.4}$                    &	$1.64$ &$0.07$                          & $200$      & $10000$	\\
        &	 {LSTM-MSM}	&	$0.51$ &$0.02$	&	$\mathbf{0.29}$ &$\mathbf{0.04}$	                 &	$1.56$ &$0.01$               & $56$	& $10298$   \\
        &	 {SINDy}	&	\multicolumn{2}{c}{diverging} &	\multicolumn{2}{c}{diverging}	&	\multicolumn{2}{c}{diverging}		                            & $50$      & $66300$    \\
        &	 {Neural ODE}	            &	${0.47}$    &   ${0.03}$	&	${9.56}$ &   ${0.86}$ & $\mathbf{0.58}$  & $\mathbf{0.006}$ & $50$         & $10200$ \\
        \midrule
        
        \multirow{5}{4em}{EEG}
        & {\ourmethodname} TF	&	$\mathbf{0.923}$ & $\mathbf{0.012}$	        &	$\mathbf{1.96}$ &$\mathbf{0.18}$      & $\mathbf{0.202}$  & $\mathbf{0.007}$           & $128$          & $27058$    \\
        
        &	 {RC}	        &	$\num{0.782}$ &$\num{0.002}$	        &	$\num{8.8}$ &$\num{0.8}$           & ${0.78}$  & ${0.02}$            & $448$         & $28672$	\\
        &	 {LSTM-MSM}	        &	$0.827$ &$0.002$	&	${8.3}$ &${0.3}$	  & ${0.708}$  & ${0.003}$        & $168$	    & $27728$    \\
        &	 {SINDy}	           &	\multicolumn{2}{c}{diverging}	    &	\multicolumn{2}{c}{diverging}	&	\multicolumn{2}{c}{diverging}	            & $64$           & $133120$       \\
        &	 {Neural ODE}	            &	${0.82}$    &   ${0.002}$	&	${21.72}$ &   ${0.71}$ & ${0.31}$  & ${0.005}$ & $64$         & $30559$ \\
        \midrule
        
        \multirow{5}{4em}{ECG}
        & {\ourmethodname} TF	&	$\mathbf{0.929}$ & $\mathbf{0.014}$	        &	$\mathbf{0.4}$ &$\mathbf{0.6}$      & ${0.23}$  & ${0.03}$           & $30$          & $2641$    \\
        
        &	 {RC}	        &	$\num{0.880}$ &$\num{0.013}$	        &	$\num{1.78}$ &$\num{0.44}$           & ${0.571}$  & ${0.013}$            & $378$         & $2646$	\\
        &	 {LSTM-MSM}	        &	$0.926$ &$0.007$	&	${0.59}$ &${0.08}$	  & $\mathbf{7.0\mbox{e$-$}2}$  & $\mathbf{0.6\mbox{e$-$}2}$        & $51$	    & $2801$    \\
        &	 {SINDy}	           &	\multicolumn{2}{c}{diverging}	    &	\multicolumn{2}{c}{diverging}	&	\multicolumn{2}{c}{diverging}	            & $7$           & $4424$       \\
        &	 {Neural ODE}	            &	${0.90}$    &   ${0.011}$	&	${1.18}$ &   ${0.02}$ & ${0.61}$  & ${0.01}$ & $7$         & $2599
$ \\
        \midrule
        \midrule
        \multirow{5}{4em}{Low amount of data}
        &	 {\ourmethodname} TF	&	$0.97$ & $0.04$ 	&	$6.9$ & $5.3$	   & ${1.5\mbox{e$-$}2}$  & ${0.9\mbox{e$-$}2}$            & $22$   & $1032$    \\
        &	 {RC}	     &	$0.68$ &$0.05$	        &	$\num{5.74}$ &$\num{0.11}$                  & ${4.1\mbox{e$+$}5}$  & ${1.2\mbox{e$+$}5}$                            & $345$     & $1053$    \\
        &	 {LSTM-MSM}	        &	$\num{0.960}$ &$\num{0.006}$	&	$\num{6.06}$ &$\num{0.37}$	& ${2.1\mbox{e$-$}1}$  & ${0.3\mbox{e$-$}2}$  	                & $29$    & $1035$    \\
        &	 {SINDy}	    &	$\mathbf{0.998}$ &$\mathbf{0.0003}$	&	$\mathbf{0.04}$ &$\mathbf{0.01}$ & $\mathbf{6.8\mbox{e$-$}5}$  & $\mathbf{0.2\mbox{e$-$}5}$  & $3$       & $252$	    \\
        &	 {Neural ODE}	            &	${0.967}$    &   ${0.008}$	&	${4.66}$ &   ${0.31}$ & ${1.6\mbox{e$-$}3}$  & ${1.8\mbox{e$-$}4}$ & $3$         & $1011$
        
        \\
        
        \midrule
        \multirow{5}{4em}{Partially observed}
        &	 {\ourmethodname} TF & $\mathbf{0.993}$ & $\mathbf{0.003}$	&	$\mathbf{0.54}$ & $\mathbf{0.16}$	  & $\mathbf{5.3\mbox{e$-$}3}$  & $\mathbf{0.2\mbox{e$-$}3}$                             & $22$	    & $1032$  \\
        
        &	 {RC}	& $0.981$ &$0.001$	&	$2.92$ &$0.08$	  & ${7.6\mbox{e$-$}3}$  & ${0.1\mbox{e$-$}3}$             & $345$     & $1053$  \\
        &	 {LSTM-MSM}	& $\num{0.934}$ &$\num{0.005}$	&	$\num{6.06}$ &$\num{0.37}$		     & ${2.3\mbox{e$-$}2}$  & ${0.3\mbox{e$-$}2}$                      & $29$    & $1035$  \\
        &	 {SINDy}	& $\num{0.974}$ &$\num{0.001}$	&	$\num{2.52}$ &$\num{0.01}$	       & ${7.4\mbox{e$-$}3}$  & ${0.1\mbox{e$-$}3}$                    & $3$         & $252$	    \\
        &	 {Neural ODE}	            &	${0.945}$    &   ${0.004}$	&	${3.34}$ &   ${0.12}$ & ${8.3\mbox{e$-$}3}$  & ${9\mbox{e$-$}5}$ & $3$         & $1011$
        
        \\
        \midrule
        \multirow{5}{4em}{High noise}
        &	 {\ourmethodname} TF	& $\mathbf{0.995}$	 &  $\mathbf{0.002}$               &	$\mathbf{0.4}$ & $\mathbf{0.13}$	             & $\mathbf{4.6\mbox{e$-$}3}$  & $\mathbf{0.4\mbox{e$-$}3}$                          & $22$	    & $1032$  \\
        
        &	 {RC}	& $0.988$ &$0.001$	&	$\num{2.33}$ &$\num{0.21}$	              & ${3.1\mbox{e$-$}2}$  & ${0.2\mbox{e$-$}2}$             & $345$       & $1053$  \\
        &	 {LSTM-MSM} &	 $\num{0.967}$ &$\num{0.006}$	        &	$1.19$ &$0.27$            & ${3.3\mbox{e$-$}2}$  & ${0.2\mbox{e$-$}2}$                 & $29$    & $1035$  \\
        &	 {SINDy} & $\num{0.984}$ &$\num{0.005}$	&	$0.42$ &$0.01$           & ${7.0\mbox{e$-$}3}$  & ${0.1\mbox{e$-$}4}$            & $3$         & $252$	    \\
        &	 {Neural ODE}	            &	${0.982}$    &   ${0.055}$	&	${0.79}$ &   ${0.06}$ & ${5.5\mbox{e$-$}3}$  & ${1.7\mbox{e$-$}4}$ & $3$         & $1011$
        
        \\
        \bottomrule
\end{tabular}
}
\label{tab:benchmarks}
\end{table*}

Results for all five models on all six DS benchmarks employed here are summarized in the upper part of Table \ref{tab:benchmarks}, using the 
temporal and geometrical reconstruction measures introduced in Sec. \ref{sec:exp:metrics} (as well as a 20-step-ahead prediction error for comparison\footnote{As pointed out in sect. \ref{sec:metrics}, for some of the chaotic systems we indeed observed, however, that lower prediction errors do not always go in hand with better DS reconstructions.}). To produce this table, 100,000 time steps for both training and testing were simulated from each ground truth system, 
all dimensions were standardized to have zero mean and unit variance, and process noise and observation noise (with about 1\% of the data variance) were added while simulating the (now stochastic) differential equations, and after drawing the observations, respectively (see Appx. \ref{sec:supp:dynsys} for further methodological details). To produce statistics, each method was run from a total of 20 randomly chosen initial conditions for the parameters. We also tested all five methods on real-world EEG and ECG data and on challenging data situations produced using the Lorenz-63 system (Fig. \ref{fig:low_dim_examples}\textcolor{black}{a}), with either short time series of just 1000 time steps for training, only partial observations (just state variable $x$ in Eq.\ref{eq:lorenz} in Appx. \ref{sec:supp:lorenz}), or high process and high observation noise (drawing from a Gaussian with $d\epsilon \sim\mathcal{N}(0,0.1 dt \times \mathbf{I})$ for the process noise as described in Appx. \ref{sec:supp:dynsys}, and using 10\% of the observation variance, respectively). 
SINDy cannot naturally handle missing observations, as it has no latent variables but formulates the model directly in terms of the observations. Therefore, for the partially observed system, we used a delay embedding \citep{takens_detecting_1981, sauer_embedology_1991} to create a 3d dataset, adding two time-lagged versions of $x$ 
as coordinates.\footnote{Note that SINDy by design always has as many dynamical variables as observed (or embedded) dimensions. This could be an advantage if the observed system really is that low-dimensional. If, however, the number of observations is much larger than those needed to describe the generating DS (as often suspected in neuroscience) or -- vice versa -- not all system states have been observed, 
SINDy may need to be augmented by other techniques (see \citet{champion_data-driven_2019,bakarji2022discovering}).
}

A general observation is that indeed all five models are quite powerful for reconstructing the underlying DS. However, in most comparisons the \ourmethodname\ had an edge over the other methods, or came out second after SINDy, especially when trained by BPTT+TF (see Appx. \ref{tab:supp:vae} for results obtained with VI).
SINDy tends to outperform the \ourmethodname\ on the Lorenz-63 DS, 
but it performs poorly on the bursting neuron example and 
fails on the neural population model, as well as on the EEG and ECG data. It also becomes comparatively slow to train on high-dimensional systems (as the number of bases needed scales exponentially with the number of dimensions). This can be explained by the fact that SINDy already has the correct functional form for the Lorenz-63 (and also Lorenz-96) DS: Both of these have a strictly polynomial form (see Eq. \ref{eq:lorenz} and Eq. \ref{eq:lorenz-96} in Appx. \ref{sec:supp:dynsys}), and SINDy (in our tests) works with a set of polynomial library functions to begin with. Hence, SINDy only needs to pick out the right terms from its expansion to succeed, giving it a clear advantage on these model systems by design. On the other hand, as indicated in Table \ref{tab:benchmarks}, it 
largely fails on systems which have a different (in this case non-polynomial) functional form, or when the true form, as in the EEG and ECG empirical examples, is simply not known. Unlike the other methods, SINDy therefore appears less suitable as a general framework for DS reconstruction if an appropriate library of basis functions cannot be specified a priori, a potential shortcoming already  
discussed by the original authors \cite{brunton_discovering_2016}. 

While our conclusion is that essentially all of the four tested models LSTM-MSM, RC, Neural ODE and \ourmethodname, are suitable for reconstruction of \textit{arbitrary} unknown DS, even in very challenging data situations (Table \ref{tab:benchmarks}, bottom), LSTM-MSM, RC and Neural-ODE performed worse on average 
and have other disadvantages compared to our method: First, they are quite complex in their architectures and hence not easily interpretable, i.e. much harder to track and analyze mathematically.\footnote{This is especially true for RC. Moreover, the fact that only the weights of the linear output layer are trainable while the recurrent connections within the reservoirs are static, may raise the question of what precisely is learnt in terms of dynamics if the reservoirs themselves cannot adapt to the DS at hand.} In contrast, as summarized in Sec. \ref{sec:method:theory}, the \ourmethodname\ is a continuous PWL map and as such comes with a huge bulk of already existing theoretical results \citep{schmidt_identifying_2021, monfared_existence_2020, monfared_transformation_2020}, as well as with mathematical tractability. This aspect is illustrated more explicitly in Fig. \ref{fig:wc_flow} which shows true and reconstructed vector fields and fixed point locations for the Wilson-Cowan model of neural population dynamics (see Appx. \ref{supp-wc}) in the bistable regime. Fixed points were computed analytically for the \ourmethodname\ (cf. also Fig. \ref{fig:low_dim_examples}\textcolor{black}{a} and Appx. \ref{s-zm-fp-sec}), and match those of the ground truth system both in position and stability as determined from the \ourmethodname's Jacobian. 
On top, the \ourmethodname\ mostly achieves reconstructions of the DS in (much) lower dimensions than RC or LSTM-MSM 
(see Table \ref{tab:benchmarks}), further adding to its better interpretability. 
By embedding the \ourmethodname\ within a SVAE \citep{archer_black_2015} framework (see Appx. \ref{supp-approxposterior}), one could also obtain uncertainty estimates on the state trajectories and perform posterior inference, 
features that the other models lack.
\begin{figure}[!htb]
    \centering
	\includegraphics[width=0.85\linewidth]{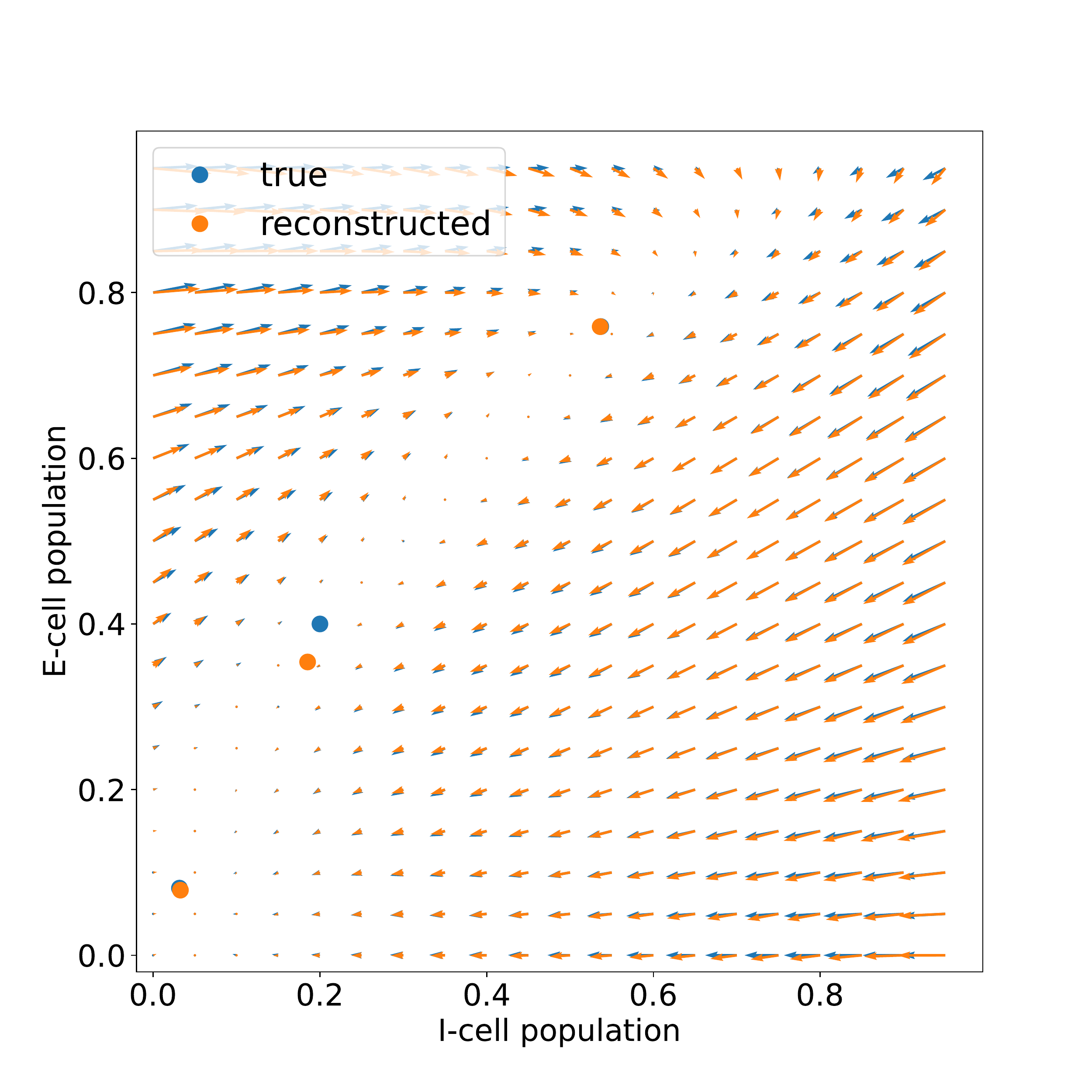}
	
	\caption{Comparison of ground truth vector field for the 2D Wilson-Cowan equations to a reconstruction obtained by the \ourmethodname\ with $M=5, B=20, \tau=15, M_{\textrm{reg}}/M=0.5, \lambda=5 \cdot 10^{-3}$. Also shown are locations of the true system's fixed points and those computed analytically for the trained \ourmethodname.}
\label{fig:wc_flow}
\end{figure}
 \section{Conclusions}\label{sec:conclusion}
In this work we augmented PLRNNs \citep{durstewitz_state_2017, koppe_identifying_2019} by a linear spline basis expansion inspired by principles of dendritic computation. We show mathematically that by doing so we remain within the theoretical framework of continuous PWL maps and hence can harvest a huge bulk of existing DS theory (Sec. \ref{sec:method:theory}), while at the same time achieving better performance with less parameters and in lower dimensions. 
This is a key advantage from a scientific perspective where mechanistic insight and understanding of the system under study is sought. Indeed, we are not aware of any other current DS reconstruction approach that combines these features, a simple, mathematically tractable design with competitive performance, yet providing comparatively low-dimensional representations of the dynamics. 
Another contribution of the present work is that it assembles a set of DS benchmarks, experimental settings, and reconstruction measures which may be helpful more generally for assessing DS reconstruction algorithms, including an extension of a geometrical reconstruction measure (\klmetric) for use in high dimensions.
Using a stochastic latent model and VI/SVAE for training  (see Appx. \ref{supp-approxposterior}) in addition yields posterior distributions and uncertainty estimates across latent state trajectories. 
Somewhat surprisingly, however, the BPTT+TF approach to model training clearly outperformed the more sophisticated VI approach (see Table \ref{tab:supp:vae}). This could be rooted in suboptimal encoder models or, as we suspect, in suboptimal sampling from the approximate posterior: BPTT+TF (and, similarly, LSTM-MSM) allow trajectories to evolve freely for several time steps during training and hence assess longer bits of trajectory for optimization. In contrast, in VI single time points are sampled separately from the approximate posterior and overall temporal consistency is ensured only through the Kullback-Leibler term in the ELBO. 
Other more expressive yet still fast to compute encoder models that better map a trajectory's temporal evolution, e.g., based on normalizing flows \citep{rezende_variational_2015}, may boost performance. Specific annealing and curriculum training protocols (as used in \citet{koppe_identifying_2019}) are other amendments to consider. 

Another potentially interesting direction would be to augment the model with multiple trainable and adaptive time scales, as in LEM networks \cite{rusch2022long}. While this is similar in spirit to the manifold attractor regularization scheme proposed for PLRNNs by \citet{schmidt_identifying_2021} (see also Fig. \ref{fig:bn_time_scales}), LEMs allow for a more flexible and state/input-dependent way of adjusting the system's time constants. 
In preliminary studies we observed LEM networks to 
come close to our model for DS reconstruction tasks. Borrowing LEM's basic functional principles while retaining the \ourmethodname's tractable form thus appears to be another promising though challenging avenue.
\section*{Software and Data}
All code created in here is available at \url{https://github.com/DurstewitzLab/dendPLRNN}.

 \section*{Acknowledgements}
 This work was funded by the German Research Foundation (DFG) within Germany’s Excellence Strategy – EXC-2181 – 390900948 (’Structures’), by DFG grant Du354/10-1 to DD, and the European Union Horizon-2020 consortium SC1-DTH-13-2020 ('IMMERSE').




\bibliography{files/basis_expansion.bib}

\begin{thebibliography}{93}
\providecommand{\natexlab}[1]{#1}
\providecommand{\url}[1]{\texttt{#1}}
\expandafter\ifx\csname urlstyle\endcsname\relax
  \providecommand{\doi}[1]{doi: #1}\else
  \providecommand{\doi}{doi: \begingroup \urlstyle{rm}\Url}\fi

\bibitem[Archer et~al.(2015)Archer, Park, Buesing, Cunningham, and
  Paninski]{archer_black_2015}
Archer, E., Park, I.~M., Buesing, L., Cunningham, J., and Paninski, L.
\newblock Black box variational inference for state space models.
\newblock \emph{arXiv preprint arXiv:1511.07367}, 2015.
\newblock URL \url{http://arxiv.org/abs/1511.07367}.

\bibitem[Azencot et~al.(2020)Azencot, Erichson, Lin, and
  Mahoney]{azencot_forecasting_2020}
Azencot, O., Erichson, N.~B., Lin, V., and Mahoney, M.~W.
\newblock Forecasting {Sequential} {Data} using {Consistent} {Koopman}
  {Autoencoders}.
\newblock In \emph{Proceedings of the 37th {International} {Conference} on
  {Machine} {Learning}}, 2020.
\newblock URL \url{http://arxiv.org/abs/2003.02236}.

\bibitem[Ba et~al.(2016)Ba, Kiros, and Hinton]{ba_layer_2016}
Ba, J.~L., Kiros, J.~R., and Hinton, G.~E.
\newblock Layer {Normalization}.
\newblock \emph{arXiv:1607.06450 [cs, stat]}, July 2016.
\newblock URL \url{http://arxiv.org/abs/1607.06450}.

\bibitem[Bakarji et~al.(2022)Bakarji, Champion, Kutz, and
  Brunton]{bakarji2022discovering}
Bakarji, J., Champion, K., Kutz, J.~N., and Brunton, S.~L.
\newblock Discovering governing equations from partial measurements with deep
  delay autoencoders.
\newblock \emph{arXiv preprint arXiv:2201.05136}, 2022.

\bibitem[Baydin et~al.(2018)Baydin, Pearlmutter, Radul, and
  Siskind]{baydin_automatic_2018}
Baydin, A.~G., Pearlmutter, B.~A., Radul, A.~A., and Siskind, J.~M.
\newblock Automatic {Differentiation} in {Machine} {Learning}: a {Survey}.
\newblock \emph{Journal of Machine Learning Research}, 18\penalty0
  (153):\penalty0 1--43, 2018.
\newblock ISSN 1533-7928.
\newblock URL \url{http://jmlr.org/papers/v18/17-468.html}.

\bibitem[Bayer et~al.(2021)Bayer, Soelch, Mirchev, Kayalibay, and van~der
  Smagt]{bayer_mind_2021}
Bayer, J., Soelch, M., Mirchev, A., Kayalibay, B., and van~der Smagt, P.
\newblock Mind the gap when conditioning amortised inference in sequential
  latent-variable models.
\newblock \emph{International Conference on Learning Representations}, 2021.
\newblock URL \url{http://arxiv.org/abs/2101.07046}.

\bibitem[Bengio et~al.(1994)Bengio, Simard, and Frasconi]{bengio1994learning}
Bengio, Y., Simard, P., and Frasconi, P.
\newblock Learning long-term dependencies with gradient descent is difficult.
\newblock \emph{IEEE transactions on neural networks}, 5\penalty0 (2):\penalty0
  157--166, 1994.

\bibitem[Blei et~al.(2017)Blei, Kucukelbir, and
  McAuliffe]{Blei_variational_2017}
Blei, D.~M., Kucukelbir, A., and McAuliffe, J.~D.
\newblock Variational inference: A review for statisticians.
\newblock \emph{Journal of the American Statistical Association}, 112\penalty0
  (518):\penalty0 859–877, Apr 2017.
\newblock ISSN 1537-274X.
\newblock \doi{10.1080/01621459.2017.1285773}.
\newblock URL \url{http://dx.doi.org/10.1080/01621459.2017.1285773}.

\bibitem[Brunton et~al.(2016)Brunton, Proctor, and
  Kutz]{brunton_discovering_2016}
Brunton, S.~L., Proctor, J.~L., and Kutz, J.~N.
\newblock Discovering governing equations from data by sparse identification of
  nonlinear dynamical systems.
\newblock \emph{Proc Natl Acad Sci U S A}, 113\penalty0 (15):\penalty0
  3932--3937, 2016.
\newblock ISSN 0027-8424.
\newblock \doi{10.1073/pnas.1517384113}.
\newblock URL \url{https://www.ncbi.nlm.nih.gov/pmc/articles/PMC4839439/}.

\bibitem[Brunton et~al.(2017)Brunton, Brunton, Proctor, Kaiser, and
  Kutz]{brunton_chaos_2017}
Brunton, S.~L., Brunton, B.~W., Proctor, J.~L., Kaiser, E., and Kutz, J.~N.
\newblock Chaos as an intermittently forced linear system.
\newblock \emph{Nat Commun}, 8\penalty0 (1):\penalty0 19, 2017.
\newblock ISSN 2041-1723.
\newblock \doi{10.1038/s41467-017-00030-8}.
\newblock URL \url{http://www.nature.com/articles/s41467-017-00030-8}.

\bibitem[Champion et~al.(2019)Champion, Lusch, Kutz, and
  Brunton]{champion_data-driven_2019}
Champion, K., Lusch, B., Kutz, J.~N., and Brunton, S.~L.
\newblock Data-driven discovery of coordinates and governing equations.
\newblock \emph{Proc Natl Acad Sci USA}, 116\penalty0 (45):\penalty0
  22445--22451, 2019.
\newblock ISSN 0027-8424, 1091-6490.
\newblock \doi{10.1073/pnas.1906995116}.
\newblock URL \url{http://www.pnas.org/lookup/doi/10.1073/pnas.1906995116}.

\bibitem[Chang et~al.(2019)Chang, Chen, Haber, and
  Chi]{chang_antisymmetricrnn_2019}
Chang, B., Chen, M., Haber, E., and Chi, E.~H.
\newblock {AntisymmetricRNN}: A dynamical system view on recurrent neural
  networks.
\newblock \emph{International Conference on Learning Representations}, 2019.
\newblock URL \url{http://arxiv.org/abs/1902.09689}.

\bibitem[Chen et~al.(2018)Chen, Rubanova, Bettencourt, and
  Duvenaud]{chen_neural_2018}
Chen, R. T.~Q., Rubanova, Y., Bettencourt, J., and Duvenaud, D.
\newblock Neural {Ordinary} {Differential} {Equations}.
\newblock In \emph{Advances in {Neural} {Information} {Processing} {Systems}
  31}, 2018.
\newblock URL \url{http://arxiv.org/abs/1806.07366}.

\bibitem[Chen et~al.(2017)Chen, Shojaie, and Witten]{chen_network_2017}
Chen, S., Shojaie, A., and Witten, D.~M.
\newblock Network {Reconstruction} {From} {High}-{Dimensional} {Ordinary}
  {Differential} {Equations}.
\newblock \emph{Journal of the American Statistical Association}, 112\penalty0
  (520):\penalty0 1697--1707, 2017.
\newblock ISSN 0162-1459.
\newblock \doi{10.1080/01621459.2016.1229197}.
\newblock URL \url{https://doi.org/10.1080/01621459.2016.1229197}.

\bibitem[Chen et~al.(2020)Chen, Zhang, Arjovsky, and
  Bottou]{chen_symplectic_2020}
Chen, Z., Zhang, J., Arjovsky, M., and Bottou, L.
\newblock Symplectic {Recurrent} {Neural} {Networks}.
\newblock In \emph{Proceedings of the 8th {International} {Conference} on
  {Learning} {Representations}}, 2020.
\newblock URL \url{http://arxiv.org/abs/1909.13334}.

\bibitem[Cooley \& Tukey(1965)Cooley and Tukey]{cooley_algorithm_1965}
Cooley, J.~W. and Tukey, J.~W.
\newblock An algorithm for the machine calculation of complex fourier series.
\newblock \emph{Mathematics of Computation}, 19\penalty0 (90):\penalty0
  297--301, 1965.
\newblock ISSN 0025-5718.
\newblock \doi{10.2307/2003354}.
\newblock URL \url{https://www.jstor.org/stable/2003354}.
\newblock Publisher: American Mathematical Society.

\bibitem[Cui et~al.(2016)Cui, Chen, and Chen]{chen_multiscale_2016}
Cui, Z., Chen, W., and Chen, Y.
\newblock Multi-scale convolutional neural networks for time series
  classification.
\newblock \emph{Computing Research Repository}, abs/1603.06995, 2016.
\newblock URL \url{http://arxiv.org/abs/1603.06995}.

\bibitem[de~Silva et~al.(2020)de~Silva, Champion, Quade, Loiseau, Kutz, and
  Brunton]{de_silva_pysindy_2020}
de~Silva, B.~M., Champion, K., Quade, M., Loiseau, J.-C., Kutz, J.~N., and
  Brunton, S.~L.
\newblock {PySINDy}: {A} {Python} package for the {Sparse} {Identification} of
  {Nonlinear} {Dynamics} from {Data}.
\newblock \emph{arXiv preprint arXiv:2004.08424}, 2020.
\newblock URL \url{http://arxiv.org/abs/2004.08424}.

\bibitem[Doya(1992)]{doya_bifurcations_1992}
Doya, K.
\newblock Bifurcations in the learning of recurrent neural networks.
\newblock In \emph{Proceedings of the 1992 {IEEE} {International} {Symposium}
  on {Circuits} and {Systems}}, 1992.
\newblock ISBN 978-0-7803-0593-9.
\newblock \doi{10.1109/ISCAS.1992.230622}.
\newblock URL \url{http://ieeexplore.ieee.org/document/230622/}.

\bibitem[Durstewitz(2009)]{durstewitz_implications_2009}
Durstewitz, D.
\newblock Implications of synaptic biophysics for recurrent network dynamics
  and active memory.
\newblock \emph{Neural Networks}, 22\penalty0 (8):\penalty0 1189--1200, 2009.
\newblock ISSN 08936080.
\newblock \doi{10.1016/j.neunet.2009.07.016}.
\newblock URL
  \url{https://linkinghub.elsevier.com/retrieve/pii/S0893608009001622}.

\bibitem[Durstewitz(2017)]{durstewitz_state_2017}
Durstewitz, D.
\newblock A state space approach for piecewise-linear recurrent neural networks
  for identifying computational dynamics from neural measurements.
\newblock \emph{PLoS Comput. Biol.}, 13\penalty0 (6):\penalty0 e1005542, 2017.
\newblock ISSN 1553-7358.
\newblock \doi{10.1371/journal.pcbi.1005542}.

\bibitem[Erichson et~al.(2021)Erichson, Azencot, Queiruga, Hodgkinson, and
  Mahoney]{erichson_lipschitz_2021}
Erichson, N.~B., Azencot, O., Queiruga, A., Hodgkinson, L., and Mahoney, M.~W.
\newblock Lipschitz recurrent neural networks.
\newblock \emph{International Conference on Learning Representations}, 2021.
\newblock URL \url{http://arxiv.org/abs/2006.12070}.

\bibitem[Feigin(1995)]{feigin_increasingly_1995}
Feigin, M.~I.
\newblock The increasingly complex structure of the bifurcation tree of a
  piecewise-smooth system.
\newblock \emph{Journal of Applied Mathematics and Mechanics}, 59\penalty0
  (6):\penalty0 853--863, 1995.
\newblock ISSN 0021-8928.
\newblock \doi{10.1016/0021-8928(95)00118-2}.
\newblock URL
  \url{https://www.sciencedirect.com/science/article/pii/0021892895001182}.

\bibitem[Fraccaro et~al.(2016)Fraccaro, Sønderby, Paquet, and
  Winther]{fraccaro_sequential_2016}
Fraccaro, M., Sønderby, S.~K., Paquet, U., and Winther, O.
\newblock Sequential neural models with stochastic layers.
\newblock \emph{{arXiv}:1605.07571 [cs, stat]}, 2016.
\newblock URL \url{http://arxiv.org/abs/1605.07571}.

\bibitem[Ghahramani \& Roweis(1998)Ghahramani and
  Roweis]{ghahramani_learning_1998}
Ghahramani, Z. and Roweis, S.~T.
\newblock Learning nonlinear dynamical systems using an {EM} algorithm.
\newblock In \emph{Advances in {Neural} {Information} {Processing} {Systems}
  11}, 1998.

\bibitem[Girin et~al.(2020)Girin, Leglaive, Bie, Diard, Hueber, and
  Alameda-Pineda]{girin_dynamical_2020}
Girin, L., Leglaive, S., Bie, X., Diard, J., Hueber, T., and Alameda-Pineda, X.
\newblock Dynamical {Variational} {Autoencoders}: {A} {Comprehensive} {Review}.
\newblock \emph{arXiv preprint arXiv:2008.12595}, 2020.
\newblock URL \url{http://arxiv.org/abs/2008.12595}.

\bibitem[Greydanus et~al.(2019)Greydanus, Dzamba, and
  Yosinski]{greydanus_hamiltonian_2019}
Greydanus, S., Dzamba, M., and Yosinski, J.
\newblock Hamiltonian {Neural} {Networks}.
\newblock In \emph{Advances in {Neural} {Information} {Processing} {Systems}
  32}, 2019.
\newblock URL \url{http://arxiv.org/abs/1906.01563}.

\bibitem[Hastie et~al.(2009)Hastie, Tibshirani, and
  Friedman]{hastie_elements_2009}
Hastie, T., Tibshirani, R., and Friedman, J.
\newblock \emph{The elements of statistical learning: data mining, inference,
  and prediction}.
\newblock Springer Science \& Business Media, 2009.

\bibitem[Heim et~al.(2019)Heim, Šmídl, and Pevný]{heim_rodent_2019}
Heim, N., Šmídl, V., and Pevný, T.
\newblock Rodent: {Relevance} determination in differential equations.
\newblock \emph{arXiv preprint arXiv:1912.00656}, 2019.
\newblock URL \url{http://arxiv.org/abs/1912.00656}.

\bibitem[Hershey \& Olsen(2007)Hershey and Olsen]{Hershey2007ApproximatingTK}
Hershey, J.~R. and Olsen, P.~A.
\newblock Approximating the kullback leibler divergence between gaussian
  mixture models.
\newblock \emph{2007 IEEE International Conference on Acoustics, Speech and
  Signal Processing - ICASSP '07}, 4:\penalty0 IV--317--IV--320, 2007.

\bibitem[Hochreiter \& Schmidhuber(1997)Hochreiter and
  Schmidhuber]{hochreiter1997long}
Hochreiter, S. and Schmidhuber, J.
\newblock Long short-term memory.
\newblock \emph{Neural computation}, 9\penalty0 (8):\penalty0 1735--1780, 1997.

\bibitem[Hogan et~al.(2007)Hogan, Higham, and Griffin]{hogan_dynamics_2007}
Hogan, S.~J., Higham, L., and Griffin, T. C.~L.
\newblock Dynamics of a piecewise linear map with a gap.
\newblock \emph{Proceedings of the Royal Society A: Mathematical, Physical and
  Engineering Sciences}, 463\penalty0 (2077):\penalty0 49--65, 2007.
\newblock \doi{10.1098/rspa.2006.1735}.
\newblock URL
  \url{https://royalsocietypublishing.org/doi/abs/10.1098/rspa.2006.1735}.

\bibitem[Häusser et~al.(2000)Häusser, Spruston, and
  Stuart]{hausser_diversity_2000}
Häusser, M., Spruston, N., and Stuart, G.~J.
\newblock Diversity and {Dynamics} of {Dendritic} {Signaling}.
\newblock \emph{Science}, 290\penalty0 (5492):\penalty0 739--744, 2000.
\newblock ISSN 0036-8075, 1095-9203.
\newblock \doi{10.1126/science.290.5492.739}.
\newblock URL \url{https://science.sciencemag.org/content/290/5492/739}.

\bibitem[Jaeger \& Haas(2004)Jaeger and Haas]{jaeger_harnessing_2004}
Jaeger, H. and Haas, H.
\newblock Harnessing {Nonlinearity}: {Predicting} {Chaotic} {Systems} and
  {Saving} {Energy} in {Wireless} {Communication}.
\newblock \emph{Science}, 304\penalty0 (5667):\penalty0 78--80, 2004.
\newblock ISSN 0036-8075, 1095-9203.
\newblock \doi{10.1126/science.1091277}.
\newblock URL \url{https://science.sciencemag.org/content/304/5667/78}.

\bibitem[Kag et~al.(2020)Kag, Zhang, and Saligrama]{kag_rnns_2020}
Kag, A., Zhang, Z., and Saligrama, V.
\newblock Rnns incrementally evolving on an equilibrium manifold: A panacea for
  vanishing and exploding gradients?
\newblock \emph{International Conference on Learning Representations}, pp.\
  ~24, 2020.

\bibitem[Kantz \& Schreiber(2004)Kantz and Schreiber]{kantz_nonlinear_2004}
Kantz, H. and Schreiber, T.
\newblock \emph{Nonlinear time series analysis}, volume~7.
\newblock Cambridge university press, 2004.

\bibitem[Karl et~al.(2017)Karl, Soelch, Bayer, and van~der
  Smagt]{karl_deep_2017}
Karl, M., Soelch, M., Bayer, J., and van~der Smagt, P.
\newblock Deep {Variational} {Bayes} {Filters}: {Unsupervised} {Learning} of
  {State} {Space} {Models} from {Raw} {Data}.
\newblock In \emph{Proceedings of the 5th {International} {Conference} on
  {Learning} {Representations}}, 2017.
\newblock URL \url{http://arxiv.org/abs/1605.06432}.

\bibitem[Kerg et~al.(2019)Kerg, Goyette, Touzel, Gidel, Vorontsov, Bengio, and
  Lajoie]{kerg_non-normal_2019}
Kerg, G., Goyette, K., Touzel, M.~P., Gidel, G., Vorontsov, E., Bengio, Y., and
  Lajoie, G.
\newblock Non-normal recurrent neural network ({nnRNN}): learning long time
  dependencies while improving expressivity with transient dynamics.
\newblock \emph{Proceedings of the 33rd International Conference on Neural
  Information Processing Systems}, pp.\ ~11, 2019.

\bibitem[Kingma \& Ba(2015)Kingma and Ba]{kingma_adam_2015}
Kingma, D.~P. and Ba, J.
\newblock Adam: {A} {Method} for {Stochastic} {Optimization}.
\newblock In \emph{Proceedings of the 3rd {International} {Conference} on
  {Learning} {Representations}}, 2015.
\newblock URL \url{http://arxiv.org/abs/1412.6980}.

\bibitem[Kingma \& Welling(2014)Kingma and Welling]{kingma_auto-encoding_2014}
Kingma, D.~P. and Welling, M.
\newblock Auto-{Encoding} {Variational} {Bayes}.
\newblock In \emph{Proceedings of the 2nd {International} {Conference} on
  {Learning} {Representations}}, 2014.
\newblock URL \url{http://arxiv.org/abs/1312.6114}.

\bibitem[Koch(2004)]{koch_biophysics_2004}
Koch, C.
\newblock \emph{Biophysics of computation: information processing in single
  neurons}.
\newblock Oxford university press, 2004.

\bibitem[Koppe et~al.(2019)Koppe, Toutounji, Kirsch, Lis, and
  Durstewitz]{koppe_identifying_2019}
Koppe, G., Toutounji, H., Kirsch, P., Lis, S., and Durstewitz, D.
\newblock Identifying nonlinear dynamical systems via generative recurrent
  neural networks with applications to {fMRI}.
\newblock \emph{PLOS Computational Biology}, 15\penalty0 (8):\penalty0
  e1007263, 2019.
\newblock ISSN 1553-7358.
\newblock \doi{10.1371/journal.pcbi.1007263}.
\newblock URL
  \url{https://journals.plos.org/ploscompbiol/article?id=10.1371/journal.pcbi.1007263}.

\bibitem[Krishnan et~al.(2017)Krishnan, Shalit, and
  Sontag]{krishnan_structured_2017}
Krishnan, R.~G., Shalit, U., and Sontag, D.
\newblock Structured {Inference} {Networks} for {Nonlinear} {State} {Space}
  {Models}.
\newblock In \emph{31st {AAAI} {Conference} on {Artificial} {Intelligence}},
  2017.
\newblock URL \url{http://arxiv.org/abs/1609.09869}.

\bibitem[Landau \& Sompolinsky(2018)Landau and
  Sompolinsky]{landau_coherent_2018}
Landau, I.~D. and Sompolinsky, H.
\newblock Coherent chaos in a recurrent neural network with structured
  connectivity.
\newblock \emph{PLoS Comput Biol}, 14\penalty0 (12):\penalty0 e1006309, 2018.
\newblock ISSN 1553-7358.
\newblock \doi{10.1371/journal.pcbi.1006309}.
\newblock URL \url{https://dx.plos.org/10.1371/journal.pcbi.1006309}.

\bibitem[Lorenz(1963)]{lorenz_deterministic_1963}
Lorenz, E.~N.
\newblock Deterministic nonperiodic flow.
\newblock \emph{Journal of atmospheric sciences}, 20\penalty0 (2):\penalty0
  130--141, 1963.

\bibitem[Lorenz(1996)]{lorenz_predictability_1996}
Lorenz, E.~N.
\newblock Predictability: {A} problem partly solved.
\newblock In \emph{Proc. {Seminar} on predictability}, volume~1, 1996.

\bibitem[Maheswaranathan et~al.(2019{\natexlab{a}})Maheswaranathan, Williams,
  Golub, Ganguli, and Sussillo]{maheswaranathan_reverse_2019}
Maheswaranathan, N., Williams, A.~H., Golub, M.~D., Ganguli, S., and Sussillo,
  D.
\newblock Reverse engineering recurrent networks for sentiment classification
  reveals line attractor dynamics.
\newblock In \emph{Advances in neural information processing systems 32},
  2019{\natexlab{a}}.
\newblock URL \url{https://www.ncbi.nlm.nih.gov/pmc/articles/PMC7416638/}.

\bibitem[Maheswaranathan et~al.(2019{\natexlab{b}})Maheswaranathan, Williams,
  Golub, Ganguli, and Sussillo]{maheswaranathan_universality_2019}
Maheswaranathan, N., Williams, A.~H., Golub, M.~D., Ganguli, S., and Sussillo,
  D.
\newblock Universality and individuality in neural dynamics across large
  populations of recurrent networks.
\newblock In \emph{Advances in {Neural} {Information} {Processing} {Systems}
  32}, 2019{\natexlab{b}}.
\newblock URL \url{https://www.ncbi.nlm.nih.gov/pmc/articles/PMC7416639/}.

\bibitem[Mel(1993)]{mel_synaptic_1993}
Mel, B.~W.
\newblock Synaptic integration in an excitable dendritic tree.
\newblock \emph{Journal of Neurophysiology}, 70\penalty0 (3):\penalty0
  1086--1101, 1993.
\newblock ISSN 0022-3077, 1522-1598.
\newblock \doi{10.1152/jn.1993.70.3.1086}.
\newblock URL \url{https://www.physiology.org/doi/10.1152/jn.1993.70.3.1086}.

\bibitem[Mel(1994)]{mel_information_1994}
Mel, B.~W.
\newblock Information {Processing} in {Dendritic} {Trees}.
\newblock \emph{Neural Computation}, 6\penalty0 (6):\penalty0 1031--1085, 1994.
\newblock ISSN 0899-7667.
\newblock \doi{10.1162/neco.1994.6.6.1031}.
\newblock URL \url{https://doi.org/10.1162/neco.1994.6.6.1031}.

\bibitem[Mel(1999)]{mel_why_1999}
Mel, B.~W.
\newblock Why {Have} {Dendrites}? {A} {Computational} {Perspective}.
\newblock In \emph{Dendrites}. Oxford University Press, 1999.
\newblock ISBN 978-0-19-172420-6.
\newblock URL
  \url{https://oxford.universitypressscholarship.com/view/10.1093/acprof:oso/9780198566564.001.0001/acprof-9780198566564-chapter-016}.

\bibitem[Mohajerin \& Waslander(2018)Mohajerin and
  Waslander]{mohajerin_multi-step_2018}
Mohajerin, N. and Waslander, S.~L.
\newblock Multi-step prediction of dynamic systems with recurrent neural
  networks.
\newblock \emph{{arXiv}:1806.00526 [cs]}, 2018.
\newblock URL \url{http://arxiv.org/abs/1806.00526}.

\bibitem[Monfared \& Durstewitz(2020{\natexlab{a}})Monfared and
  Durstewitz]{monfared_existence_2020}
Monfared, Z. and Durstewitz, D.
\newblock Existence of n-cycles and border-collision bifurcations in
  piecewise-linear continuous maps with applications to recurrent neural
  networks.
\newblock \emph{Nonlinear Dyn}, 101\penalty0 (2):\penalty0 1037--1052,
  2020{\natexlab{a}}.
\newblock ISSN 1573-269X.
\newblock \doi{10.1007/s11071-020-05841-x}.
\newblock URL \url{https://doi.org/10.1007/s11071-020-05841-x}.

\bibitem[Monfared \& Durstewitz(2020{\natexlab{b}})Monfared and
  Durstewitz]{monfared_transformation_2020}
Monfared, Z. and Durstewitz, D.
\newblock Transformation of {ReLU}-based recurrent neural networks from
  discrete-time to continuous-time.
\newblock In \emph{Proceedings of the 37th {International} {Conference} on
  {Machine} {Learning}}, 2020{\natexlab{b}}.
\newblock URL \url{http://proceedings.mlr.press/v119/monfared20a.html}.

\bibitem[Monfared et~al.(2021)Monfared, Mikhaeil, and
  Durstewitz]{monfared_2022}
Monfared, Z., Mikhaeil, J.~M., and Durstewitz, D.
\newblock How to train rnns on chaotic data?, 2021.
\newblock URL \url{https://arxiv.org/abs/2110.07238}.

\bibitem[Norcliffe et~al.(2021)Norcliffe, Bodnar, Day, Moss, and
  Liò]{norcliffe_neural_2021}
Norcliffe, A., Bodnar, C., Day, B., Moss, J., and Liò, P.
\newblock Neural {ODE} {Processes}.
\newblock In \emph{Proceedings of the 9th {International} {Conference} on
  {Learning} {Representations}}, 2021.
\newblock URL \url{https://openreview.net/forum?id=27acGyyI1BY}.

\bibitem[Pandarinath et~al.(2018)Pandarinath, O’Shea, Collins, Jozefowicz,
  Stavisky, Kao, Trautmann, Kaufman, Ryu, Hochberg, Henderson, Shenoy, Abbott,
  and Sussillo]{pandarinath_inferring_2018}
Pandarinath, C., O’Shea, D.~J., Collins, J., Jozefowicz, R., Stavisky, S.~D.,
  Kao, J.~C., Trautmann, E.~M., Kaufman, M.~T., Ryu, S.~I., Hochberg, L.~R.,
  Henderson, J.~M., Shenoy, K.~V., Abbott, L.~F., and Sussillo, D.
\newblock Inferring single-trial neural population dynamics using sequential
  auto-encoders.
\newblock \emph{Nature Methods}, 15\penalty0 (10):\penalty0 805--815, 2018.
\newblock ISSN 1548-7105.
\newblock \doi{10.1038/s41592-018-0109-9}.
\newblock URL \url{https://www.nature.com/articles/s41592-018-0109-9}.

\bibitem[Pascanu et~al.(2013)Pascanu, Mikolov, and
  Bengio]{pascanu_difficulty_2013}
Pascanu, R., Mikolov, T., and Bengio, Y.
\newblock On the difficulty of training recurrent neural networks.
\newblock In \emph{Proceedings of the 30th {International} {Conference} on
  {Machine} {Learning}}, 2013.
\newblock URL \url{http://proceedings.mlr.press/v28/pascanu13.html}.

\bibitem[Pathak et~al.(2018)Pathak, Hunt, Girvan, Lu, and
  Ott]{pathak_model-free_2018}
Pathak, J., Hunt, B., Girvan, M., Lu, Z., and Ott, E.
\newblock Model-{Free} {Prediction} of {Large} {Spatiotemporally} {Chaotic}
  {Systems} from {Data}: {A} {Reservoir} {Computing} {Approach}.
\newblock \emph{Phys. Rev. Lett.}, 120\penalty0 (2):\penalty0 024102, 2018.
\newblock ISSN 0031-9007, 1079-7114.
\newblock \doi{10.1103/PhysRevLett.120.024102}.
\newblock URL \url{https://link.aps.org/doi/10.1103/PhysRevLett.120.024102}.

\bibitem[Patra(2018)]{patra_multiple_2018}
Patra, M.
\newblock Multiple {Attractor} {Bifurcation} in {Three}-{Dimensional}
  {Piecewise} {Linear} {Maps}.
\newblock \emph{Int. J. Bifurcation Chaos}, 28\penalty0 (10):\penalty0 1830032,
  2018.
\newblock ISSN 0218-1274.
\newblock \doi{10.1142/S021812741830032X}.
\newblock URL
  \url{https://www.worldscientific.com/doi/abs/10.1142/S021812741830032X}.

\bibitem[Pearlmutter(1990)]{pearlmutter_1990}
Pearlmutter, B.
\newblock Dynamic recurrent neural networks, 1990.
\newblock URL
  \url{https://kilthub.cmu.edu/articles/journal_contribution/Dynamic_recurrent_neural_networks/6605018/1}.

\bibitem[Poirazi \& Papoutsi(2020)Poirazi and
  Papoutsi]{poirazi_illuminating_2020}
Poirazi, P. and Papoutsi, A.
\newblock Illuminating dendritic function with computational models.
\newblock \emph{Nat Rev Neurosci}, 21\penalty0 (6):\penalty0 303--321, 2020.
\newblock ISSN 1471-003X, 1471-0048.
\newblock \doi{10.1038/s41583-020-0301-7}.
\newblock URL \url{http://www.nature.com/articles/s41583-020-0301-7}.

\bibitem[Poirazi et~al.(2003)Poirazi, Brannon, and Mel]{poirazi_pyramidal_2003}
Poirazi, P., Brannon, T., and Mel, B.~W.
\newblock Pyramidal neuron as two-layer neural network.
\newblock \emph{Neuron}, 37\penalty0 (6):\penalty0 989--999, 2003.

\bibitem[Raissi(2018)]{raissi_deep_2018}
Raissi, M.
\newblock Deep {Hidden} {Physics} {Models}: {Deep} {Learning} of {Nonlinear}
  {Partial} {Differential} {Equations}.
\newblock \emph{Journal of Machine Learning Research}, 19\penalty0
  (25):\penalty0 1--24, 2018.
\newblock URL \url{http://jmlr.org/papers/v19/18-046.html}.

\bibitem[Raissi et~al.(2018)Raissi, Perdikaris, and
  Karniadakis]{raissi_multistep_2018}
Raissi, M., Perdikaris, P., and Karniadakis, G.~E.
\newblock Multistep neural networks for data-driven discovery of nonlinear
  dynamical systems.
\newblock \emph{{arXiv}:1801.01236 [nlin, physics:physics, stat]}, 2018.
\newblock URL \url{http://arxiv.org/abs/1801.01236}.

\bibitem[Reiss et~al.(2019)Reiss, Indlekofer, Schmidt, and
  Van~Laerhoven]{Reiss_2019}
Reiss, A., Indlekofer, I., Schmidt, P., and Van~Laerhoven, K.
\newblock Deep ppg: Large-scale heart rate estimation with convolutional neural
  networks.
\newblock \emph{Sensors}, 19\penalty0 (14), 2019.
\newblock ISSN 1424-8220.
\newblock \doi{10.3390/s19143079}.
\newblock URL \url{https://www.mdpi.com/1424-8220/19/14/3079}.

\bibitem[Rezende \& Mohamed(2015)Rezende and Mohamed]{rezende_variational_2015}
Rezende, D. and Mohamed, S.
\newblock Variational {Inference} with {Normalizing} {Flows}.
\newblock In \emph{Proceedings of the 32nd {International} {Conference} on
  {Machine} {Learning}}, 2015.
\newblock URL \url{http://proceedings.mlr.press/v37/rezende15.html}.

\bibitem[Rezende et~al.(2014)Rezende, Mohamed, and
  Wierstra]{rezende_stochastic_2014}
Rezende, D.~J., Mohamed, S., and Wierstra, D.
\newblock Stochastic {Backpropagation} and {Approximate} {Inference} in {Deep}
  {Generative} {Models}.
\newblock In \emph{Proceedings of the 31st {International} {Conference} on
  {Machine} {Learning}}, 2014.
\newblock URL \url{http://arxiv.org/abs/1401.4082}.

\bibitem[Rudy et~al.(2017)Rudy, Brunton, Proctor, and
  Kutz]{rudy_data-driven_2017}
Rudy, S.~H., Brunton, S.~L., Proctor, J.~L., and Kutz, J.~N.
\newblock Data-driven discovery of partial differential equations.
\newblock \emph{Science Advances}, 3\penalty0 (4):\penalty0 e1602614, 2017.
\newblock ISSN 2375-2548.
\newblock \doi{10.1126/sciadv.1602614}.
\newblock URL \url{https://advances.sciencemag.org/content/3/4/e1602614}.

\bibitem[Rumelhart et~al.(1986)Rumelhart, Hinton, and
  Williams]{Rumelhart_1986_learning}
Rumelhart, D.~E., Hinton, G.~E., and Williams, R.~J.
\newblock \emph{Learning Internal Representations by Error Propagation},
  volume~1, pp.\  318--362.
\newblock Bradford Books, Cambridge MA, 1986.

\bibitem[Rusch \& Mishra(2021)Rusch and Mishra]{rusch_coupled_2021}
Rusch, T.~K. and Mishra, S.
\newblock Coupled oscillatory recurrent neural network ({coRNN}): An accurate
  and (gradient) stable architecture for learning long time dependencies.
\newblock In \emph{International Conference on Learning Representations}, 2021.
\newblock URL \url{https://openreview.net/forum?id=F3s69XzWOia}.

\bibitem[Rusch et~al.(2022)Rusch, Mishra, Erichson, and Mahoney]{rusch2022long}
Rusch, T.~K., Mishra, S., Erichson, N.~B., and Mahoney, M.~W.
\newblock Long expressive memory for sequence modeling.
\newblock In \emph{International Conference on Learning Representations}, 2022.
\newblock URL \url{https://openreview.net/forum?id=vwj6aUeocyf}.

\bibitem[Sauer et~al.(1991)Sauer, Yorke, and Casdagli]{sauer_embedology_1991}
Sauer, T., Yorke, J.~A., and Casdagli, M.
\newblock Embedology.
\newblock \emph{Journal of statistical Physics}, 65\penalty0 (3):\penalty0
  579--616, 1991.

\bibitem[Saxe et~al.(2014)Saxe, McClelland, and Ganguli]{saxe_exact_2014}
Saxe, A.~M., McClelland, J.~L., and Ganguli, S.
\newblock Exact solutions to the nonlinear dynamics of learning in deep linear
  neural networks.
\newblock In \emph{Proceedings of the 2nd {International} {Conference} on
  {Learning} {Representations}}, 2014.
\newblock URL \url{http://arxiv.org/abs/1312.6120}.

\bibitem[Schalk et~al.(2000)Schalk, {McFarland}, Hinterberger, Birbaumer, and
  Wolpaw]{schalk_bci2000_2004}
Schalk, G., {McFarland}, D.~J., Hinterberger, T., Birbaumer, N., and Wolpaw,
  J.~R.
\newblock {BCI}2000: a general-purpose brain-computer interface ({BCI}) system.
\newblock \emph{{IEEE} transactions on bio-medical engineering}, 51\penalty0
  (6):\penalty0 1034--1043, 2000.
\newblock ISSN 0018-9294.
\newblock \doi{10.1109/TBME.2004.827072}.

\bibitem[Schiller et~al.(2000)Schiller, Major, Koester, and
  Schiller]{schiller_nmda_2000}
Schiller, J., Major, G., Koester, H.~J., and Schiller, Y.
\newblock {NMDA} spikes in basal dendrites of cortical pyramidal neurons.
\newblock \emph{Nature}, 404\penalty0 (6775):\penalty0 285--289, 2000.
\newblock ISSN 1476-4687.
\newblock \doi{10.1038/35005094}.
\newblock URL \url{https://www.nature.com/articles/35005094}.

\bibitem[Schmidt et~al.(2021)Schmidt, Koppe, Monfared, Beutelspacher, and
  Durstewitz]{schmidt_identifying_2021}
Schmidt, D., Koppe, G., Monfared, Z., Beutelspacher, M., and Durstewitz, D.
\newblock Identifying nonlinear dynamical systems with multiple time scales and
  long-range dependencies.
\newblock In \emph{Proceedings of the 9th {International} {Conference} on
  {Learning} {Representations}}, 2021.
\newblock URL \url{http://arxiv.org/abs/1910.03471}.

\bibitem[Shalova \& Oseledets(2020{\natexlab{a}})Shalova and
  Oseledets]{shalova_deep_2020}
Shalova, A. and Oseledets, I.
\newblock Deep {Representation} {Learning} for {Dynamical} {Systems}
  {Modeling}.
\newblock \emph{arXiv preprint arXiv:2002.05111}, 2020{\natexlab{a}}.

\bibitem[Shalova \& Oseledets(2020{\natexlab{b}})Shalova and
  Oseledets]{shalova_tensorized_2020}
Shalova, A. and Oseledets, I.
\newblock Tensorized {Transformer} for {Dynamical} {Systems} {Modeling}.
\newblock \emph{arXiv preprint arXiv:2006.03445}, 2020{\natexlab{b}}.

\bibitem[Stemmler \& Koch(1999)Stemmler and Koch]{stemmler_how_1999}
Stemmler, M. and Koch, C.
\newblock How voltage-dependent conductances can adapt to maximize the
  information encoded by neuronal firing rate.
\newblock \emph{Nature Neuroscience}, 2\penalty0 (6):\penalty0 521--527, 1999.
\newblock ISSN 1546-1726.
\newblock \doi{10.1038/9173}.
\newblock URL \url{https://www.nature.com/articles/nn0699_521}.

\bibitem[Storace \& De~Feo(2004)Storace and
  De~Feo]{storace_piecewise-linear_2004}
Storace, M. and De~Feo, O.
\newblock Piecewise-linear approximation of nonlinear dynamical systems.
\newblock \emph{IEEE Transactions on Circuits and Systems I: Regular Papers},
  51\penalty0 (4):\penalty0 830--842, April 2004.
\newblock ISSN 1558-0806.
\newblock \doi{10.1109/TCSI.2004.823664}.
\newblock Conference Name: IEEE Transactions on Circuits and Systems I: Regular
  Papers.

\bibitem[Strauss(2020)]{strauss_augmenting_2020}
Strauss, R.
\newblock Augmenting neural differential equations to model unknown dynamical
  systems with incomplete state information.
\newblock \emph{{arXiv}:2008.08226 [physics, q-bio]}, 2020.
\newblock URL \url{http://arxiv.org/abs/2008.08226}.

\bibitem[Takens(1981)]{takens_detecting_1981}
Takens, F.
\newblock Detecting strange attractors in turbulence.
\newblock In \emph{Dynamical {Systems} and {Turbulence}, {Warwick} 1980},
  volume 898, pp.\  366--381. Springer, 1981.
\newblock ISBN 978-3-540-11171-9 978-3-540-38945-3.
\newblock URL \url{http://link.springer.com/10.1007/BFb0091924}.

\bibitem[Talathi \& Vartak(2016)Talathi and Vartak]{talathi_improving_2016}
Talathi, S.~S. and Vartak, A.
\newblock Improving performance of recurrent neural network with relu
  nonlinearity.
\newblock In \emph{Proceedings of the 4th {International} {Conference} on
  {Learning} {Representations}}, 2016.
\newblock URL \url{http://arxiv.org/abs/1511.03771}.

\bibitem[Trischler \& D’Eleuterio(2016)Trischler and
  D’Eleuterio]{trischler_synthesis_2016}
Trischler, A.~P. and D’Eleuterio, G.~M.
\newblock Synthesis of recurrent neural networks for dynamical system
  simulation.
\newblock \emph{Neural Networks}, 80:\penalty0 67--78, 2016.
\newblock ISSN 08936080.
\newblock \doi{10.1016/j.neunet.2016.04.001}.
\newblock URL
  \url{https://linkinghub.elsevier.com/retrieve/pii/S0893608016300314}.

\bibitem[Vlachas et~al.(2018)Vlachas, Byeon, Wan, Sapsis, and
  Koumoutsakos]{vlachas_data-driven_2018}
Vlachas, P.~R., Byeon, W., Wan, Z.~Y., Sapsis, T.~P., and Koumoutsakos, P.
\newblock Data-driven forecasting of high-dimensional chaotic systems with long
  short-term memory networks.
\newblock \emph{Proc. R. Soc. A.}, 474\penalty0 (2213):\penalty0 20170844,
  2018.
\newblock ISSN 1364-5021, 1471-2946.
\newblock \doi{10.1098/rspa.2017.0844}.
\newblock URL
  \url{https://royalsocietypublishing.org/doi/10.1098/rspa.2017.0844}.

\bibitem[Wahba(1990)]{wahba_spline_1990}
Wahba, G.
\newblock \emph{Spline models for observational data}.
\newblock SIAM, 1990.

\bibitem[Williams \& Zipser(1989)Williams and Zipser]{williams_learning_1989}
Williams, R.~J. and Zipser, D.
\newblock A learning algorithm for continually running fully recurrent neural
  networks.
\newblock \emph{Neural Computation}, 1\penalty0 (2):\penalty0 270--280, June
  1989.
\newblock ISSN 0899-7667, 1530-888X.
\newblock \doi{10.1162/neco.1989.1.2.270}.
\newblock URL \url{https://direct.mit.edu/neco/article/1/2/270-280/5490}.

\bibitem[Wilson \& Cowan(1972)Wilson and Cowan]{wilson_excitatory_1972}
Wilson, H.~R. and Cowan, J.~D.
\newblock Excitatory and inhibitory interactions in localized populations of
  model neurons.
\newblock \emph{Biophysical Journal}, 12\penalty0 (1):\penalty0 1--24, 1972.
\newblock ISSN 0006-3495.
\newblock \doi{10.1016/S0006-3495(72)86068-5}.

\bibitem[Yeung et~al.(2017)Yeung, Kundu, and Hodas]{yeung_learning_2017}
Yeung, E., Kundu, S., and Hodas, N.
\newblock Learning {Deep} {Neural} {Network} {Representations} for {Koopman}
  {Operators} of {Nonlinear} {Dynamical} {Systems}.
\newblock \emph{arXiv preprint arXiv:1708.06850}, 2017.
\newblock \doi{10.23919/ACC.2019.8815339}.

\bibitem[Yin et~al.(2021)Yin, Guen, Dona, Bezenac, Ayed, Thome, and
  Gallinari]{yin_augmenting_2021}
Yin, Y., Guen, V.~L., Dona, J., Bezenac, E.~d., Ayed, I., Thome, N., and
  Gallinari, P.
\newblock Augmenting {Physical} {Models} with {Deep} {Networks} for {Complex}
  {Dynamics} {Forecasting}.
\newblock In \emph{Proceedings of the 9th {International} {Conference} on
  {Learning} {Representations}}, 2021.
\newblock URL \url{https://openreview.net/forum?id=kmG8vRXTFv}.

\bibitem[Zheng et~al.(2017)Zheng, Zaheer, Ahmed, Wang, Xing, and
  Smola]{zheng_state_2017}
Zheng, X., Zaheer, M., Ahmed, A., Wang, Y., Xing, E.~P., and Smola, A.~J.
\newblock State {Space} {LSTM} {Models} with {Particle} {MCMC} {Inference}.
\newblock \emph{arXiv preprint arXiv:1711.11179}, 2017.
\newblock URL \url{https://arxiv.org/abs/1711.11179}.

\bibitem[Zhu et~al.(2021)Zhu, Guo, and Lin]{zhu_neural_2021}
Zhu, Q., Guo, Y., and Lin, W.
\newblock Neural {Delay} {Differential} {Equations}.
\newblock In \emph{Proceedings of the 9th {International} {Conference} on
  {Learning} {Representations}}, 2021.
\newblock URL \url{https://openreview.net/forum?id=Q1jmmQz72M2}.

\end{thebibliography}
\bibliographystyle{stylefiles/icml2022}



\section*{}
\newpage

\newpage
\onecolumn
\beginsupplement
\section{Appendix}

\subsection{Further Methodological Details}

    \paragraph{Manifold Attractor Regularization}\label{sec:supp:reg}
	As proposed in \citet{schmidt_identifying_2021}, to encourage the discovery of long-term dependencies and slow time scales in the data, a subset of $M_\mathrm{reg} \leq M$ states was regularized by adding the following term to the ELBO for the VI approach:
	\begin{equation} \label{eq:supp:MAR}
		\mathcal{L}_{\mathrm{reg}}=\lambda \left(
			\sum_{i=1}^{M_{\mathrm{reg}}} (A_{ii}-1)^2+\sum_{i=1}^{M_{\mathrm{reg}}}\sum_{j\neq i}^{M} (W_{ij})^2+\sum_{i=1}^{M_{\mathrm{reg}}} h_{i}^2
		\right).
	\end{equation}
	This regularization pushes the regularized subset of states toward a continuous set of marginally stable fixed points that tends to form an attracting manifold in the full state space, which supports the learning of systems with widely differing time scales, such as the bursting neuron model (cf. Sec. \ref{sec:experiments}). 
	During training with VI, the ELBO was divided by the number of time steps $T$ of a given batch to put it on equal grounds with the regularization term.
	Regularization settings used are summarized in Table \ref{tab:reg} along other hyper-parameter settings.
	
	\begin{figure}[!htb]
    \centering
	\includegraphics[width=0.99\linewidth]{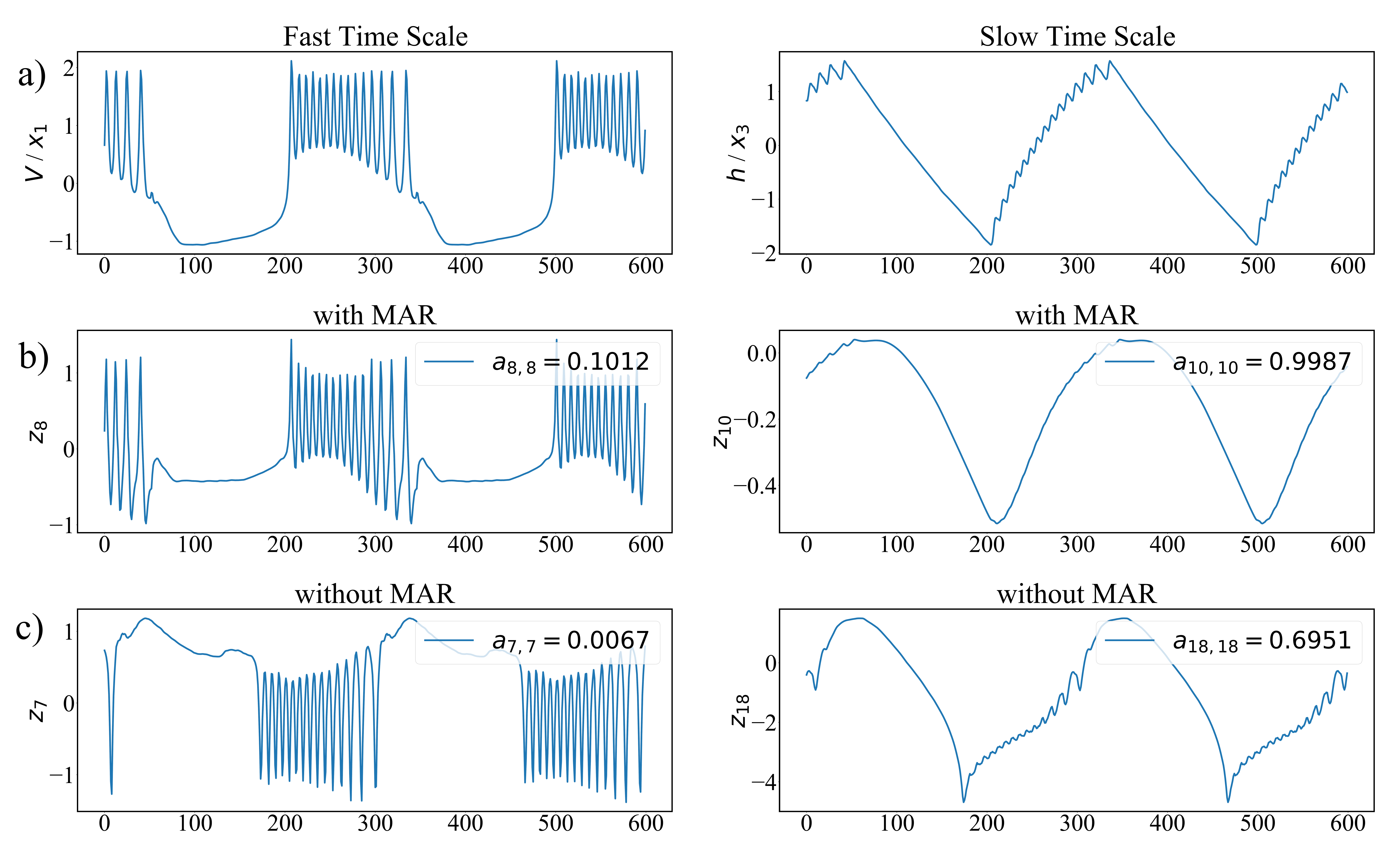}
	\caption{Different latent states capture different time scales of the reconstructed DS. (a) Simulated time series of the first ($V/ x_1$) and third ($h/ x_3$) system variables from a reconstruction (\ourmethodname\ trained with BPTT, $M=20, B=20, \tau=5, M_{\textrm{reg}}/M=0.5, \lambda=0.05$) of the bursting neuron model (Eq. \ref{eq:bursting_neuron}). (b) Time series of the two latent states with the lowest ($a_{8,8} \approx 0.1012$, left) and highest ($a_{10,10} \approx 0.9987$, right) time constants of a \ourmethodname\ trained with manifold attractor regularization (MAR), capturing the fast spiking and slow oscillatory time scales of the bursting neuron, respectively. (c) Same as (b) for a \ourmethodname\ trained without manifold attractor regularization. In this case the separation of time scales in the latent states is often less clear, although in this particular example the lowest ($a_{7,7} \approx 0.0067$, left) and highest ($a_{18,18} \approx 0.6951$, right) time constants still tend to capture the bursting neuron's fastest and slowest time scales, respectively.
	Similar observations were also made for other systems like the ECG and EEG data (not shown).
	}
	\label{fig:bn_time_scales}
    \end{figure}

\paragraph{BPTT-TF}\label{supp-TF-BPTT}
To train a deterministic version of the \ourmethodname, we employ BPTT with a scheduled version of TF \citep{williams_learning_1989,pearlmutter_1990}.
To do so, we choose an \say{identity-mapping} for the observation model $\hat\vx_t = \mathcal{I}\vz_t $,
where $\mathcal{I} \in \mathbb{R}^{N \times M}$ with $\mathcal{I}_{kk}=1$ if $k\leq N$ and zeroes everywhere else. This allows us to regularly replace latent states with observations to ``recalibrate'' the model and break trajectory divergence in case of chaotic dynamics.
Consider a time series $\{\vx_1,\vx_2,\cdots,\vx_T\}$ generated by a DS we want to reconstruct. At times $l\tau+1,\ l \in \mathbb{N}_0$, where $\tau \geq 1$ is the forcing interval, we replace the first $N$ latent states by observations $\hat z_{k,l\tau+1} = x_{k,l\tau+1},\  k\leq N $. The remaining latent states, $\hat z_{k,l\tau+1} = z_{k,l\tau+1}, \ k> N$, remain unaffected by the forcing. This means that we optimize the \ourmethodname\ such that a subspace of the latent space directly maps to the observed time series variables. The forcing interval $\tau$ is a hyperparameter, with optimal settings varying depending on the dataset. The settings we chose are summarized in Table \ref{tab:reg}.
With $\mathcal{F}=\{l\tau+1\}_{l \in \mathbb{N}_0}$, the \ourmethodname\ updates can then be written as
\begin{align}\label{eq:forcing}
  \vz_{t+1} =
  \begin{cases}
       \ourmethodname(\tilde\vz_{t}) & \text{if $t\in \mathcal{F}$} \\
       \ourmethodname(\vz_{t}) & \text{else} 
  \end{cases}.
\end{align}
The loss is calculated prior to the forcing, such that $\mathcal{L}_{t}= \lVert \vx_t - \mathcal{I}\vz_t\rVert_2^2$ for every time step. To improve performance, in some experiments we employed a mean-centered \ourmethodname\ (for details see next paragraph). In the evaluation phase, the trained \ourmethodname\  is simulated freely without any forcing. As the model is deterministic, the initial condition $z_1 = [\vx_{1}, \mL \vx_{1} ]\tran 
$ is estimated from the first data point $\vx_1$ with a matrix $\mL \in \mathbb{R}^{(M-N)\times N}$ which is jointly learned with the other model parameters.

\paragraph{Mean-Centered \ourmethodname}
Layer normalization has recently been developed as a way of significantly improving RNN training \citep{ba_layer_2016}. Here we adapt the idea of layer normalization to the piecewise-linear nature of our \ourmethodname. Instead of fully standardizing the latent states at every time step before applying the activation function, we only mean-center them:
\begin{align}
    \vz_t = \mA \vz_{t-1}+ \mW \phi\big(\mathcal{M}(\vz_{t-1})\big)+\vh_0,
\end{align}
where $\phi(\cdot)$ is given in \eqref{eq:basis_expansion} and $\mathcal{M}(\vz_{t-1}) = \vz_{t-1} - \mathbf{\mu}_{t-1} =\vz_{t-1} -\displaystyle \1 \frac{1}{M} \sum_{j=1}^M z_{j,t-1}$, where $\displaystyle \1 \in \mathbb{R}^M$ is a vector of ones. Note that this mean-centering is linear and can be rewritten as a matrix-multiplication
\begin{align}\nonumber
    \mathcal{M}(\vz_{t-1}) &= \vz_{t-1} - \bold{\mu}_{t-1} \\
    &=\frac{1}{M} \begin{pmatrix}
M-1 & -1 & \cdots & -1\\
-1 & M-1 & \cdots & -1 \\
\vdots &\vdots& \ddots  &\vdots \\
-1 & -1 & \cdots &  M-1
\end{pmatrix} \vz_{t-1} = \mM \vz_{t-1}.
\end{align}
As Remark \ref{remark-FPs-cylces-mcPLRNN} points out, all results about the tractability of the \ourmethodname\ also hold for the mean-centred \ourmethodname.

	\paragraph{State Clipping}\label{supp-state-clipping}

Since the ReLU function used in the \ourmethodname \ is non-saturating, states may diverge to infinity.
As Theorem \ref{pro-J-1} guarantees, there is a simple and natural way to construct a ``clipped'' \ourmethodname
\begin{align}\label{eq-clipped}
\vz_{t} \, =  \, \mA \vz_{t-1} + \mW \sum_{b=1}^B\alpha_b \big[ \max(0,\vz_{t-1}-\vh_b)-\max(0,\vz_{t-1}) \big]+\vh_0.
\end{align}
~\\
  Note that the results of Theorem \ref{pro-J-1} also hold true when the manifold attractor regularization is applied. This is detailed in Proposition \ref{pro-Z-3} further below.
\paragraph{Approximate Posterior for Variational Inference}\label{supp-approxposterior}

To estimate the true unknown posterior $p(\bm{z}|\bm{x})$, we make a Gaussian assumption for the approximate posterior  
$q_{\phi}(\bm{z}|\bm{x})=\mathcal{N}(\boldsymbol{\mu}_\phi(\bm{x}),\boldsymbol{\Sigma}_\phi(\bm{x}))$, where mean and covariance are functions of the observations. Without any simplifying assumptions, the number of parameters in $\boldsymbol{\Sigma}_\phi(\bm{x}) \in \mathbb{R}^{MT\times MT}$ would scale unacceptably with time series length $T$. We therefore made a mean field assumption and factorized $q_{\phi}(\bm{z}|\bm{x})$ across time. Specifically, a time-dependent mean $\boldsymbol{\mu}_{t,\phi}$ and covariance $\boldsymbol{\Sigma}_{t,\phi}$ were parameterized through stacked convolutional networks which take the observations $\{\vx_{t-w}...\vx_{t+w}\}$ as inputs, with $w$ given by the largest kernel size. The mean is given by a 4-layer CNN with decreasing kernel sizes ($41, 31, 21$ and $11$, respectively), with the last layer of the CNN feeding into the parameters of the approximate posterior. For the diagonal covariance, the observations are mapped directly onto the logarithms of the covariance through a single convolutional layer (with a kernel size of $41$) mapping onto the parameters of the approximate posterior.

The classical motivation behind using CNNs rests on the assumption that the data contains translationally invariant patterns, and that this allows the recognition model to embed potentially meaningful temporal context into the latent representation (see e.g. \citet{chen_multiscale_2016}). We note that while the mean-field approximation is computationally highly efficient, it makes potentially strongly simplifying assumptions \citep{Blei_variational_2017, bayer_mind_2021} that may limit the ability of the encoder model to approximate the true posterior.
Somewhat surprisingly, the BPTT+TF approach to model training clearly outperformed the more sophisticated VI approach. This could be rooted in suboptimal encoder models or in suboptimal sampling from the approximate posterior: While BPTT+TF assesses longer bits of trajectory during optimization, in VI single time-point samples are drawn and the temporal consistency is ensured only through the Kullback-Leibler term in the ELBO. Other more expressive yet still fast to compute encoder models, e.g., based on normalizing flows \citep{rezende_variational_2015}, may boost performance. 

\paragraph{Hyperparameter Settings}\label{sec:supp:hypers}

To train the \ourmethodname\ in the VI framework, Adam \citep{kingma_adam_2015}, with a batch size of $1000$ and learning rate of $10^{-3}$ was used as the optimizer. For the training with BPTT, we used the Adam optimizer with an initial learning rate of $10^{-3}$ that was iteratively reduced during training down to $10^{-5}$. For each epoch we randomly sampled sequences of length $T_{seq}=500$ (except for the Lorenz-63 runs, where $T_{seq}=200$ time steps were sufficient) from the total training data pool of each dataset, which are then fed into the reconstruction method 
in batches of size $16$. Parameters $\bm{A}$, $\bm{W}$ and $\bm{h}$ were initialized according to \citet{talathi_improving_2016}, while the $\{\alpha_b\}$ were initialized according to a uniform distribution in $\left[-B^{-0.5}, B^{-0.5}\right]$. Initial thresholds $\{\bm{h}_b\}$ also followed uniform distributions, but with ranges determined by the extent of the data, i.e. such that the whole data domain was covered. 

To find optimal hyper-parameters we performed a grid search within $\lambda \in \{0, 0.01,0.1,1,10\}$ (VI), $\tau \in \{1, 5, 10, 25, 50, 100\}$ (BPTT-TF), $M \in \{5,10,15,20,25,30,35,40,45,50,75,100\}$, and $B \in \{0,1,2,5,10,20,35,50\}$. Hyper-parameters chosen for the benchmarks in Sec. \ref{sec:experiments} are reported in Table \ref{tab:reg} below (note that these may not fully agree with the ranges initially scanned, as given above, since we attempted to adjust them further in order to approximately match the number of parameters among models in Table \ref{tab:benchmarks}).

\begin{table}[!htp]
\centering
    \caption{Hyperparameter settings for \ourmethodname\ VI/TF for all data sets from Sec. \ref{sec:experiments}. 
    }
    \label{tab:reg}
\begin{tabular}{ c c c c c c}

Dataset & M & B &$M_{\textrm{reg}}/M$ & $\lambda$ & $\tau$ \\ 
\hline
Lorenz-63 &  $22$ & $20$ & $1.0/-$ & $1.0/-$ & $-/25$ \\ 
Lorenz-96 &  $42/50$ & $50/30$  & $1.0/-$ &  $1.0/-$  & $-/10$ \\  
Bursting Neuron &  $26$ & $50/47$  & $0.5/-$ & $1.5/-$  & $-/5$ \\
Neural Population &  $12/75$ & $5/40$  & $0.2/-$ &  $1.0/-$ & $-/5$ \\ 
EEG &  $117/128$ & $50/50$  & $0.8/0.1$ &  $1.0/5\cdot10^{-3}$ & $-/10$ \\ 
ECG &  $-/30$ & $-/50$  & $-/-$ &  $-/-$ & $-/10$ \\ 

        \bottomrule

\end{tabular}
\end{table}

\paragraph{LSTM-MSM and Reservoir Computing}
For the Lorenz-63 and Lorenz-96 system, hyperparameters were used according to the default settings for these specific datasets in the codebase provided at \url{https://github.com/pvlachas/RNN-RC-Chaos}. For 
the other datasets not previously explored by \citet{vlachas_data-driven_2018} and \citet{pathak_model-free_2018}, we performed a grid search to find best performing hyperparameter configurations. To this end, the following hyperparameters were scanned for the RC and LSTM-MSM approach, respectively: For RC the scanned hyperparameters and values are the \texttt{dynamics\_length} $\in \{10, 50, 100, 200, 500\}$, \texttt{noise\_level} $\in \{10, 100, 1000\}$, \texttt{regularization} $\in \{0, 10^{-1}, 10^{-2}, 10^{-3}\}$ and \texttt{learning\_rate} $\in \{10^{-2}, 10^{-3}, 10^{-4}\}$. For LSTM-MSM, similar parameters were explored, where the equivalent of \texttt{dynamics\_length} is \texttt{hidden\_state\_propagation\_length} in the respective code. Also, due to the comparatively slow execution times, only fewer parameter combinations were tried.

\paragraph{Neural ODE}
For Neural ODEs we used the implementation in the \texttt{torchdiffeq} package. The number of layers was fixed to make the numbers of 
trainable parameters comparable to those in Table \ref{tab:benchmarks}, while a grid search was performed over activation functions $\{\upquote elu \upquote, \upquote silu \upquote, \upquote tanh\upquote\}$, sequence length $\{5, 10, 25, 50\}$ used per batch, and learning rates $\{1e-3, 1e-2\}$. For each dataset, $20$ models were trained for $1000$ epochs, each with a batch size of $20$. The \texttt{odeintadjoint} method with the default solver \texttt{dopri5} was mostly used 
for training. As the adaptive step size caused numerical instabilities during training on the two experimental datasets, we switched to the solver \texttt{euler} for these.

\paragraph{PySINDy}
For PySINDy we performed a hyperparameter search over threshold values $\{.0001,.0002, .0005, 0.001,0.002,0.005,0.01,0.02,0.05, 0.1\}$ in the \texttt{stlsqoptimizer}. This threshold specifies the minimum value for a coefficient in the weight vector, below which it is dropped out (i.e., set to zero).

\subsection{Performance Measures}\label{sec:supp:metrics}

    \paragraph{Geometrical Measure}
    
    $D_{\textrm{stsp}}$ used for evaluating attractor geometries (Fig. \ref{fig:effect_basis_expansion}) measures the match between the ground truth distribution $p_{\text {true}}(\vx)$ and the generated distribution $p_{\text {gen}}(\vx \mid \vz)$ through the discrete binning approximation \citep{koppe_identifying_2019}
    \begin{align}\label{eq:D_stsp} 
    D_{\mathrm{stsp}}\left(p_{\mathrm {true }}(\vx), p_{\mathrm {gen }}(\vx \mid \vz)\right) \approx \sum_{k=1}^{K} \hat{p}_{\mathrm {true }}^{(k)}(\vx) \log \left(\frac{\hat{p}_{\mathrm{true }}^{(k)}(\vx)}{\hat{p}_{\mathrm {gen }}^{(k)}(\vx \mid \vz)}\right),
    \end{align}
    
    where $K$ is the total number of bins, and $\hat{p}_{\text {true }}^{(k)}(\vx)$ and $\hat{p}_{\text {gen }}^{(k)}(\vx \mid \vz)$ are estimated as relative frequencies through sampling trajectories from the benchmark DS and the trained reconstruction method, respectively. A range of $2 \times$ the data standard deviation on each dimension was partioned into $m$ bins, yielding a total of $K=m^{N}$ bins, where $N$ is the dimension of the ground truth system. 
    Initial transients are removed from sampled trajectories to ensure that the system has reached a limiting set. 
    If the bin size is chosen too large, important geometrical details may be lost, while if it is chosen too small, noise and (low) sampling artifacts with many empty bins may misguide the approximation above. Here we chose a bin number of $m=30$ per dimension as an optimal compromise that distinguished well between successful and poor reconstructions.

    Since the number of bins needed to cover the relevant (populated) region of state space 
    scales exponentially with the number of dimensions, for high-dimensional systems evaluating $D_{\textrm{stsp}}$ as outlined above is not feasible.
    We therefore resorted to an approximation of the densities based on Gaussian Mixture Models (GMMs), similar to a strategy outlined in \citep{koppe_identifying_2019}. 

Specifically, we approximate the true data distribution by a GMM $p_{\mathrm{true}}(\vx) \approx \frac{1}{T}\sum_{t=1}^T \mathcal{N}(\vx_t,\boldsymbol\Sigma)$ with Gaussians centered on the observed data points $\{\vx_t\}$ and covariance $\boldsymbol\Sigma$, which we treat as a hyper-parameter that determines the granularity of the spatial resolution (similar to the bin size in Eq. \ref{eq:D_stsp}). We can generate a likewise distribution by sampling trajectories from the trained models (or one very long trajectory) and placing Gaussians on the sampled data points, $p_{\mathrm{gen}}(\vx | \vz)\approx\frac{1}{L}\sum_{l=1}^L \mathcal{N}(\hat\vx_l \mid \vz_l,\boldsymbol\Sigma)$ (in the case of VI, rather than sampling, one could also use the model's distributional assumptions to build this posterior across the observations).

For the Kullback-Leibler divergence between two GMMs efficient approximations are at hand \citep{Hershey2007ApproximatingTK}. Here we employ a Monte Carlo approximation 
\begin{align}
    \widetilde{D}_{\mathrm{stsp}}\big(p_{\mathrm{}{true}}(\vx), p_{\mathrm{gen}}(\vx | \vz)\big) \approx \frac{1}{n}\sum_{i=1}^n \log \frac{1/T\sum_{t=1}^T\mathcal{N}(\vx^{(i)};\vx_t,\boldsymbol\Sigma)}{1/L\sum_{l=1}^L\mathcal{N}(\vx^{(i)};\hat\vx_l,\boldsymbol\Sigma)},
\end{align}
where $n$ Monte-Carlo samples $\vx^{(i)}$ are drawn from the GMM representing $p_{\mathrm{true}}$. 
In practice, we set the covariance $\boldsymbol\Sigma = \sigma^2\mI$ equal to a scaled identity matrix, with a single hyperparamter $\sigma^2$.

    Scanning the range $\sigma^2 \in \{0.01, 0.02,0.05,0.1,0.2,0.5,1 ,2, 5\}$, we found that values for $\sigma^2= 0.1 - 1.0$ to differentiate best between good and bad reconstructions. 
    We chose $\sigma^2=1.0$ for numerical stability. For this setting, $D_{\textrm{stsp}}$ as derived with the binning method and $\widetilde{D}_{\textrm{stsp}}$ computed through the GMMs also correlated highly on the low-dimensional benchmark systems (see Figure \ref{fig:klx_correlation}). 

    \begin{figure}[!htb]
    \centering
	\includegraphics[width=0.5\linewidth]{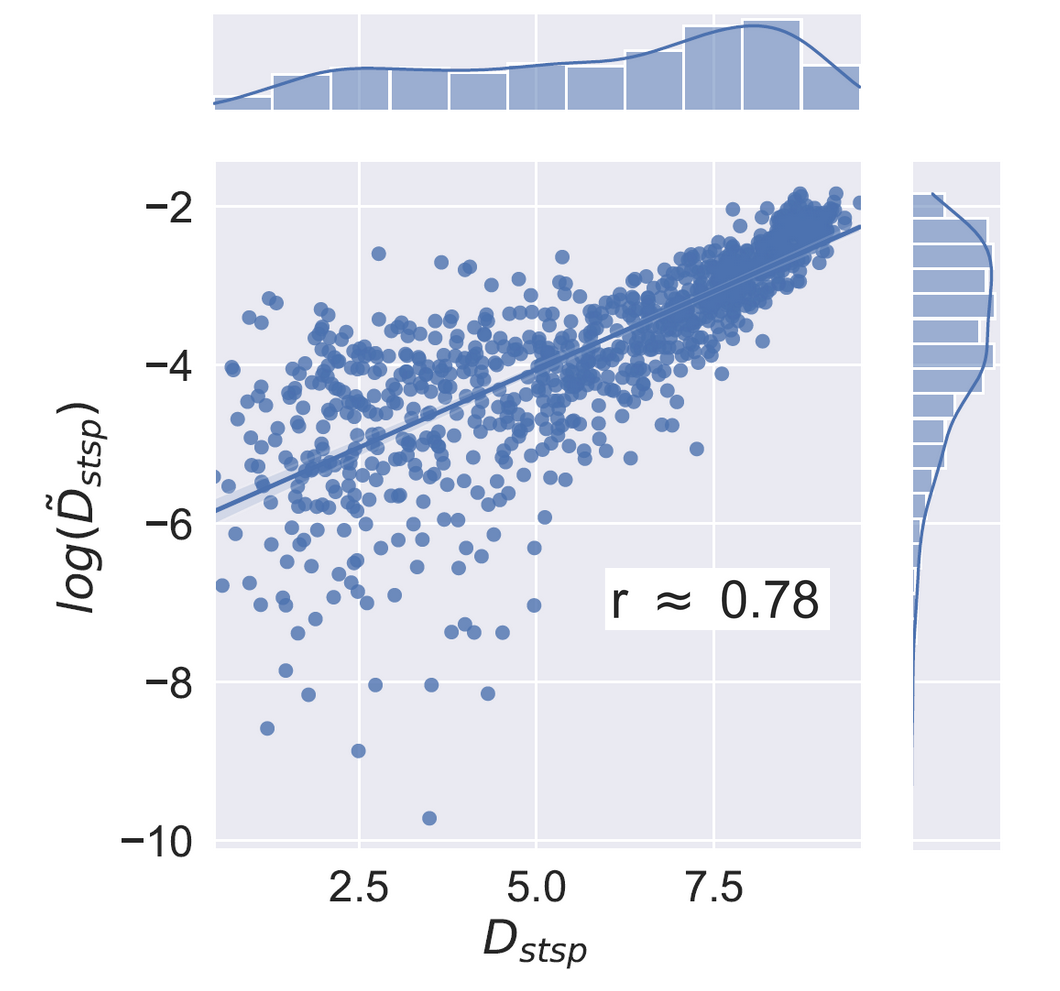}
	\caption{Correlation between the binning approximation ($m=30$)  
	and the logarithm of the GMM approximation ($\sigma^2=1$) to $D_{stsp}$ on the Lorenz-63 system for different noise realizations and variances. Similar as reported for the $KL_\bold{Z}$ approximation in \cite{koppe_identifying_2019} we observed a logarithmic relation between the GMM and binning based measures.
	}
	\label{fig:klx_correlation}
\end{figure}

  \paragraph{Power Spectrum Correlation}
  
  The power spectrum correlations (PSC) were obtained by first sampling time series of 100,000 time steps, standardizing these, and computing dimension-wise Fast Fourier Transforms (using \texttt{scipy.fft}) for both the ground truth systems and model-simulated time series. Individual power spectra were then slightly smoothed with a Gaussian kernel, normalized, and the long, high-frequency tails of the spectra, mainly representing noise, were cut off. Smoothing width $\sigma$ and cutoff values were increased linearily with the length of the time series used to compute the spectrum, and were chosen by visual inspection of the individual spectra. Dimension-wise correlations between smoothed power spectra were then averaged to obtain the reported PSC scores.
  
  \begin{figure}[!htb]
    \centering
	\includegraphics[width=0.99\linewidth]{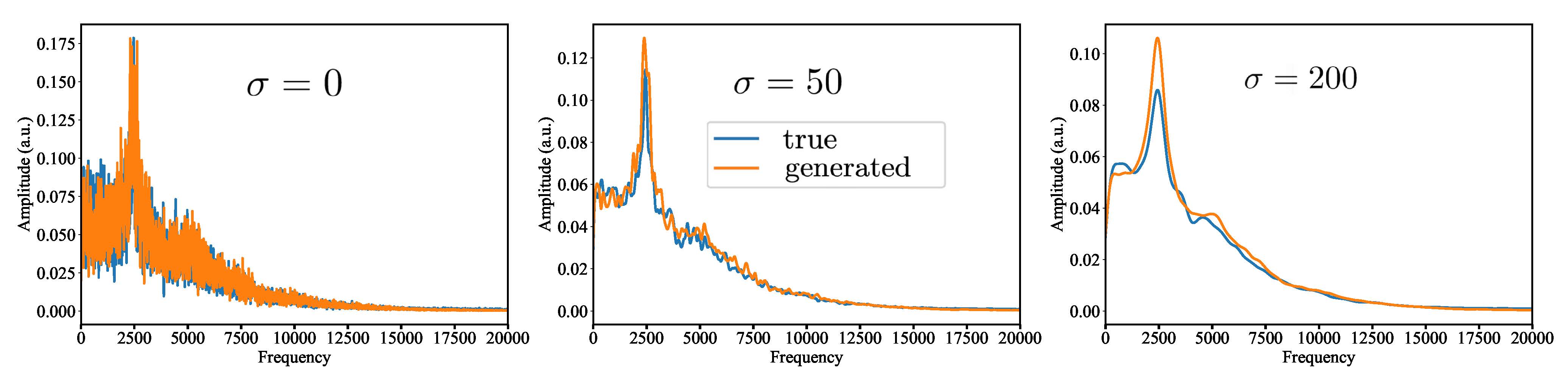}
\caption{Example power spectra for different values of the smoothing factor $\sigma$.}

	\label{fig:smoothing_sigma}
\end{figure}

   \paragraph{Mean Squared Prediction Error} 
   
   A mean squared prediction error (PE) was computed across test sets of length $T=1000$ by initializing the reconstructed model with the test set time series up to some time point $t$, from where it was then iterated forward by $n$ time steps to yield a prediction at time step $t+n$. The $n$-step ahead prediction error (PE) is then defined as the MSE between predicted and true observations:
   \begin{equation}
   PE(n)=\frac{1}{N(T-n)} \sum_{t=1}^{T-n}\sum_{i=1}^{N}(x_{i, t+n}-\hat{x}_{i, t+n})^2    .
   \end{equation}
   Note that for a chaotic system initially close trajectories will exponentially diverge, such that PEs for too large prediction steps $n$ are not meaningful anymore (in a chaotic system with noise, for large $n$ the PE may be high even when estimated from two different runs of the same ground truth model from the same initial condition; see \citep{koppe_identifying_2019}). 
   How quickly this happens depends on the rate of exponential divergence as quantified through the system's maximal Lyapunov exponent \citep{kantz_nonlinear_2004}.

\subsection{Details on Dynamical Systems Benchmarks} \label{sec:supp:dynsys}

\paragraph{Lorenz-63 System}\label{sec:supp:lorenz}
The famous 3d chaotic Lorenz attractor with the butterfly wing shape, originally proposed in \citep{lorenz_deterministic_1963}, is defined by

 \begin{align} \label{eq:lorenz} \nonumber
       \mathrm{d}x &= (\sigma (y - x))\mathrm{d}t+\mathrm{d}\epsilon_1(t), \, \\
        	\mathrm{d}y &= (x (\rho - z) - y) \mathrm{d}t+\mathrm{d}\epsilon_2(t), \, \\\nonumber
       \mathrm{d}z &=  (x y - \beta z) \mathrm{d}t+\mathrm{d}\epsilon_3(t).
    \end{align}

Parameters used for producing ground truth data in the chaotic regime were $\sigma=10, \rho=28$, and $\beta=8/3$. Process noise was injected into the system by drawing from a Gaussian term $d\boldsymbol\epsilon \sim\mathcal{N}(\mathbf{0},0.01^2 dt \times \mathbf{I})$.

\paragraph{Bursting Neuron Model}\label{sec:supp:burstingneuron}

The 3d biophysical bursting neuron model was introduced in \citep{durstewitz_implications_2009} and is defined by one voltage and two ion channel gating variables (one slow and one fast):

\begin{gather}
\begin{aligned} \label{eq:bursting_neuron}
-C_{m} \dot{V} &=g_{L}\left(V-E_{L}\right)+g_{N a} m_{\infty}(V)\left(V-E_{N a}\right) \\
&+g_{K} n\left(V-E_{K}\right)+g_{M} h\left(V-E_{K}\right) \\
&+g_{N M D A} \left[1+.33 e^{-.0625 V}\right]^{-1} \left(V-E_{N M D A}\right) \\
\end{aligned} 
\end{gather}
\begin{gather}
\begin{aligned}
\dot{n}=\frac{n_{\infty}(V)-n}{\tau_{n}} \\
\dot{h}=\frac{h_{\infty}(V)-h}{\tau_{h}} \\
\end{aligned}
\end{gather}

The limiting values of the ionic gates are given by
\begin{equation}
\left\{m_{\infty}, n_{\infty}, h_{\infty}\right\}=\left[1+e^{\left(\left\{V_{h N a}, V_{h K}, V_{h M}\right\}-V\right) /\left\{k_{N a}, k_{K}, k_{M}\right\}}\right]^{-1}.
\end{equation}

We borrowed parameter settings from \citet{schmidt_identifying_2021} to place the system into the burst-firing regime: 
\begin{align*}
    C_{m}=6 \mu \mathrm{F}, g_{L}=8 \mathrm{mS}, E_{L}=-80 \mathrm{mV}, g_{N a}=20 \mathrm{mS}, E_{N a}=60 \mathrm{mV}, V_{h N a}=-20 \mathrm{mV}, \\
    k_{N a}=15, g_{K}=10 \mathrm{mS}, E_{K}=-90 \mathrm{mV}, V_{h K}=-25 \mathrm{mV}, k_{K}=5, \tau_{n}=1 \mathrm{~ms}, g_{M}=25 \mathrm{mS} \\
    V_{h M}=-15 \mathrm{mV}, k_{M}=5, \tau_{h}=200 \mathrm{~ms}, g_{N M D A}=10.2 \mathrm{mS}
\end{align*}

\paragraph{Lorenz-96 System}\label{sec:supp:lorenz96}

The Lorenz-96 is a high-dimensional, spatially extended weather model, also introduced by Edward Lorenz \citep{lorenz_predictability_1996}:
        \begin{equation} \label{eq:lorenz-96}
		\mathrm{d}x_i = ((x_{i+1}-x_{i-2})x_{i-1} - x_i+ F)\mathrm{d}t+\mathrm{d}\epsilon, \ i=1 \dots N,	\end{equation}
    with (constant) forcing term $F$. $F=8$ is a common choice that leads to chaotic behavior. Process noise was added as for the Lorenz-63 system, $d\boldsymbol\epsilon \sim\mathcal{N}(\mathbf{0},0.01^2 dt \times \mathbf{I})$. In our simulations we used $N=10$, but in principle the system allows for arbitrary dimensionality.

\paragraph{Neural Population Model}\label{sec:supp:neuralpopulation}

A larger-scale neural population model was recently introduced in \citet{landau_coherent_2018} to examine the effect of structured connectivity on top of a randomly initialized network matrix. Specifically, an independently Gaussian distributed weight structure was combined with a rank-1 component with coupling strength $J_1$. The dynamics of the single unit currents were defined as
\begin{equation} \label{eq:neuralpop}
\frac{d\mathbf{h}}{dt}=-\mathbf{h}+\mathrm{J} \phi(\mathbf{h})+\frac{J_{1}}{\sqrt{N}} \xi v^{T}\phi(\mathbf{h}),
\end{equation}
where $\phi(\mathbf{h})=\tanh (\mathbf{h}(t))$. We produced a 50-dimensional chaotic network model based on the code provided in \citet{landau_coherent_2018} using $J_1=0.09$ and seeding the random number generator with $35$.\\

The Lorenz-63 and Lorenz-96 systems were simulated using \texttt{scipy.integrate}, while for the bursting neuron and neural population model we used the code provided in \citet{schmidt_identifying_2021} and \citet{landau_coherent_2018}, respectively.

\begin{table}
\caption{Comparison of \ourmethodname\ (Ours) trained by VI or BPTT+TF, and a standard PLRNN \citep{schmidt_identifying_2021}, trained by VI or BPTT+TF on four DS benchmarks (top) and three challenging data situations (bottom). Values are mean $\pm$ SEM.} 
\centering
\scalebox{0.75}{
\begin{tabular}[hbt!]{l l r@{ \,$\pm$\, }l r@{ \,$\pm$\, }l r@{ \,$\pm$\, }ll l l l}
        \toprule
        Dataset	&	Method	&	\multicolumn{2}{c}{PSC}	&	\multicolumn{2}{c}{$D_{\textrm{stsp}}$}  & \multicolumn{2}{c}{20-step PE}	&  Dyn.var. & \#parameters \\
        \midrule
        \multirow{5}{4em}{Lorenz}
        &	 {\ourmethodname} VI	  &     $\mathbf{0.997}$ 	&   $\mathbf{0.001}$	&	$\num{0.80}$    &   $\num{0.25}$ & ${2.1\mbox{e$-$}3}$  & ${0.2\mbox{e$-$}3}$   & $22$	    & $1032$  \\
        &	 {\ourmethodname} TF	    &    $\mathbf{0.997}$
 	&   $\mathbf{0.002}$	&	$\mathbf{0.13}$    &   $\mathbf{0.18}$  & $\mathbf{9.2\mbox{e$-$}5}$  & $\mathbf{2.8\mbox{e$-$}5}$  &  $22$	    & $1032$  \\
 	&	 PLRNN VI	  &     $0.94$ 	&   $0.004$	&	$\num{16.6}$    &   $\num{0.4}$ & ${1.8\mbox{e$-$}1}$  & ${0.1\mbox{e$-$}1}$   & $30$	    & $1020$  \\
        &	 PLRNN TF	    &    $0.994$
 	&   $0.001$	&	$\num{0.4}$    &   $\num{0.09}$  & ${4.3\mbox{e$-$}3}$  & ${0.2\mbox{e$-$}3}$  &  $30$	    & $1011$  \\
        \midrule
        \multirow{5}{4em}{Bursting Neuron}
        &	 {\ourmethodname} VI	&	$0.55$ & $0.03$	        &	$\num{7.5}$ &$\num{0.4}$     & ${6.1\mbox{e$-$}1}$  & ${0.1\mbox{e$-$}1}$            & $26$          & $2052$    \\
        &	 {\ourmethodname} TF	&	$\mathbf{0.76}$ & $\mathbf{0.04}$	        &	$\mathbf{0.61}$ &$\mathbf{0.09}$      & $\mathbf{6.1\mbox{e$-$}2}$  & $\mathbf{2.2\mbox{e$-$}2}$           & $26$          & $2040$    \\
         &	 PLRNN VI	&	$0.54$ & $0.01$	        &	$\num{17.5}$ &$\num{0.5}$     & ${1.17}$  & ${0.14}$            & $42$          & $2021$    \\
        &	 PLRNN TF	&	${0.72}$ & ${0.07}$	        &	$\num{0.63}$ &$\num{0.11}$      & ${6.4\mbox{e$-$}2}$  & ${2.0\mbox{e$-$}2}$           & $43$          & $2021$    \\
            \midrule
         
        \multirow{5}{4em}{Lorenz-96}
        &	 {\ourmethodname} VI	&		${0.987}$ 	& ${0.001}$	&	$\num{0.10}$ &$\num{0.01}$  & ${3.1\mbox{e$-$}1}$  & ${0.9\mbox{e$-$}1}$    &  $42$       & $4384$	\\
        &	 {\ourmethodname} TF	&		$\mathbf{0.998}$ 	& $\mathbf{0.0001}$	&	$\mathbf{0.04}$ &$\mathbf{0.01}$   & $\mathbf{4.1\mbox{e$-$}2}$  & $\mathbf{0.8\mbox{e$-$}2}$   &  $50$       & $4480$	\\
        &	 PLRNN VI	  &     $0.93$ 	&   $0.002$	&	$\num{1.68}$    &   $\num{0.03}$ & ${2.1\mbox{e$-$}3}$  & ${0.2\mbox{e$-$}3}$   & $60$	    & $4260$  \\
        &	 PLRNN TF	    &    $0.996$
 	&   $0.0003$	&	$\num{0.05}$    &   $\num{0.01}$  & ${2.2\mbox{e$-$}1}$  & ${0.2\mbox{e$-$}1}$  &  $64$	    & $4700$  \\    
        \midrule
        \multirow{5}{4em}{Neural Population Model}
        &	 {\ourmethodname} VI	&	$0.45$ &$0.05$	&	$\num{0.56}$ &$\num{0.05}$         &	${0.82}$ &${0.09}$                                     & $12$        & $821$	\\ 
        &	 {\ourmethodname} TF	&	$\mathbf{0.52}$ &$\mathbf{0.01}$	&	$\mathbf{0.37}$ &$\mathbf{0.05}$    &	$1.53$ &$0.03$                                          & $75$        & $9990$	\\ 
        &	 PLRNN VI	&	$0.48$ &$0.01$	&	$\num{11.65}$ &$\num{1.32}$         &	$\mathbf{0.68}$ &$\mathbf{0.09}$                                     & $13$        & $832$	\\ 
        &	 PLRNN TF	&	${0.47}$ &${0.15}$	&	${0.6}$ &${0.3}$    &	$4$ &$10$                                          & $98$        & $12102$	 \\
        \midrule
        \midrule
        \multirow{5}{4em}{Low amount of data}
        &	 {\ourmethodname} VI	&	$0.967$ & $0.007$ 	&	$\mathbf{4.36}$ & $\mathbf{0.10}$	     & ${2.8\mbox{e$-$}2}$  & ${0.2\mbox{e$-$}2}$          & $22$   & $1032$    \\
        &	 {\ourmethodname} TF	&	$\mathbf{0.97}$ & $\mathbf{0.04}$ 	&	$6.9$ & $5.3$	   & $\mathbf{1.5\mbox{e$-$}2}$  & $\mathbf{0.9\mbox{e$-$}2}$            & $22$   & $1032$    \\
        &	 PLRNN VI	&	$0.96$ & $0.01$ 	&	$18.1$ & $0.10$	     & ${1.08}$  & ${0.02}$          & $30$   & $1020$    \\
        &	 PLRNN TF	&	$0.96$ & $0.04$ 	&	$9.0$ & $5.4$	   & ${1.8\mbox{e$-$}2}$  & ${0.5\mbox{e$-$}2}$            & $30$   & $1011$    \\
        \midrule
        \multirow{5}{4em}{Partially observed}
        &	 {\ourmethodname} VI & $\num{0.940}$ & $\num{0.006}$	&	$\num{12.6}$ & $\num{1.0}$	    & ${6.5\mbox{e$-$}2}$  & ${1.4\mbox{e$-$}2}$                           & $22$	    & $1032$  \\
        &	 {\ourmethodname} TF & ${0.993}$ & ${0.003}$	&	$\mathbf{0.54}$ & ${0.16}$	  & ${5.3\mbox{e$-$}3}$  & ${0.2\mbox{e$-$}3}$                             & $22$	    & $1032$  \\
        &	 PLRNN VI & $\num{0.944}$ & $\num{0.002}$	&	$\num{17.2}$ & $\num{0.2}$	    & ${2.7\mbox{e$-$}1}$  & ${0.03\mbox{e$-$}1}$                           & $30$	    & $1020$  \\
        &	 PLRNN TF & $\mathbf{0.994}$ & $\mathbf{0.003}$	&	${0.56}$ & ${0.34}$	  & $\mathbf{5.0\mbox{e$-$}3}$  & $\mathbf{0.2\mbox{e$-$}3}$                             & $30$	    & $1011$   \\
        \midrule
        \multirow{5}{4em}{High noise}
        &	 {\ourmethodname} VI	& $0.973$	 &  $0.006$	                &	$4.9$ & $0.75$	       & ${3.5\mbox{e$-$}2}$  & ${0.1\mbox{e$-$}2}$                                & $22$	    & $1032$  \\
        &	 {\ourmethodname} TF	& $\mathbf{0.995}$	 &  $\mathbf{0.002}$               &	$\mathbf{0.4}$ & $\mathbf{0.13}$	             & ${4.6\mbox{e$-$}3}$  & ${0.4\mbox{e$-$}3}$                          & $22$	    & $1032$  \\
        &	 PLRNN VI	& $0.94$	 &  $0.004$	                &	$18.2$ & $0.04$	       & ${6.4\mbox{e$-$}1}$  & ${0.1\mbox{e$-$}1}$                                & $30$	    & $1020$  \\
        &	 PLRNN TF	& ${0.994}$	 &  ${0.002}$               &	${0.5}$ & ${0.08}$	             & $\mathbf{4.3\mbox{e$-$}3}$  & $\mathbf{0.2\mbox{e$-$}3}$                          & $30$	    & $1011$  \\
        \bottomrule
\end{tabular}
}
\label{tab:supp:vae}
\end{table}

\paragraph{Wilson Cowan Model}\label{supp-wc}

The Wilson-Cowan model is a classical model of neural population dynamics that describes the interactions between a pool of excitatory (E) cells and one of inhibitory (I) cells \citep{wilson_excitatory_1972}, defined by
\begin{align}
&\tau_{i} \frac{d r_{i}}{d t}=-r_{i}+\phi\left(w_{e i} \cdot r_{e}-w_{i i} \cdot r_{i}-z_{i}\right) \\
&\tau_{e} \frac{d r_{e}}{d t}=-r_{e}+\phi\left(w_{e e} \cdot r_{e}-w_{e i} \cdot r_{i}-z_{e}\right),
\end{align}
where $w_{e i},w_{e e},w_{i e},w_{i i}$ are coupling strengths, $z_i$ and $z_e$ denote constant input currents, and $\tau_{i}$ and $\tau_{e}$ are time constants. We chose parameters that placed the model into a bistable regime: $w_{e i}=9.,w_{e e}=9.,w_{i e}=5.,w_{i i}=5.,z_{e}=3,z_{i}=4$. The vector field and fixed points for this configuration are shown in Figure \ref{fig:wc_flow}. 

For simulating the model, we used the implementation provided at \url{https://github.com/OpenSourceBrain/WilsonCowan}. For training the \ourmethodname, we sampled 400 initial conditions evenly distributed across the unit square $[0,1]^2$, and then simulated trajectories of $T=300$ from each of these initial states. Vector fields for the \ourmethodname\ were approximated as finite $1$-step difference vectors $F(x_n)-x_n$ at each grid point $x_n$, where $F$ denotes the map induced by the \ourmethodname\ in observation space.
Approximated and ground truth vector fields are shown in Figure \ref{fig:wc_flow}.

\paragraph{EEG Dataset} \label{sec:supp:EEG}

Electroencephalogram (EEG) data were taken from a study by \citep{schalk_bci2000_2004} available at \url{https://physionet.org/content/eegmmidb/1.0.0/}. These are 64-channel EEG data obtained from human subjects during different motor and imagery tasks. We trained the \ourmethodname\ using BPTT+TF on the "eyes open" baseline time series from subject $0$, which had a total of 9760 time steps. The signal was standardized and smoothed with a Hann function, using \texttt{numpy.hanning} and a window length of $15$. Results for ground-truth and freely generated EEG signals from several brain regions are shown in Figure \ref{fig:EEG_trajectory}.

\begin{figure}[!htb]
    \centering
	\includegraphics[width=0.99\linewidth]{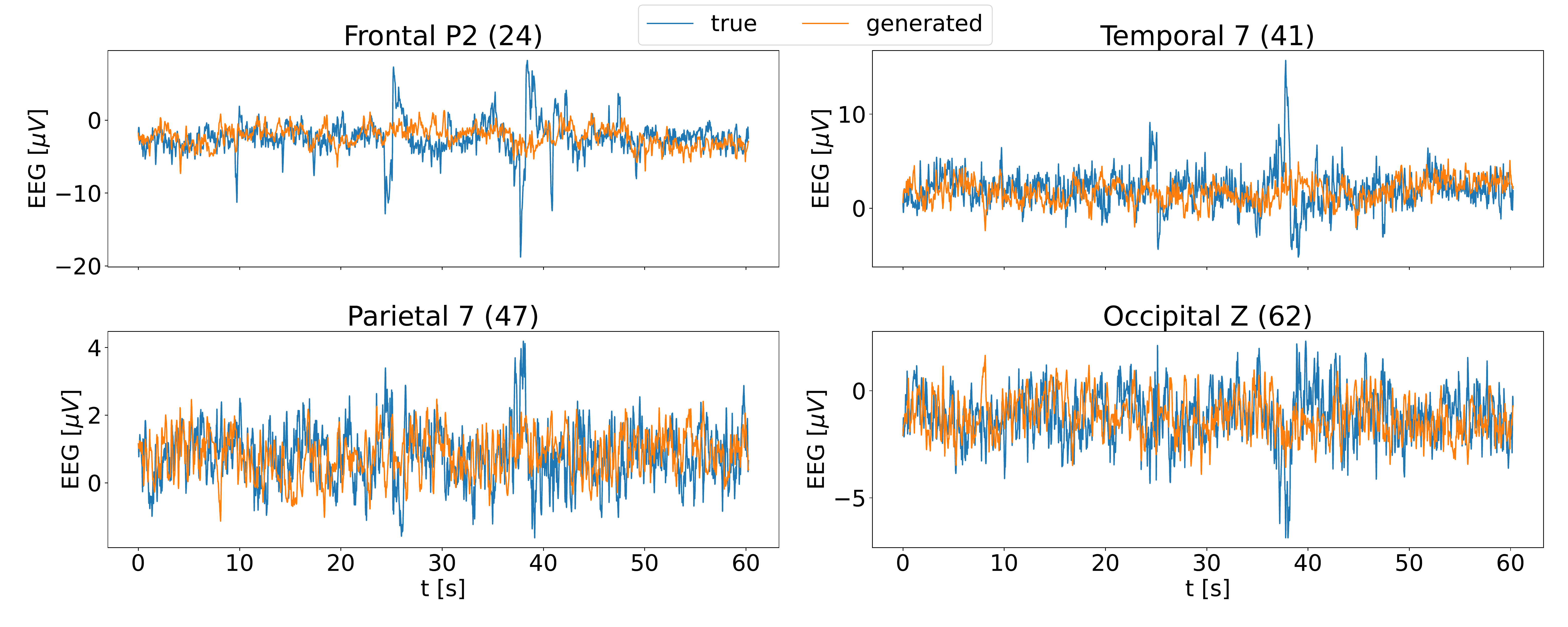}
	\caption{EEG recordings from frontal, occipital, parietal and temporal lobe vs. freely generated trajectories, sampled from the \ourmethodname, trained with BPTT
	($M=128, B=50, \tau=10, M_{\textrm{reg}}/M=0.1, \lambda=5 \cdot 10^{-3}$).}
    \label{fig:EEG_trajectory}
\end{figure}

\paragraph{ECG Dataset} \label{sec:supp:ECG}

Electrocardiogram (ECG) time series were taken from the PPG-DaLiA dataset \citep{Reiss_2019}. ECG signals were captured using a chest-worn device (RespiBAN Professional) with a sampling rate of 700 Hz. For the benchmark model comparisons (Table \ref{tab:benchmarks}), we used the first (one-dimensional) time series (index 0, ``sitting'') from subject $2$, which consists of 419973 time steps. We preprocessed the data by slightly smoothing the series using a Gaussian kernel ($\sigma=5$ time bins), standardization, and performing a temporal delay embedding with dimension $m=7$ and lag $\tau_{lag}=61$. Optimal embedding parameters were determined using the \texttt{DynamicalSystems.jl} Julia package. For the experiments, we constructed a training and test set, each of length $T=100,000$ cut out from the available series.
\begin{figure}[!htb]
    \centering
	\includegraphics[width=0.80\linewidth]{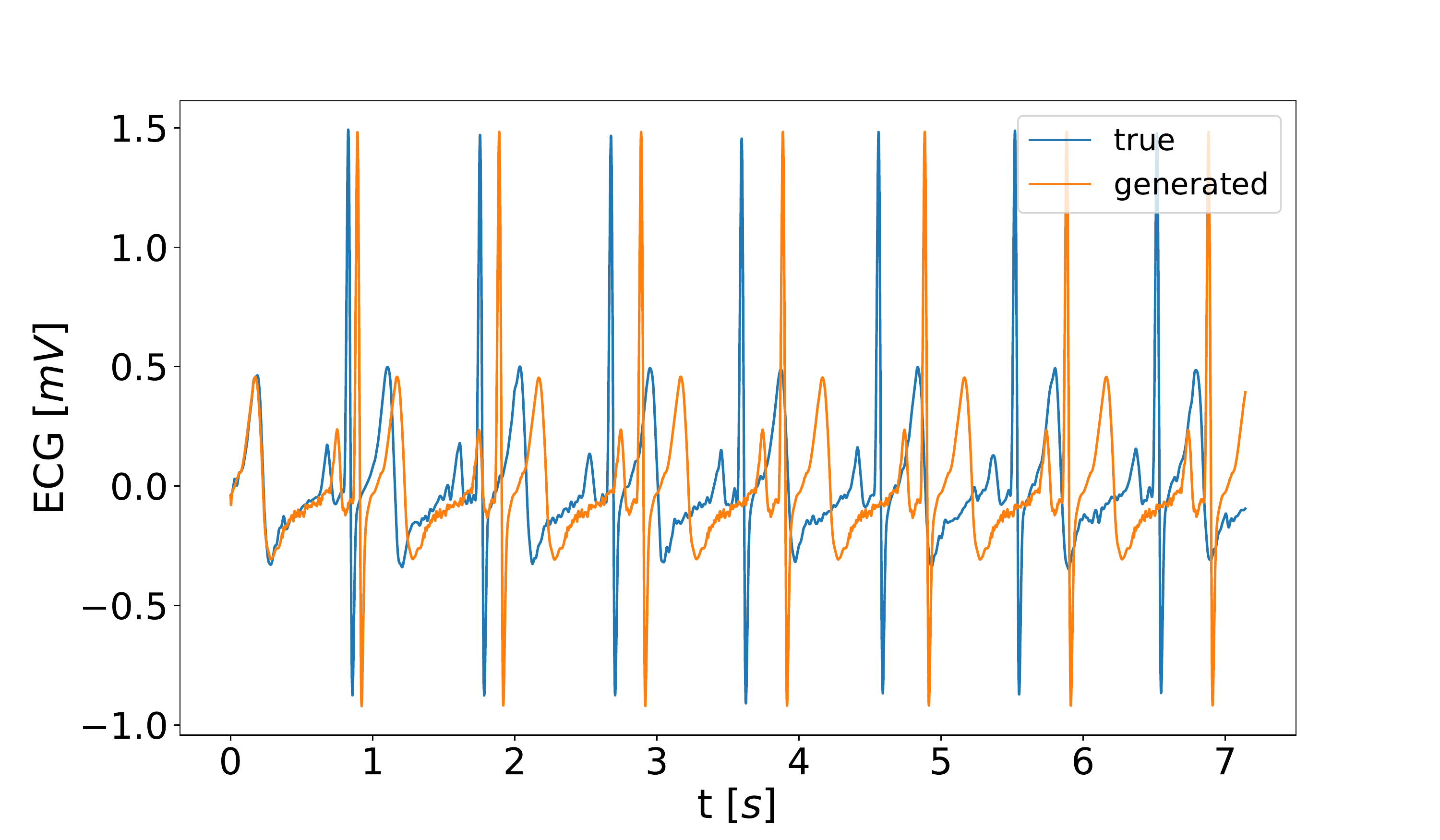}
	\caption{Original ECG recording vs. freely generated time series simulated using a \ourmethodname, trained with BPTT-TF ($M=30, B=50, \tau=10$).}
	\label{fig:ECG_trajectory}
\end{figure}

\newcommand\norm[1]{\left\lVert#1\right\rVert}
\subsection{Theoretical Analysis}\label{supp-theo-analysis}
Consider the PLRNN with linear spline basis expansion as defined by \eqref{eq:plrnn_lat}, \eqref{eq:basis_expansion}, 
reproduced here for convenience: 
\begin{align}\label{eq-1}
 \vz_t \, = \, & \mA \vz_{t-1} + \mW  \sum_{b=1}^{B} \alpha_{b} \, \max (0, \vz_{t-1}- \vh_b)  \, 
+\vh_0
%
%
+ \mC \vs_t +\bm{\epsilon}_t,
\end{align}
where $\bm{\epsilon}_t \sim N(0, \bm{\Sigma}), \, \, \, E[\bm{\epsilon}_t, \bm{\epsilon}_{t'}\tran]=0$ for $t \neq t^{\prime}$, $\alpha_{b} \in \sR$ are scalar weighting factors and $\vh_{b} \in \sR^M$ different ReLU \say{activation thresholds}, and all other parameters are as in conventional PLRNNs \citep{koppe_identifying_2019}.

Defining 
\begin{align}\label{}
\mD^{(b)}_{\Omega(t-1)}(\vz_{t-1}- \vh_b)\, := \, \max(0, \vz_{t-1}- \vh_b),
\end{align}
\eqref{eq-1} can be rewritten as
\begin{align}\label{eq-2}\nonumber
 \vz_t \, = \, & \bigg(\mA + \mW \sum_{b=1}^{B} \alpha_{b} \, \mD^{(b)}_{\Omega(t-1)}\bigg)\vz_{t-1} 
\\[1ex]
&
\, + \, \mW \sum_{b=1}^{B} \alpha_{b} \, \mD^{(b)}_{\Omega(t-1)} \, (-\vh_b) \, 
\, + \, \vh_0\, + \, \mC \vs_t +\bm{\epsilon}_t,
\end{align}
where $\mD^{(b)}_{\Omega(t-1)}= \text{diag} \big(d^{(b)}_{1,t-1}, d^{(b)}_{2,t-1}, \cdots, d^{(b)}_{M,t-1} \big)$ are diagonal binary indicator matrices with $d^{(b)}_{m,t-1}=1$ if $z_{m,t-1}> h_{m,b}$ and $0$ otherwise. 
\subsubsection{Fixed points and $n$-cycles of system \eqref{eq-2}}\label{s-zm-fp-sec}
Defining 
\begin{align}\label{defs}\nonumber
 \mD^{B}_{\Omega(t-1)}&:=  \sum_{b=1}^{B} \alpha_{b} \, \mD^{(b)}_{\Omega(t-1)} , \hspace{.2cm}
\\
\vh^{B}_{\Omega(t-1)}&:=\sum_{b=1}^{B} \alpha_{b} 
\, \mD^{(b)}_{\Omega(t-1)} (-\vh_{b}), \hspace{.2cm}
%
%
\\\nonumber 
\mW^{B}_{\Omega(t-1)} 
&:= \mA  + \mW \, \mD^{B}_{\Omega(t-1)},
\end{align}
and considering the autonomous system (i.e., without external inputs or noise terms), \eqref{eq-2} can be rewritten as 
\begin{align}\label{eq-3}
 \vz_t \, = \,  \mW^{B}_{\Omega(t-1)} \, \vz_{t-1} +  \mW \, \vh^{B}_{\Omega(t-1)} +\vh_0.
\end{align}
Fixed points and cycles of \eqref{eq-1}, and their eigenvalue spectra, can now be computed in a way analogous to standard PLRNNs. Specifically, solving the equation $F( \vz^{*1}) =  \vz^{*1}$, fixed points of the \ourmethodname\ are given by
\begin{align}\label{s-zm-fp1}
  \vz^{*1} = \Big(\mI- \mW^{B}_{\Omega(t^{*1})} \Big)^{-1} \Big[\mW \,\vh^{B}_{\Omega(t^{*1})} +\vh_0\Big] ,
\end{align}
where $\vz^{*1}=\vz_{t^{*1}}=\vz_{t^{*1}-1}$, and $\det(\mI- \mW^{B}_{\Omega(t^{*1})}) = P_{\mW^{B}_{\Omega(t^{*1})}}(1) \neq 0$, i.e. $\mW^{B}_{\Omega(t^{*1})}$ has no eigenvalue equal to $1$. If there are eigenvalues close or equal to $1$ (such that the matrix in (\ref{s-zm-fp1}) is non-invertible or badly conditioned), this means the system is 
undergoing a bifurcation or has one or more directions of marginal stability in its state space (e.g., may exhibit a line, plane, or manifold attractor if there is convergence to this object along the other directions). We emphasize that non-invertibility is thus not a `problem' for the method, but rather indicates a dynamically important scenario in itself that can be detected from the eigenspectrum (see, e.g., \citet{schmidt_identifying_2021}).
\\

For $n>1$, an $n$-cycle with periodic points $\{\vz^{*n}, F(\vz^{*n}), F^2(\vz^{*n}), \cdots, F^{n-1}(\vz^{*n})\}$ of map $F$ can be obtained by solving $F^{n}(\vz^{*n}) = \vz^{*n}$. Therefore, in order to find the periodic points, we first compute $F^n$ in the following way:\\
\begingroup
\allowdisplaybreaks
\begin{align}\nonumber
&\vz_t \, = \, F( \vz_{t-1}) \, = \,  \mW^{B}_{\Omega(t-1)} \, \vz_{t-1} +  \mW \, \vh^{B}_{\Omega(t-1)} +\vh_0,
 \\[2ex]\nonumber
&\vz_{t+1} \, = \,  F^2( \vz_{t-1}) = F( \vz_{t})
%
%
\, = \, \mW^{B}_{\Omega(t)} \,\mW^{B}_{\Omega(t-1)} \, \vz_{t-1} 
\, + \, \big(\mW^{B}_{\Omega(t)} \,\mW  \, \vh^{B}_{\Omega(t-1)} \, + \,  \mW \,  \vh^{B}_{\Omega(t)}\big)
\\[1ex]\nonumber
& \hspace{.9cm}
 \, + \, \big(\mW^{B}_{\Omega(t)}\, + \, \mI \big)\vh_0,
\\[2ex]\nonumber
&\vz_{t+2} \, = \, F^3( \vz_{t-1}) \, = \, F( \vz_{t+1})
%
%
\, = \, \mW^B_{\Omega(t+1)} \mW^B_{\Omega(t)} \mW^B_{\Omega(t-1)} \, \vz_{t-1} +  \big(\mW^{B}_{\Omega(t+1)} \mW^{B}_{\Omega(t)} \mW  \vh^{B}_{\Omega(t-1)}
\\[1ex]\nonumber
& \hspace{.9cm}
 \, + \,  \mW^{B}_{\Omega(t+1)} \mW  \vh^{B}_{\Omega(t)}
%
%
 + \mW \vh^{B}_{\Omega(t+1)} \big) \, + \, \big(\mW^B_{\Omega(t+1)} \mW^B_{\Omega(t)} \,+ \, \mW^B_{\Omega(t+1)} 
%
%
\, + \, \mI \big)\vh_0,
 \\[1ex]\nonumber
& \vdots
 \\[1ex]\nonumber
& \vz_{t+(n-1)}= F^n( \vz_{t-1}) 
%
%
\,= \, \prod_{i=2}^{n+1} \mW^B_{\Omega(t+n-i)} \, \vz_{t-1}
%
%
\,+ \, \sum_{j=2}^{n} \bigg[\prod_{i=2}^{n-j+2} \mW^B_{\Omega(t+n-i)} \,  \mW \,  \vh^{B}_{\Omega(t+j-3)} \bigg]
\\[1ex]\label{}
& \hspace{1.5cm}
\,+ \, \mW \vh^{B}_{\Omega(t+n-2)} \,+ \, \bigg(\sum_{j=2}^{n} \prod_{i=2}^{n-j+2} \mW^B_{\Omega(t+n-i)}\, + \,\mI \bigg) \vh_0,
\end{align}
\endgroup
where
\begin{align}\nonumber
 \prod_{i=2}^{n+1} \mW^B_{\Omega(t+n-i)} = \mW^B_{\Omega(t+n-2)} \mW^B_{\Omega(t+n-3)} \cdots \mW^B_{\Omega(t-1)}.   
\end{align}
Defining $t+n-1=: t^{*n}$, the periodic point $\vz^{*n}$ of the $n$-cycle of $F$ can now be obtained as the fixed point of the $n$-times iterated map $F^n$ as  
\begingroup
\allowdisplaybreaks
\begin{align}\nonumber
 &\vz^{*n} 
%
%
\, = \,  \bigg(\mI- \prod_{i=1}^{n} \mW^B_{\Omega(t^{*n}-i)} \bigg)^{-1}
%
%
\bigg(\sum_{j=2}^{n} \Big[\prod_{i=1}^{n-j+1} \mW^B_{\Omega(t^{*n}-i)} \mW   \vh^{B}_{\Omega(t^{*n}-n +j-2)} \Big]\,
\\[1ex]\label{cycles}
&  \hspace{.9cm}
+\, \mW \vh^{B}_{\Omega(t^{*n}-1)} 
 \, + \, \Big(\sum_{j=2}^{n} \prod_{i=1}^{n-j+1} \mW^B_{\Omega(t^{*n}-i)} + \mI \Big) \vh_0 \bigg),
\end{align}
\endgroup
where $\vz^{*n}=\vz_{t^{*n}}=\vz_{t^{*n}-n}$, if
$(\mI- \prod_{i=1}^{n} \mW^B_{\Omega(t^{*n}-i)})$ 
is invertible, i.e. 
$$\det \bigg(\mI-\prod_{i=1}^{n} \mW^B_{\Omega(t^{*n}-i)} \bigg)= P_{\prod_{i=1}^{n} \mW^B_{\Omega(t^{*n}-i)}}(1) \neq 0,$$ 
which implies $\mW_{\Omega^{*n}}:=\prod_{i=1}^{n} \mW^B_{\Omega(t^{*n}-i)}$ has no eigenvalue equal to $1$. As for simple fixed points, non-invertibility of the matrix in (\ref{cycles}) implies there are 
directions of marginal stability in the \ourmethodname's state space, along which we will find continuous sets of $n$-cycles, or the system is undergoing a bifurcation. 
Thus, we are approaching such a situation as one of the eigenvalues of $\mW_{\Omega^{*n}}$ moves toward $1$ and the matrix in (\ref{cycles}) may become ill-conditioned.

\begin{remark_2}\label{remark-FPs-cylces-mcPLRNN}
These results about fixed points and $n$-cycles also hold for the mean-centred \ourmethodname. This can easily be seen by defining $\mW^{B}_{\Omega(t-1)} := \mA  + \mW \, \mD^{B}_{\Omega(t-1)} \, \mM $ and noting that the elements of $\mD^{(b)}_{\Omega(t-1)}$ are now determined by the mean-centred latent states. That is $d^{(b)}_{m,t-1}=1$ if $z_{m,t-1}-\frac{1}{M}\sum_{j=1}^M z_{j,t-1}> h_{m,b}$ and $0$ otherwise.
The rest of the calculations then proceeds as above.
\end{remark_2}
\subsubsection{Sub-regions and discontinuity boundaries corresponding to system \eqref{eq-2}} \label{sec:subregions}
Consider system \eqref{eq-2} without external input and noise terms. Denoting $\vh_b=(h_{1,b}, h_{2,b}, \cdots,  h_{M,b})\tran$ in \eqref{eq-2}, for $b=1, 2, \cdots, B$, we can order the elements $h_{j,1}, h_{j,2}, \cdots, h_{j,B}$ for every $j \in \{ 1, 2, \cdots, M \}$. Without loss of generality, let 
\begin{align}\label{ineq_h}
h_{j,1} < h_{j,2} <  \cdots < h_{j,B}, \hspace{.5cm}  j=1, 2, \cdots, M.  
\end{align}
Then, for every $j$, we define the intervals $I_{j,b}$ as follows:
\begin{align}\label{ints}
I_{j,1} \, := \, (-\infty, h_{j,1}],\nonumber 
\\[1ex]
%
I_{j,b} \, := \, (h_{j,b-1}, h_{j,b}], \hspace{.3cm}  b=2, 3, \cdots, B, 
\\[1ex]\nonumber
%
I_{j,B+1} \, := \, (h_{j,B}, +\infty).
\end{align}
By definition of $\mD^{(i)}_{\Omega(t-1)}$ in \eqref{eq-2}, the phase space is separated into $(B+1)^M$ sub-regions by $M B (B+1)^{M-1}$ hyper-surfaces as discontinuity boundaries. Every sub-region can be defined by the thresholds $\vh_b$ as Cartesian product of suitable intervals in \eqref{ints} for $j \in \{ 1, 2, \cdots, M \}$. (Note that if in \eqref{ineq_h} we had $"\leq"$ instead of strict inequalities $"<"$, obviously the number of  intervals, hence sub-regions, would decrease.) 
In each sub-region the matrices $\mD^{(b)}_{\Omega(t-1)}, \, b=1,2, \cdots, B$, have a different configuration. Therefore, in \eqref{eq-3} there are $(B+1)^M$ different forms for $\mD^{B}_{\Omega(t-1)}$, and so for $\mW^{B}_{\Omega(t-1)}$ and $\vh^{B}_{\Omega(t-1)}$ as well. Hence, indexing $\mD^{B}_{\Omega(t-1)}$, $\mW^{B}_{\Omega(t-1)}$ and $\vh^{B}_{\Omega(t-1)}$ as $\mD^{B}_{(r)}$, $\mW^{B}_{(r)}$ and $\vh^{B}_{(r)}$ for $r \in \{1, 2, \cdots, (B+1)^M\}$, \eqref{eq-2} can be written as
\begin{align}\label{eq-4}
 \vz_t \, = \,  \mW^{B}_{(r)} \, \vz_{t-1} +  \mW \, \vh^{B}_{(r)} +\vh_0.
\end{align}
To visualize the sub-regions and their borders, let for example $M=2$ and $B=2$. In this case there are $9$ sub-regions divided by $12$ borders. As illustrated in Fig. \ref{figure_math_sub1}, there are different matrices $\mD^{(b)}_{\Omega(t-1)}, b=1,2,$ and $\mD^{B}_{(r)}=\mD^{2}_{(r)}, r=1,2, \cdots, 9,$ for each sub-region.
\begin{figure}[!hbt]
\centering
\includegraphics[scale=0.38]{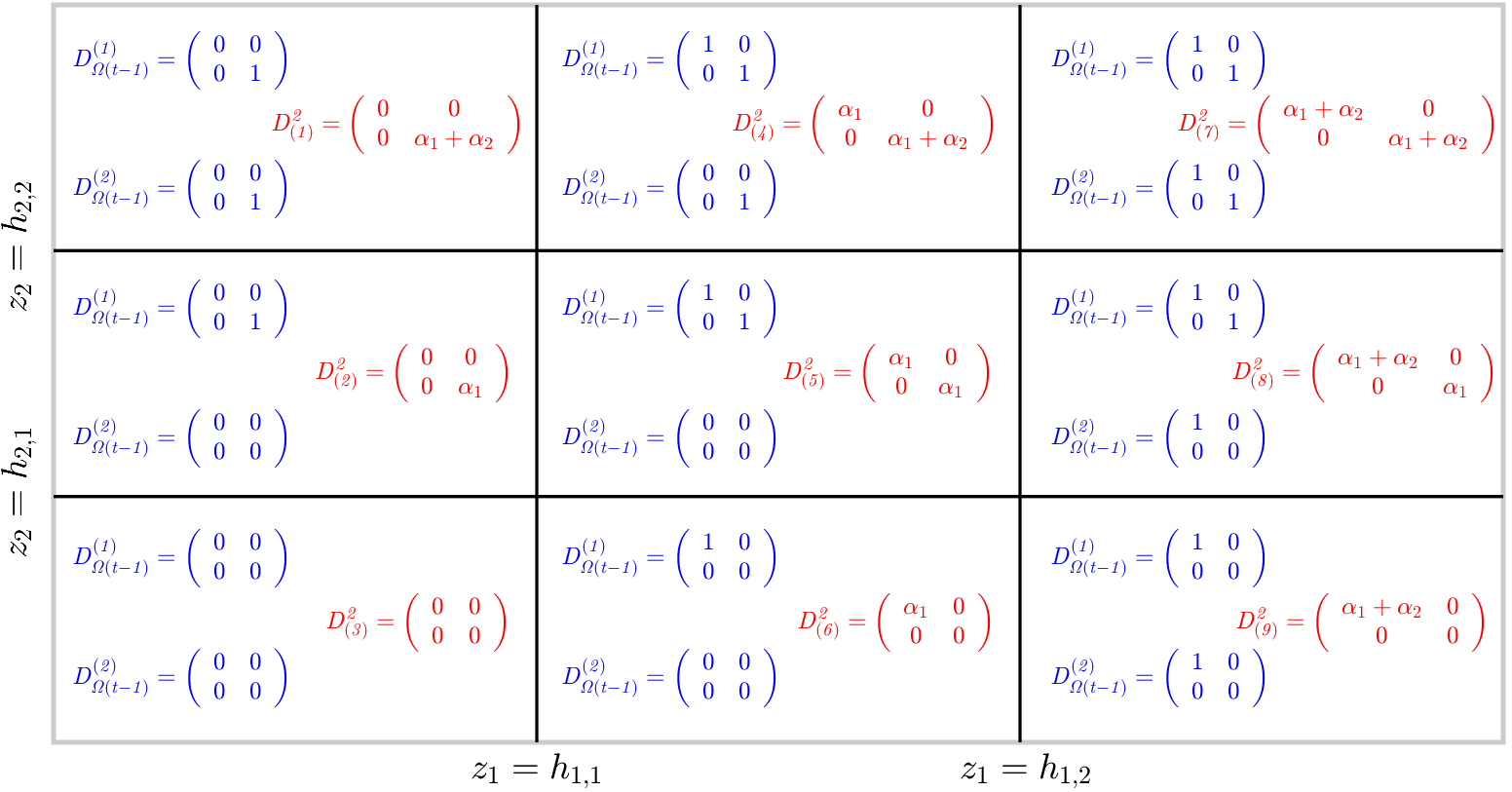}
\caption{Example of different sub-regions and related matrices $\mD^{(b)}_{\Omega(t-1)}, b=1,2,$ and $\mD^{B}_{(r)}, r=1,2, \cdots, 9$, for $M=2$ and $B=2$. Here, it is assumed that the components of $\vh_1=(h_{1,1}, h_{2,1})\tran$ and $\vh_2=(h_{1,2}, h_{2,2})\tran$ satisfy \eqref{ineq_h} with $"<"$.}
\end{figure}\label{figure_math_sub1}
\subsubsection{Bounded orbits are compatible with the Manifold Attractor Regularization}
 \begin{proposition}\label{pro-Z-3}
 The results of Theorem \ref{pro-J-1} are also true when the manifold-attractor regularization, Eq. \ref{eq:supp:MAR}, is 
 strictly enforced for the \ourmethodname, \eqref{eq-clipped}.\end{proposition}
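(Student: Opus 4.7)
The plan is to split the latent state into the regularized block $\vz^{(r)}$ (components $i\le M_{\mathrm{reg}}$) and the non-regularized block $\vz^{(u)}$ (components $i>M_{\mathrm{reg}}$), and to treat each block separately. Strict enforcement of the MAR term \eqref{eq:supp:MAR} amounts to setting $A_{ii}=1$ and $(\vh_0)_i=0$ for every $i\le M_{\mathrm{reg}}$; moreover, since $\mW$ is off-diagonal by construction and the off-diagonal entries in those rows are forced to zero by MAR, the entire $i$-th row of $\mW$ vanishes for $i\le M_{\mathrm{reg}}$. The natural hypothesis inherited from Theorem \ref{pro-J-1} then has to be read as $\sigma_{\max}(\mA^{(u)})<1$ on the principal sub-matrix indexed by the non-regularized coordinates, because the full $\mA$ now has $\sigma_{\max}=1$ with equality attained on the regularized block.

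The first step is to substitute the MAR constraints into the clipped update \eqref{eq-clipped} and observe that, for every $i\le M_{\mathrm{reg}}$, the recursion collapses to $z_{i,t}=z_{i,t-1}$: the linear part contributes $A_{ii}z_{i,t-1}=z_{i,t-1}$, the nonlinear coupling vanishes because row $i$ of $\mW$ is zero, and the bias is zero. Hence the regularized coordinates stay frozen at their initial values, which is the precise form of the \emph{continuous set of marginally stable fixed points} (attracting manifold) that motivated MAR in the first place, and they are trivially bounded.

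The second step is to restrict \eqref{eq-clipped} to the non-regularized block, yielding a recursion of the form $\vz^{(u)}_t=\mA^{(u)}\vz^{(u)}_{t-1}+\mW^{(u)}\psi(\vz_{t-1})+\vh^{(u)}_0$, and then to re-use the clipping argument from Theorem \ref{pro-J-1}. Each scalar map $z\mapsto\alpha_b\bigl[\max(0,z-h_{k,b})-\max(0,z)\bigr]$ is piecewise linear with two flat tails (zero for $z\le\min(0,h_{k,b})$, equal to $-\alpha_b h_{k,b}$ for $z\ge\max(0,h_{k,b})$), so $\psi$ is uniformly bounded coordinatewise and $\|\mW^{(u)}\psi(\vz_{t-1})\|\le C$ for a constant $C$ independent of $\vz_{t-1}$. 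Combining this with $\sigma_{\max}(\mA^{(u)})<1$ and iterating yields a standard geometric-series bound on $\|\vz^{(u)}_t\|$ that is uniform in $t$, which together with the triviality for $\vz^{(r)}$ gives the claim.

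The main obstacle, and really the only non-routine check, is to make sure that the dependence of the nonlinear term on the (now frozen but otherwise arbitrary) regularized coordinates $\vz^{(r)}$ does not silently re-introduce unboundedness into the $\vz^{(u)}$-recursion. This is settled by the same coordinatewise structure of the spline expansion \eqref{eq:basis_expansion}: the contribution of each regularized coordinate to $\mW^{(u)}\psi(\vz_{t-1})$ is a uniformly bounded scalar \emph{irrespective} of the magnitude of that coordinate, and can therefore be absorbed into an effective, constant bias for the non-regularized block without spoiling the contraction estimate. Beyond this point the proof is a straightforward specialization of Theorem \ref{pro-J-1} to the non-regularized indices.
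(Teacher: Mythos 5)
Your proposal is correct and follows essentially the same route as the paper's proof: block-partitioning the state into regularized and non-regularized coordinates, observing that strict MAR freezes the regularized block (identity self-weights, vanishing rows of $\mW$ and bias), and then running the geometric-series contraction argument of Theorem \ref{pro-J-1} on the non-regularized block with $\sigma_{\max}(\mA_{nreg})<1$, using the uniform boundedness of the clipped nonlinearity to absorb the coupling from the frozen coordinates. Your explicit remark that the frozen regularized coordinates cannot re-introduce unboundedness through the nonlinear term is exactly what the paper's $\|\mS\|+\|\mW_{nreg}\|$ estimate encodes.
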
 
 \begin{proof}
Assume $\mA$, $\mW$, $\tilde{\phi}(z_{t-1})$ (see proof of Theorem
\ref{pro-J-1} in Appx. \ref{pro-J-1-proof} for the definition) and $\vh_0$ have the partitioned forms
\begingroup
\allowdisplaybreaks
\begin{align}\nonumber
\renewcommand\arraystretch{1.3}
& \mA= \begin{pmatrix}
\renewcommand{\arraystretch}{1.4}
\begin{array}{c|c}
  \mI_{reg} & \mO\tran\\[1ex]
  \hline
 \mO  & \mA_{nreg}
\end{array}
\end{pmatrix},
\hspace{1cm}
 \mW= \begin{pmatrix}
 \renewcommand{\arraystretch}{1.4}
\begin{array}{c|c}
  \mO_{reg} & \mO\tran \\[1ex]
  \hline
\mS  &  \mW_{nreg}
\end{array}
\end{pmatrix},
\\[2ex]\label{AW_part}
& \vh_0= \begin{pmatrix}
\renewcommand{\arraystretch}{1.4}
\begin{array}{c}
  \vh_0^{reg} \\[1ex]
  \hline
\vh_0^{nreg}
\end{array}
\end{pmatrix},
\hspace{2.1cm}
 \tilde{\phi}(z_{t-1})= \begin{pmatrix}
 \renewcommand{\arraystretch}{1.4}
\begin{array}{c}
  \tilde{\phi}_{reg}(z_{t-1}) \\[1ex]
  \hline
\tilde{\phi}_{nreg}(z_{t-1})
\end{array}
\end{pmatrix},
\end{align}
\endgroup
~\\
where $\, \mI_{M_{reg} \times M_{reg}}=: \mI_{reg}\in \sR^{M_{reg}\times M_{reg}}, \mO_{M_{reg} \times M_{reg}}=: \mO_{reg} \in \sR^{M_{reg}\times M_{reg}}\,$, $\, \mO, \mS \in \sR^{(M-M_{reg}) \times M_{reg}} \,$, the sub-matrices $\,\mA_{\{M_{reg}+1:M, M_{reg}+1:M \}}=: \mA_{nreg} \in \sR^{(M-M_{reg}) \times (M-M_{reg})}\,$ and $\,\mW_{\{M_{reg}+1:M, M_{reg}+1:M \}}=: \mW_{nreg} \in \sR^{(M-M_{reg}) \times (M-M_{reg})} \,$ are diagonal and off-diagonal respectively. Furthermore, $\, \vh_0^{reg}, \tilde{\phi}_{reg}(z_{t-1}) \in \sR^{M_{reg}}\,$ and $\, \vh_0^{\{M_{reg}+1:M, M_{reg}+1:M \}}\, =: \, \vh_0^{nreg}$, $\tilde{\phi}_{\{M_{reg}+1:M, M_{reg}+1:M \}}(z_{t-1})\,=: \, \tilde{\phi}_{nreg}(z_{t-1}) \in \sR^{M-M_{reg}} \,$.
\\[1ex]

In this case $\,\norm{\mA} = \sigma_{\max}(\mA)= \max \{1, \sigma_{\max}(\mA_{nreg})\} \,$ and 
\begingroup
\allowdisplaybreaks
\begin{align}\nonumber
\norm{\mA^j \,  \mW \,\tilde{\phi}(\vz_{T-1-j})} & \, = \, \norm{\begin{pmatrix}
 \renewcommand{\arraystretch}{1.4}
\begin{array}{c}
 \mO \\[1ex]
  \hline
\mA_{neg}^j \, \mS \, \tilde{\phi}_{nreg}(z_{t-1}) + \mA_{neg}^j \, \mW_{neg} \, \tilde{\phi}_{nreg}(z_{t-1})
\end{array}
\end{pmatrix}}
\\[2ex]\nonumber
&\, = \, \norm{\mA_{neg}^j \, \mS \, \tilde{\phi}_{nreg}(z_{t-1}) + \mA_{neg}^j \, \mW_{neg} \, \tilde{\phi}_{nreg}(z_{t-1})},
\\[3ex]\nonumber
\norm{\mA^j \,  \mW \,\vh_0} & \, = \, \norm{\begin{pmatrix}
 \renewcommand{\arraystretch}{1.4}
\begin{array}{c}
 \mO \\[1ex]
  \hline
\mA_{neg}^j \, \mS \, \vh_0^{nreg} + \mA_{neg}^j \, \mW_{neg} \, \vh_0^{nreg}
\end{array}
\end{pmatrix}}
\\[2ex]\label{}
&\, = \, \norm{\mA_{neg}^j \, \mS \, \vh_0^{nreg} + \mA_{neg}^j \, \mW_{neg} \, \vh_0^{nreg}}.
\end{align}
\endgroup
~\\
Thus, for $\sigma_{\max}(\mA_{nreg}) < 1$
\begingroup
\allowdisplaybreaks
\begin{align}\nonumber
\norm{\vz_{T}} &\, \leq \, 
\norm{\mA}^{T-1}\, \norm{\vz_1} \, + \, \sum_{j=0}^{T-2}\norm{\mA^j \,  \mW \,\tilde{\phi}(\vz_{T-1-j})} +\sum_{j=0}^{T-2} \norm{\mA^j \, \vh_0}
\\[2ex]\nonumber
& \, \leq \, \norm{\vz_1} \, + \,
\big(\tilde{c} + \norm{h_0} \big) \big(\norm{\mS} + \norm{\mW_{neg}}\big)\, \sum_{j=0}^{T-2} \norm{\mA_{neg}}^j 
\\[2ex]\label{}
& \, = \, \frac{\big(\tilde{c} + \norm{h_0} \big) \big(\norm{\mS} + \norm{\mW_{neg}}\big)}{1-\norm{\mA_{neg}}} \, < \, \infty.
\end{align}
\endgroup
 \end{proof}
\subsubsection{Proof of Proposition \ref{pro-1}}\label{p-pro-1}
\begin{proof}
For $\mA = (a_{ij}) \in \sR^{M \times M}$, $\mW = (w_{ij})\in \sR^{M \times M}$, $\bm{\epsilon}_{t} = (\bm{\epsilon}_{1,t}, \bm{\epsilon}_{2,t}, \cdots, \bm{\epsilon}_{M,t})\tran$, $\vs_t = (s_{1,t}, s_{2,t}, \cdots, s_{M,t})\tran$ and $\mC = (c_{ij}) \in \sR^{M \times M}$, writing \eqref{eq-2} in scalar form yields
\begingroup
\allowdisplaybreaks
\begin{align}\nonumber
z_{l, t} &\, =  \, \sum_{j=1}^{M} a_{lj} z_{j, t-1} + \sum_{j=1}^{M} w_{lj} \sum_{b=1}^{B} \alpha_{b}\, d^{(b)}_{j,t-1}  [z_{j, t-1}-h_{j,b}]
%
%
\, + \, h_{l,0} \, + \,  \sum_{j=1}^{M} c_{lj}\, s_{j, t} \, + \, \bm{\epsilon}_{l,t}
\\[1ex]\nonumber
&\, = \,
 \sum_{j=1}^{M}\bigg( a_{lj} z_{j, t-1} + w_{lj}  \sum_{b=1}^{B} \alpha_{b}\, d^{(b)}_{j,t-1} [z_{j, t-1}-h_{j,b}] \bigg) 
 %
%
 \, + \, h_{l,0}\, + \,  \sum_{j=1}^{M} c_{lj}\, s_{j, t} \, + \, \bm{\epsilon}_{l,t} 
\\[1ex]\label{eq-com}
& \, =: \, \sum_{j=1}^{M} f_{l,j}(z_{j, t-1}) \, + \, h_{l,0}\, + \, \sum_{j=1}^{M} c_{lj}\, s_{j, t} \, + \, \bm{\epsilon}_{l,t}
%
%
\,=:\, F_{l}(\vz_{t-1}),  \hspace{.4cm}
 l=1, 2, \cdots, M.
\end{align}
\endgroup
Using this, we can write \eqref{eq-2} in the vector form 
\begin{align}\label{}
\vz_{t}= \big( F_{1}(\vz_{t-1}),  F_{2}(\vz_{t-1}), \cdots,  F_{M}(\vz_{t-1}) \big)\tran.
\end{align}
We show that every $F_{l}$ is continuous and so \eqref{eq-2} is a continuous PWL map. For this purpose, by \eqref{eq-com}, it suffices to prove that every $f_{l,j}(z_{j, t-1})$ is continuous. According to the definition of the intervals $I_{j,b}$, \eqref{ints}, for any $j \in \{1, 2, \cdots, M \}$ we have
\begingroup
\allowdisplaybreaks
\begin{align}\nonumber
& z_{j, t-1} \in I_{j,1} \hspace{.2cm} \Rightarrow \hspace{.2cm} d^{(b)}_{j,t-1} \, = \, 0 \hspace{.5cm} \forall \, b=1, 2, \cdots, B,
\\[1ex]\nonumber
& z_{j, t-1} \in I_{j,s}\hspace{.2cm} \Rightarrow \hspace{.2cm}
\begin{cases}
& d^{(b)}_{j,t-1} \, = \,1, \hspace{.2cm} b=1, 2, \cdots, s-1
\\[1ex]
& d^{(b)}_{j,t-1} \, = \,0, \hspace{.2cm} b=s, s+1, \cdots, B
\\[1ex]
& \hspace{.2cm} s=2, 3, \cdots, B,
\end{cases} 
%
\\[1ex]\label{}
&  z_{j, t-1} \in I_{j,B+1} \hspace{.2cm} \Rightarrow \hspace{.2cm} d^{(b)}_{j,t-1} \, = \, 1 \hspace{.5cm} \forall \, b=1, 2, \cdots, B.  
\end{align}
\endgroup
Hence, for $l,j=1, 2, \cdots, M$, each function $f_{l,j}(z_{j, t-1})$ can be stated as 
\begingroup
\allowdisplaybreaks
\begin{align}\label{}
f_{l,j}(z_{j, t-1})  
%
%
 = 
\begin{cases}
& f^{(1)}_{l,j} = a_{lj} \, z_{j, t-1}; \hspace{6.5cm} z_{j, t-1} \in I_{j,1} 
\\[2ex]
& f^{(2)}_{l,j} = ( a_{lj}+ \alpha_{1} \,w_{lj}) \, z_{j, t-1} \,- \alpha_{1} \, w_{lj} h_{j,1};
%
%
\hspace{2.9cm}
z_{j, t-1} \in I_{j,2} 
\\[1ex]
& \vdots
\\[1ex]
& f^{(B)}_{l,j} = (a_{lj} + w_{lj} \, \sum_{b=1}^{B-1} \alpha_{b})\,z_{j, t-1} \, 
%
%
- w_{lj} \, \sum_{b=1}^{B-1} \alpha_{b}\, h_{j,b}; \hspace{.6cm} z_{j, t-1} \in I_{j,B} 
\\[2ex]
& f^{(B+1)}_{l,j} = (a_{lj} + w_{lj} \, \sum_{i=1}^{B} \alpha_{b})\,z_{j, t-1} \, 
%
%
- w_{lj} \, \sum_{b=1}^{B} \alpha_{b}\, h_{j,b}; \hspace{.45cm} z_{j, t-1} \in I_{j,B+1} 
\end{cases}
\end{align}
\endgroup
Since for every $b=1, 2, \cdots, B$,
\begin{align}\label{}
\displaystyle{\lim_{z_{j, t-1} \to h_{j,b}} f^{(b)}_{l,j}(z_{j, t-1})} 
\,= \, \displaystyle{\lim_{z_{j, t-1} \to h_{j,b}} f^{(b+1)}_{l,j}(z_{j, t-1})} 
%
%
\, = \, f^{(b)}_{l,j}(h_{j,b}),   
\end{align}
each function $f_{l,j}(z_{j, t-1})$ is continuous.
Hence, \eqref{eq-2} is a continuous PWL map in $\vz$ (but has discontinuities in its Jacobian matrix across the borders). Because of these properties, all the results established for standard PLRNNs in \cite{monfared_existence_2020,monfared_transformation_2020,schmidt_identifying_2021} apply to the \ourmethodname\ as well, only that the sub-regions and discontinuity boundaries are different. 
\end{proof}
\subsubsection{Proof of proposition \ref{pro-2}}\label{p-pro-2}
\begin{proof}
Defining $\tilde{\vz}_t$ as $B$ identical copies of $\vz_t$,
\begingroup
\allowdisplaybreaks
\begin{align}\label{}
 \tilde{\vz}_t \, = \,
\begin{pmatrix}
\tilde{z}_{1,t}\\
\tilde{z}_{2,t}\\
\vdots
\\
\tilde{z}_{M,t}\\
\tilde{z}_{M+1,t}\\
\vdots
\\
\tilde{z}_{BM,t}
\end{pmatrix}
\, := \,
\begin{pmatrix}
\vz_t\\
\vz_t\\
\vdots
\\
\vz_t
\end{pmatrix}_{BM \times 1}
\end{align}
\endgroup
and likewise
\begingroup
\allowdisplaybreaks
\begin{align}\nonumber
 & \tilde{\vh} \, = \, 
\begin{pmatrix}
\tilde{h}_{1}\\
\tilde{h}_{2}\\
\vdots
\\
\tilde{h}_{M}\\
\tilde{h}_{M+1}\\
\vdots
\\
\tilde{h}_{BM}
\end{pmatrix}
\, = \,
\begin{pmatrix}
\vh_1\\
\vh_2\\
\vdots
\\
\vh_B
\end{pmatrix}_{BM \times 1},
%
\hspace{1cm} \tilde{\vh}_0
 \, = \, 
\begin{pmatrix}
\tilde{h}_{0,1}\\
\tilde{h}_{0,2}\\
\vdots
\\
\tilde{h}_{0,M}\\
\tilde{h}_{0,M+1}\\
\vdots
\\
\tilde{h}_{0,BM}
\end{pmatrix}
\, = \, \begin{pmatrix}
\vh_0\\
\vh_0\\
\vdots
\\
\vh_0
\end{pmatrix}_{BM \times 1}
\\[1ex]\nonumber
& \tilde{\mA}_{BM \times BM} = diag \big(\underbrace{\mA_{M \times M}, \mA_{M \times M}, \cdots, \mA_{M \times M}}_\text{B \ \text{times}} \big),
%
\\[1ex]\nonumber
& \tilde{\mW}_{BM \times BM}\, = \,
%
\begin{pmatrix}
\renewcommand{\arraystretch}{2} 
\begin{array}{c|c|c|c}
\alpha_1 \mW_{M \times M} & \alpha_2 \mW_{M \times M} & \hdots & \alpha_B \mW_{M \times M} \\[1ex]
 \hline
 \alpha_1 \mW_{M \times M} & \alpha_2 \mW_{M \times M} & \hdots & \alpha_B \mW_{M \times M} \\
 \hline
\vdots & \vdots & \ddots &\vdots\\
\hline
\alpha_1 \mW_{M \times M} & \alpha_2 \mW_{M \times M} & \hdots & \alpha_B \mW_{M \times M} 
\end{array} 
\end{pmatrix},
\\[1ex]\label{}
& \tilde{\mC \vs_t} \, = \,
\begin{pmatrix}
\tilde{cs}_{1,t}\\
\tilde{cs}_{2,t}\\
\vdots
\\
\tilde{cs}_{M,t}\\
\tilde{cs}_{M+1,t}\\
\vdots
\\
\tilde{cs}_{BM,t}
\end{pmatrix}
\, = \,
\begin{pmatrix}
\mC \vs_t\\
\mC \vs_t\\
\vdots
\\
\mC \vs_t
\end{pmatrix}_{BM \times 1},
\hspace{1cm}\tilde{\bm{\epsilon}}_t \, = \,
\begin{pmatrix}
\tilde{\epsilon}_{1,t}\\
\tilde{\epsilon}_{2,t}\\
\vdots
\\
\tilde{\epsilon}_{M,t}\\
\tilde{\epsilon}_{M+1,t}\\
\vdots
\\
\tilde{\epsilon}_{BM,t}
\end{pmatrix}
\, = \,
\begin{pmatrix}
\bm{\epsilon}_t\\
\bm{\epsilon}_t\\
\vdots
\\
\bm{\epsilon}_t
\end{pmatrix}_{BM \times 1}
\end{align}
\endgroup
~\\
one can rewrite the \ourmethodname from \eqref{eq-1} as 
\begin{align}\label{eq-re-MxK}
 \tilde{\vz}_t \, = \,  \tilde{\mA} \tilde{\vz}_{t-1} + \tilde{\mW}  \, \max (0, \tilde{\vz}_{t-1}- \tilde{\vh})  \, 
+\tilde{\vh}_0 \, + \, \tilde{\mC \vs_t} + \tilde{\bm{\epsilon}}_t.
\end{align}
~\\
Now performing the substitution
\begin{align}\label{}
 \forall \, t \hspace{1cm}  \hat{\vz_t} \leftarrow \tilde{\vz}_{t}- \tilde{\vh}, 
\end{align}
\eqref{eq-re-MxK} can be rewritten as the $M \times B$-dimensional \say{conventional} PLRNN \eqref{eq-MxK-st} with
\begin{align}
\hat{\vh}_0\, = \, \big(\tilde{\mA}-\mI\big)\tilde{\vh} \, + \, \tilde{\vh}_0.  
\end{align}
%
\end{proof}
\subsubsection{Proof of Theorem
\ref{pro-J-1}}\label{pro-J-1-proof}
\begin{proof}
It can easily be shown that for every $i \in \{1,2,\cdots, M \}$
\begin{align}
\alpha_b \big[ \max(\max(0, z_{i,t-1}-h_{i,b})-\max(0, z_{i,t-1})\big] \in %
\begin{cases} 
      [-\alpha_bh_{i_b},0] \text{ if } \text{sgn}(\alpha_b)=\text{sgn}(h_{i,b}) \\[1ex]
      [0,\alpha_bh_{i,b}] \text{ else}
\end{cases}.
\end{align}
%
By defining
\begin{align}
\sum_{b=1}^B\alpha_b \big[ \max(0,\vz_{t-1}-\vh_b)-\max(0,\vz_{t-1}) \big] \, := \,\tilde{\phi}(z_{t-1})\, = \, \Big(\tilde{\phi}_1(z_{t-1}), \cdots, \tilde{\phi}_M(z_{t-1}) \Big)\tran,   
\end{align}
and 
\begin{align}
c^{\text{up}}_{i,b}= \begin{cases} 
      0 \text{ if } \text{sgn}(\alpha_b)=\text{sgn}(h_{i,b})\\[1ex]
       \alpha_bh_{i,b} \text{ else}
   \end{cases}, \hspace{.5cm}
   c^{\text{low}}_{i,b}= \begin{cases} 
     - \alpha_bh_{i,b} \text{ if } \text{sgn}(\alpha_b)=\text{sgn}(h_{i,b})\\[1ex]
      0 \text{ else}
   \end{cases},
\end{align}
we can conclude that
$$c_i^{low}\leq \tilde{\phi}_i(z_{t-1}) \leq c_i^{up},$$
\\
where $c_i^{low/up}\, = \, \sum_{b=1}^{B} c_{i,b}^{low/up}$. For $\, c_i\,=\, \max\{|c_i^{low}|, |c_i^{up}| \}\,$ we have
\begin{align*}
\tilde{\phi}_i(z_{t-1})^2 \, \leq \, c_i^2,
\end{align*}
and so letting $\,c\,=\, \max\{c_1, c_2, \cdots, c_M\}\,$ yields 
\begin{align}\label{eq-tilde-xi}
\norm{\tilde{\phi}(z_{t-1})}\, = \,\sqrt{\sum_{i=1}^{M} \big(\tilde{\phi}_i(z_{t-1})\big)^2} \, \leq \, \sqrt{\sum_{i=1}^{M} c^2}  \, := \, \tilde{c}. 
\end{align}
~\\[2ex]
Since 
\begin{align}
\vz_{t}\, = \, \mA\, \vz_{t-1} \, + \, \mW \, \tilde{\phi}(\vz_{t-1}) \, +\, \vh_0, 
\end{align}
 for $T \in \mathbb{N}$ and $t= 2, \cdots,T$, computing $z_2,z_3, \cdots,z_T $ recursively leads to
\begin{align}\nonumber
&\vz_2\, = \, \mA\, \vz_1 \, + \, \mW  \, \tilde{\phi}(\vz_1) \, + \,\vh_0
\\[1ex]\nonumber
&\vz_3\, = \, \mA^2\, \vz_1 \, + \, \mA\,  \mW  \,\tilde{\phi}(\vz_1) + \mW \,\tilde{\phi}(\vz_2)  \, + \,\big[\mA+\mI \big] \vh_0
\\\nonumber
\vdots
\\\label{}
& \vz_{T}\, = \, \mA^{T-1}\, \vz_1 \, + \, \sum_{j=0}^{T-2}\mA^j \,  \mW \,\tilde{\phi}(\vz_{T-1-j}) +\sum_{j=0}^{T-2} \mA^j \, \vh_0. 
\end{align}
Therefore, by \eqref{eq-tilde-xi}, for every $T \geq 2$, we have 
\begin{align}
\norm{\vz_{T}}\, \leq \, \norm{\mA}^{T-1}\, \norm{\vz_1} \, + \, \tilde{c}\, \norm{\mW} \sum_{j=0}^{T-2} \norm{\mA}^j  +\sum_{j=0}^{T-2} \norm{\mA}^j \, \norm{\vh_0}. 
\end{align}
If $\sigma_{\max}(\mA) < 1$, then $\displaystyle{\lim_{T \to \infty}\norm{\mA}^{T-1}} \, = \, 0$ and 
\begin{align}\label{}
 \displaystyle{\lim_{T \to \infty}\norm{ \vz_{T}}} \, \leq \,  \, \tilde{c}\, \norm{\mW} \sum_{j=0}^{\infty} \norm{\mA}^j  +\sum_{j=0}^{\infty} \norm{\mA}^j \, \norm{\vh_0} \, = \, \frac{\tilde{c}\, \norm{\mW}+ \norm{\vh_0}}{1-\norm{\mA}} \, < \, \infty.
\end{align}
\end{proof}

\end{document}


\twocolumn[
\icmltitle{Supplementary Material: Interpretable high-capacity RNN for identifying unknown nonlinear dynamical systems}

\icmlsetsymbol{equal}{*}

\begin{icmlauthorlist}
\icmlauthor{Leonard Bereska}{zi,hd}
\icmlauthor{Po-Chen Kuo}{ntu}
\icmlauthor{Manuel Brenner}{zi}
\icmlauthor{Zahra Monfared}{zi}
\icmlauthor{Daniel Durstewitz}{zi,hd}
\end{icmlauthorlist}

\icmlaffiliation{zi}{Central Institute of Mental Health, Germany}
\icmlaffiliation{hd}{Heidelberg University, Heidelberg, Germany}
\icmlaffiliation{ntu}{National Taiwan University, Taiwan}

\icmlcorrespondingauthor{Manuel Brenner}{manuel.brenner@zi.mannheim.de}
\icmlcorrespondingauthor{Daniel Durstewitz}{daniel.durstewitz@zi.mannheim.de}

\icmlkeywords{recurrent neural networks, dynamical systems, variational inference, dendritic computation, state space model, generative stochastic networks}

\vskip 0.3in
]



\printAffiliationsAndNotice{}  

\input{figures/tables/benchmarks}






%
The model can be rewritten as a continuous PWL map